\title{Beyond IID: data-driven decision making\\ in heterogeneous environments}
\author[1]{Omar Besbes}
\author[2]{Will Ma}
\author[3]{Omar Mouchtaki}
\affil[1]{Columbia University, Graduate School of Business, \texttt{ob2105@gsb.columbia.edu}}
\affil[2]{Columbia University, Graduate School of Business, \texttt{wm2428@gsb.columbia.edu}}
\affil[3]{NYU Stern School of Business, \texttt{om2166@stern.nyu.edu}}
\date{first version: May 26, 2022; last revised: December 31, 2024}
\newtheorem{lemma}{Lemma}
\newtheorem{proposition}{Proposition}
\newtheorem{corollary}{Corollary}
\newtheorem{theorem}{Theorem}
\newtheorem{definition}{Definition}
\theoremstyle{definition}
\newtheorem{remark}{Remark}
\newtheorem{example}{Example}
\newcommand {\beq}{\begin{equation}}
\newcommand {\eeq}{\end{equation}}
\newcommand {\beqn}{\begin{equation*}}
\newcommand {\eeqn}{\end{equation*}}
\newcommand {\bear}{\begin{eqnarray}}
\newcommand {\eear}{\end{eqnarray}}
\newcommand {\bearn}{\begin{eqnarray*}}
\newcommand {\eearn}{\end{eqnarray*}}
\DeclareMathOperator*{\argmax}{arg\,max}
\DeclareMathOperator*{\argmin}{arg\,min}
\newcommand{\actVr}{x}
\newcommand{\actSp}{{\mathcal{X}} }
\newcommand{\envVr}{\xi}
\newcommand{\envSp}{\Xi}
\newcommand{\envSamples}{\bm{\envVr^\numS}}
\newcommand{\mesSp}{\mathcal{P}}
\newcommand{\mesOut}{\mu}
\newcommand{\mesIn}[1]{\nu_{#1}}
\newcommand{\oracle}[1]{\mathtt{ORACLE} \left( #1 \right)}
\newcommand{\Lip}[1]{\mathrm{Lip} \left( #1 \right)}
\newcommand{\V}[1]{\mathrm{V}\left( #1 \right)}
\newcommand{\diam}[1]{\mbox{diam} \left( #1 \right)}
\newcommand{\dev}[1]{\pi^{\mathrm{SAA}(#1)}}
\newcommand{\g}[2][]{ \mathcal{G} \left(#1,#2\right) }
\newcommand{\agSp}{\Pi_{\text{agn}}}
\newcommand{\dis}[1]{d_{#1}}
\newcommand{\gen}[1]{\mathfrak{M}_{#1}}
\DeclareMathOperator{\Span}{span}
\newcommand{\UB}{\mathrm{Unif}}
\newcommand{\opt}{\mathrm{opt}}
\newcommand{\numS}{n}
\newcommand{\SAA}{ \pi^{\mathrm{SAA}}}
\begin{document}

\maketitle

\vspace{-.9cm}

\begin{abstract}
How should one leverage historical data when past observations are not perfectly indicative of the future, e.g., due to the presence of unobserved confounders which one cannot ``correct'' for?
Motivated by this question, we study  a data-driven decision-making framework in which historical samples are generated from unknown and different distributions assumed to lie in a heterogeneity ball with known radius and centered around the (also) unknown future (out-of-sample) distribution on which the performance of a decision will be evaluated.  
This work aims at  analyzing the performance of central data-driven policies   but also near-optimal ones in these \textit{heterogeneous environments} and understanding key drivers of performance. 
We establish a first result which allows to upper bound the asymptotic worst-case  regret of a broad class of policies.
Leveraging this result, for any integral probability metric, we provide a general analysis of the performance achieved by Sample Average Approximation (SAA) as a function of the radius of the heterogeneity ball. This analysis is centered around the approximation parameter, a notion of complexity we introduce to capture how the interplay between the heterogeneity and the problem structure impacts the performance of SAA. In turn, we illustrate through several widely-studied problems -- e.g., newsvendor, pricing-- how this methodology can be applied and find that the performance of SAA varies considerably depending on the combinations of problem classes and heterogeneity. 
The failure of SAA for certain instances motivates the design of alternative policies to achieve rate-optimality. We derive problem-dependent policies achieving strong guarantees for the illustrative problems described above and provide initial results towards a principled approach for the design and analysis of general rate-optimal algorithms.

\textbf{Keywords:} data-driven algorithms; distribution shift;  distributionally robust optimization; minimax regret; sample average approximation; pricing; newsvendor; ski-rental.
\end{abstract}

\doparttoc 
\faketableofcontents 

\section{Introduction}
\label{sec:intro}
In optimization under uncertainty, the desirability of a decision (e.g., inventory) depends on an unknown future outcome (e.g., demand).
Typically, past data is collected to be indicative of the future, and hence inform our decision.
However, ideal data-driven decision-making requires postulating beliefs about the reliability of the past data, and importantly, whether the future may \textit{deviate} from it.
In practice, past data may depend on contexts, some of which can be controlled for, and some of which cannot (unobserved confounders). This may introduce data heterogeneity that is not ``correctable.''

In this paper, we consider  a framework for modeling how the future may deviate from past data.
At a high level, it accomplishes three goals:
$i.)$ capture different forms of future deviation, including no deviation, under the same umbrella, and develop an approach for performance evaluation for general policies.
$ii.)$  understand the drivers of performance of a central policy, which makes the decision that optimizes the \textit{average} objective value over past data, and
$iii.)$ if the central policy performs poorly, suggest modifications that are robust to different forms of anticipated deviation, and illustrate the power of doing so through classical problems.

This framework is general, capturing different problems, and our analysis leads to insightful  
problem-specific conclusions. The only assumption is that the policy defined in $ii.)$ above can be computed, which requires perfect counterfactual evaluation of any decision on all data points, as opposed to settings where past data are affected by previous actions \citep{besbes2014stochastic,cheung2022hedging,karnin2016multi,luo2018efficient,lykouris2018stochastic,lykouris2021corruption}.
The policy described in $ii.)$ is widely studied across different fields, known as\footnote{It is also known under different names, e.g., Empirical Optimization or Empirical Risk Minimization.} \textit{Sample Average Approximation (SAA)} \citep{kleywegt2002sample,kim2015guide,lam2021impossibility}, although we emphasize that we also derive \textit{new policies beyond SAA} when it falters.

\subsection{Framework description}

\noindent \textbf{Framework (\Cref{sec:hetEnv}).} We study a framework which models the future outcome as being drawn from an unknown distribution, and past data as being drawn independently from (also unknown) ``nearby'' distributions. 
In this way, past data is indicative of the future, and we emphasize that the samples of data can be drawn from different nearby distributions.
In accord, we call this setting a \textit{heterogeneous environment}.
Using varying definitions of ``nearby distribution'' (e.g.\ based on Kolmogorov and Wasserstein distances; see \Cref{sec:IPM} for exact definition), we analyze different forms of deviations possible between the past and future, and let a radius $\epsilon$ bound the allowed deviation.
When $\epsilon=0$, our framework captures the classical independent and identically distributed (i.i.d.) setting, in which past data are drawn from the same distribution as the future outcome \citep{vapnik1974theory,kleywegt2002sample,mohajerin2018data,bertsimas2018robust}.
The framework we study presents a parametrization to interpolate between the i.i.d. setting and the adversarial one and shares conceptually similar goals to previous approaches considering algorithm analysis beyond worst-case \citep{rakhlin2011online,bilodeau2020relaxing,roughgarden2021beyond,haghtalab2022smoothed,block2022smoothed,haghtalab2022oracle}.

\noindent \textbf{Performance measure (\Cref{sec:performance}).}
We consider the performance of different data-driven policies, which map past data into a decision for a given problem; SAA defines one such feasible data-driven policy for any problem.
Regret is measured as the difference in objective between the policy's decision and the optimal decision knowing the future distribution, taking an expectation over the draws of past data (which affect the policy's decision), any intrinsic randomness in the policy, and the outcome realization (which affects the objective of both the policy's decision and the optimal decision).
We then take a worst case regret over all possible distributions that could have been chosen for the future outcome and the data points, to evaluate the performance of a fixed data-driven policy.  This performance depends on the problem, the number of data points $n$, the definition of ``nearby distribution,'' and the radius $\epsilon$.
In this paper we adopt an asymptotic view point, in which the number of data point $\numS$ tends to $\infty$ and we aim at understanding the worst-case regret of data-driven policies in this regime.

\subsection{Contributions}

\subsubsection{Performance evaluation for general policies (\Cref{sec:reduction})}
Our first contribution lies in developing a general result to evaluate the asymptotic worst-case regret of a large class of data-driven policies.
In particular, we establish in \Cref{thm:reduction_stat} the following: asymptotically, as the number of samples goes to $\infty$, the worst-case regret of a policy, whose decision only depends on the empirical distribution and not the number of samples, is upper-bounded by that of an alternative problem with two major simplifications: $i.)$ the worst-case is now taken over only a \emph{single} distribution for historical environments as opposed to a sequence of heterogeneous distributions and, $ii.)$ the decision-maker (DM) has access to the exact historical distribution.

The upper bound we derive can be seen as a uniform distributionally robust optimization problem \textit{under a regret performance metric}, and offers a systematic way to derive upper bounds for policies of interest. We leverage it to analyze both the SAA policy  
 but also to derive guarantees on alternative policies when SAA falters. 
Moreover, we show that the assumption that the policy does not depend on the number of samples can be made without loss of optimality, because for any problem, there exists such a policy achieving the optimal asymptotic worst-case regret (see \Cref{prop:lb_agnostic}).

\subsubsection{Performance drivers for SAA and illustration for central problems (\Cref{sec:SAA-analysis})}

\noindent \textbf{Analysis of SAA for integral probability metrics (IPMs).} 
Our second contribution is the development of a general framework to bound the asymptotic regret of SAA under the assumption that the distance is an integral probability metric (IPM), as defined in \Cref{sec:IPM}. Notably, this broad class of metrics—which includes the Kolmogorov and Wasserstein distances—can be linked to specific generating classes of functions. We leverage this connection to introduce the ``approximation parameter,'' a quantity that captures the complexity required to uniformly approximate the objective function using functions from the generating class of the IPM. This parameter not only provides a bound on the regret of SAA but also has a natural geometric interpretation. By utilizing standard topological results, we show that this approach enables meaningful regret bounds. Specifically, for the Kolmogorov and Wasserstein distances, our framework relates the worst-case regret of SAA to practical quantities, such as the Lipschitz constant or the total variation of the objective function. Our analysis extends the results derived in the context of ``the method of probability metrics''  \citep{rachev2002quantitative}. We discuss the relation of our work to these results in \Cref{rem:comp_litt}.

\noindent \textbf{Illustration of our method for central problems.}
To illustrate the versatility of our methodology we study the performance of SAA for two central problems (newsvendor and pricing) and two distances defining the heterogeneity (Kolmogorov and Wasserstein). These problems have  received significant attention in the literature across Operations Research, Economics, and Computer Science in the known-prior, data-driven, distributionally robust, and advice-augmented settings. 

We show in \Cref{sec:SAA_bound} that the upper bounds based on the approximation parameter directly imply bounds on the worst-case regret of SAA for Newsvendor under both heterogeneity types and for pricing under the Kolmogorov heterogeneity. Furthermore, we complement these results with lower bounds on the best achievable performance and show that for these three settings, SAA achieves the best possible dependence in $\epsilon$ and its asymptotic worst-case regret scales linearly with $\epsilon$. For pricing under Wasserstein distance, the picture is starkly different: the approximation parameter becomes infinite,  and hence does not allow us to derive any meaningful upper bound on the worst-case regret of SAA. As a matter of fact, we show that SAA performs extremely poorly and the asymptotic regret does not even vanish when $\epsilon$ goes to $0$.  Hence, the performance of SAA deteriorates considerably by slightly deviating from the i.i.d. regime for this class of problems. 

These two problems illustrate that the approximation parameter is a powerful tool to derive tight bounds on the worst-case regret of SAA. Furthermore, while we do not derive a formal lower bound on the worst-case regret of SAA which depends on the approximation parameter, the pricing problem with Wasserstein heterogeneity suggests that a large approximation parameter usually corresponds to a poor performance of SAA. 
This finding is consistent with a third problem we investigate in \Cref{sec:ski} , the ski-rental problem and for which SAA falters similarly when the approximation parameter is large.

We also illustrate in \Cref{sec:apx_auction} how our framework can be generalized to tackle a multi-dimensional problem. In particular, we investigate the Bayesian mechanism design problem and show that the result derived by \cite{brustle2020multi} for SAA under the Kolmogorov distance can be retrieved using our approximation parameter framework.

\subsubsection{Design and analysis of policies beyond SAA (\Cref{sec:beyond_SAA})}\label{sec:intro_beyond_SAA}

The failure of SAA for the Wasserstein pricing problem naturally raises the question of designing and quantifying the performance of alternative policies to improve upon SAA and achieve near-optimality.
Our third main contribution consists in developing and analyzing policies which operate efficiently in heterogeneous environments. 

\noindent \textbf{Analyzing policies beyond SAA to achieve rate-optimality.}
In \Cref{sec:rate_opt_pricing} we complete the picture for the pricing problem under Wasserstein heterogeneity. We propose a policy which appropriately deflates the price selected by SAA, and  show that this policy achieves a worst-case regret which has a $\sqrt{\epsilon}$ dependence in the radius of heterogeneity. We also show that this performance is rate-optimal.
To our understanding, analyzing these non-SAA policies (which are required for good performance in pricing) deviates from standard analyses used in learning theory and critically requires the reduction we derive in \Cref{thm:reduction_stat}, as we elaborate on in \Cref{sec:rate_opt_pricing}. To derive our result, we leverage the structure of the objective function in pricing: while it is not continuous in general, it is ensured to be one-sided Lipschitz-continuous (when deflating the price). We combine this observation with a critical relation between the Wasserstein distance (in 1 dimension) of two probability measures and their associated cumulative distribution functions to obtain the desired result. We believe that this problem-specific analysis may be of independent interest.

We also study the ski-rental problem, a third central problem for which one needs to go beyond SAA to improve performance even for the Kolmogorov distance (cf. \Cref{sec:ski}). In this setting, we consider the policy proposed by \cite{diakonikolas2021learning} and derive a lower bound on the best achievable performance which matches their upper bound up to logarithmic factors. 
For the Wasserstein distance, we show that similarly to pricing, SAA incurs a worst-case regret which does not shrink to $0$ as $\epsilon$ goes to $0$; but a policy which inflates the SAA decision appropriately achieves rate-optimality. The proof techniques and results derived for ski-rental leverage the structure of the problem and could be of independent interest.

\noindent \textbf{Towards a general design principle for data-driven policies.}
The examples of pricing and ski-rental illustrate how critical \Cref{thm:reduction_stat} is to derive guarantees for a wide range of data-driven policies but leaves open the choice of the policy that should be analyzed. In \Cref{sec:UDRO} we take initial steps for the design of general policies with strong asymptotic worst-case regret guarantees. We first introduce a policy naturally suggested by \Cref{thm:reduction_stat}, which minimizes the DRO regret objective and show that this policy is essentially minimax optimal. We also derive a general result regarding the performance of the classical DRO policy in \cite{rahimian2019distributionally} (formally, a variant which optimizes the worst-case objective over randomized actions) and show in \Cref{{prop:RDRO_to_DRO}} that when the optimal value of the objective is not too sensitive to the structure of the heterogeneity, the DRO policy achieves a rate-optimal regret guarantee. We further show that this condition holds for the pricing problem under Wasserstein heterogeneity. 
We note that the two general policies which we investigate could be  challenging to compute, and we leave as open the question regarding efficient implementation of these two policies.

\subsubsection{Impact of heterogeneity on performance across some classical problems}\label{sec:intro_insights}

The results above allow to illustrate the interplay of heterogeneity and problem class.  In \Cref{tab:results}, we present a high level summary of the problem-specific results we derive. We assume that the support of the unknown environment is in $[0,M]$ for some positive real number $M$ which can be interpreted as parametrizing the ``extent'' of uncertainty one faces (e.g., maximal values of demand in newsvendor).
We track the dependence on both the heterogeneity radius $\epsilon$, as well as $M$. These results are for the asymptotic setting in which the number of samples grows large.
\begin{table}[h!]
\centering

\begin{tabular}{c ccc } 
~  & ~& Kolmogorov & Wasserstein \\ 
 \hline
 \hline
\multirow{2}{*}{Newsvendor} &SAA & $\Theta ( M \epsilon )$ & $\Theta ( \epsilon )$  \\  
&best policy & $\Theta ( M \epsilon )$ & $\Theta ( \epsilon )$\\
 \hline
\multirow{2}{*}{Pricing} &SAA & $\Theta ( M \epsilon )$ & $\Omega ( M )$  \\  
&best policy & $ \Theta ( M \epsilon )$ &  $\Theta ( \sqrt{M \epsilon} )$\\
\hline
 \multirow{2}{*}{Ski-rental} &SAA & $\Theta ( M \epsilon )$ & $\Theta ( 1 )$  \\  
  &best policy & $\tilde{\Theta} ( \epsilon )^*$ &  $\Theta ( \sqrt{\epsilon} )$
\end{tabular}
 \caption{\textbf{SAA and achievable performance for different problem classes and heterogeneity balls.} *The upper bound was proved in \cite{diakonikolas2021learning} and we complement it by a matching lower bound up to logarithmic factors} 
 \label{tab:results}
\end{table}

Our illustrative examples unveil a critical insight regarding the performance of data-driven policies in heterogeneous environments. It highlights the important fact that the quality of decisions made with SAA in heterogeneous environments is highly dependent on the combination of problem class and heterogeneity. 
Furthermore, the dependence of the best achievable performance in the heterogeneity radius $\epsilon$ varies considerably across problem classes and notions of heterogeneity.
Therefore, our work implies that it is critical for a decision-maker to understand the \textit{nature of the heterogeneity} faced, and the impact it may have on the objective of interest when making decisions.

\subsection{Further Related Work}\label{sec:relatedWork}
 
\noindent \textbf{Connection to offline data corruption models.}  
Different notions of offline data corruptions and robustness have been previously studied in statistics. \cite{huber1992robust} introduces the so-called $\epsilon$-contamination model in which an $\epsilon$ fraction of the observed samples is corrupted. They study the problem of location estimation for $\epsilon$-contaminated gaussian distributions and for approximations of gaussian distributions that are $\epsilon$ away in Kolmogorov distance. They derive in both cases minimax estimators among the class of $M$-estimators. \cite{hampel1971general} considers distributions that are $\epsilon$ away for the Prokhorov distance and establish that the classical empirical mean is a non-robust estimator. 
These models have been extensively developed in learning theory \citep{servedio2003smooth,haussler1992decision,kearns1993learning,kearns1994toward,servedio2003smooth,klivans2009learning,diakonikolas2019robust,blanc2021power,DuchiNamkoong2021,bennouna2022holistic} and, more recently in revenue management  \citep{cai2017learning, brustle2020multi, XiWang2022pricingoutliers, guo2021robust}.
Most approaches consider $\epsilon$-contamination settings \citep{huber1992robust} in which the adversary can shift a small proportion of data arbitrarily far from the true distribution.  
They consider either oblivious adversaries who fix a \textit{single} corrupted distribution from which samples are generated or adaptive ones who observe samples before corrupting them.  \citep{blanc2021power} establishes that adaptive adversaries yield equivalent performance to oblivious ones in many settings.
We note that our framework is general enough to derive guarantees on the regret under the $\epsilon$-contamination model and the strong contamination model. We discuss in more details connections between our work and previous contamination models in \Cref{sec:relation_to_models}.

Closest to us are \cite{brustle2020multi, guo2021robust} who study optimal auctions. They assume the distribution is common across all past observations; the former focuses on regret  for various distances and the the latter  on a ratio metric when  data is generated from MHR and regular distributions close in Kolmogorov distance. The framework we develop is anchored around a  setting that  allows for \textit{heterogeneous} ``nearby'' distributions for each past observation, and is general in that it allows us to unify a variety of problems/metrics and highlight how these affect the levels of achievable performance. 
Related to auction design, \cite{dutting2019posted} study the robustness of well-known posted pricing and prophet-inequality-based mechanisms when computed on distributions that differ from the true distributions for various distances.  

\noindent \textbf{Connection to online and batch learning in non-stationary environments.}
Our work also relate broadly to papers studying learnability in changing environment. \cite{hanneke2021learning,blanchard2022universal} adopt a universal learning approach and analyze algorithms which are able to achieve vanishing long-term average loss for general data-generation processes. \cite{mansour2009domain,ben2010theory,mohri2012new} derive generalization guarantees in a setting where the training set distribution may differ from the test set distribution.

\noindent \textbf{Connection to robust optimization and stability in stochastic programming.}
Our work is also conceptually related to robust optimization in which, one design decisions for optimization problems that are robust to some variations in the parameters of the problem (see the survey by \cite{bertsimas2011theory}). When the variation is on the underlying probability measure, people refer to this approach as distributionally robust optimization (see references in \cite{rahimian2019distributionally}). \cite{glasserman2014robust} quantifies the variation in risk that may result from distribution misspecification and discusses for several applications in portfolio optimization the shape and impact of the worst-case distribution shift within a Kullback-Leibler ball. In our work, we quantify how the problem structure \textit{and the notion of heterogeneity} impact the performance.
Another related line of papers use the \textit{method of probability metrics} to understand when small perturbations of the distribution with respect to a given metric result in a small perturbation of the value of the stochastic problem and of the set of optimal solutions \citep{Romisch1991stability,schultz1996rates,rachev2002quantitative,dupavcova2003scenario,xu2021quantitative,pichler2022quantitative}. These papers fix the problem of interest (e.g. newsvendor, two-stage problems with recourse, etc...) and construct a ``canonical IPM'' under which the stochastic optimization problem is stable. Most of these works focus on the Wasserstein distance or the related Fortet-Mourier metric.
Our paper differs from this literature along two critical dimensions. First, when analyzing SAA with IPMs, we take the IPM as given and do not construct one that ensures stability of the problem (see for instance pricing with Wasserstein distance). In that case, we derive a broader methodology which enables to characterize the interplay between the problem structure and the heterogeneity structure (see detailed discussion in \Cref{rem:comp_litt}). Second, we develop and analyze the performance of alternative data-driven policies in settings where SAA fails.

\noindent \textbf{Prior works on our illustrative examples.} Our paper also relates to previous work specifically addressing each of the prototypical problems we consider. The newsvendor problem is a foundational problem in Operations Research,
with early focus on the setting where the demand distribution is known \citep[see][]{morse1946methods},
and since then studied in various distributionally robust settings \citep{scarf1958min,gallego1993distribution,perakis2008regret}  but also in data-driven ones \citep{levi2015data,cheung2019sampling,lin2022data,besbes2023big}. The pricing problem is a central problem in Economics, with early focus on the  setting with known prior \citep[see][]{myerson1981optimal,riley1983optimal},
and more recently in the distributionally robust \citep{eren2010monopoly,bergemann2011robust} and data-driven settings \citep{fu2015randomization,huang2018making,babaioff2018two,daskalakis2020more,allouah2022pricing}. Ski-rental on its end is a foundational problem in Computer Science, critical to the development of competitive analysis \citep[see][]{borodin2005online} which considers the \textit{adversarial} setting where nothing is known,
and recently also important to the development of \textit{advice-augmented algorithms} which receive a prediction on the length of the ski season \citep{purohit2018improving,diakonikolas2021learning}.

\subsection{Preliminaries}

\noindent \textbf{Notation.}  
We denote by $\mathbb{N}^*$ the set of positive integers and by $\mathbb{R}_+$ the set of non-negative real numbers.
 For any set $A$, $\Delta \left( A \right)$ denotes the set of probability measures on $A$. 
 For any deterministic sequences $(a_k)_{k \in \mathbb{N}}$ and $(b_k)_{k \in \mathbb{N}}$ both indexed by a common index $k$ that goes to $\infty$, we say that $a_k = o(b_k)$ if $a_k/b_k \to 0$, $a_k = \mathcal{O} \left( b_k \right)$ if there exists a finite $M >0$ such that $|a_k| \leq M |b_k|$ for $k$ large enough, $a_k = \omega(b_k)$ if $|a_k/b_k| \to \infty$, $a_k = \Omega(b_k)$ if there exist a finite $M > 0$, such that $|a_k| > M |b_k|$ for $k$ large enough, $a_k = \Theta \left( b_k \right)$ if $a_k = \mathcal{O} \left( b_k \right)$ and $a_k = \Omega \left( b_k \right)$, and $a_k \sim b_k$ if $a_k/b_k \to 1$.  For any metric spaces $(\mathcal{M}_1, d_1)$, $(\mathcal{M}_2,d_2)$, and any mapping $f$ from $\mathcal{M}_1$ to $\mathcal{M}_2$, we denote by $\Lip{f}$, the Lipschitz constant of $f$, i.e. the largest $L \in [0, \infty) \cup \{ \infty \}$  such that, for every $x,y \in \mathcal{M}_1$, we have $d_2(f(x),f(y)) \leq L \cdot d_1(x,y)$. For a function $f$ defined on a closed interval $[a,b] \subset \mathbb{R}$, we define the total variation of $f$  as $\V{f} := \sup \{ \sum_{i=1}^\numS |f(z_i) - f(z_{i+1})| \,  \text{s.t.} \, a = z_1 < z_2 < \ldots < z_{\numS+1} = b \}$.
 For any metric space, $(\mathcal{M}, d)$, we denote by $\diam{\mathcal{M}}:= \sup_{x,y \in \mathcal{M}} d(x,y)$ the diameter of $\mathcal{M}$. For a set $\mathcal{S}$, we denote by $\Span \left( \mathcal{S} \right)$ the set generated by finite linear combinations of elements from $\mathcal{S}$, i.e. $\Span \left( \mathcal{S} \right) = \{ \sum_{i=1}^k \lambda_i \cdot v_i \vert \, k \in \mathbb{N}, \, v_i \in \mathcal{S}, \, \lambda_i \in \mathbb{R}  \}$. For any real-valued function defined on a set $\mathcal{S}$, we denote by $\|f\|_{\infty} = \sup_{\actVr \in S} |f(\actVr)|$ the uniform norm. 
 
 All proofs of statements are deferred to the appendix.

\section{Problem formulation}\label{sec:formulation}

We consider a general stochastic optimization problem in which a DM wants to optimize an objective in a stochastic environment. Formally, we denote by $\actSp$ the  decision space and by $\envSp$ the  space of environment realizations. 
The objective of the DM is represented by a function $g : \actSp \times \envSp \to \mathbb{R}$ which maps each pair of decision $\actVr \in \actSp$ and environment realization $\envVr \in \envSp$ to an objective value. 

The DM faces a subset of possible probability measures $\mesSp \subset \Delta \left( \envSp \right)$, where $\Delta \left( \envSp \right)$ denotes the space of probability measures supported on $\envSp$.\footnote{Rigorously, one should specify a $\sigma$-algebra $\mathcal{A}$ on $\envSp$. We do not specify explicitly these when not necessary to define the objects.} For every probability measure $\mesOut \in \mesSp$,  the goal of the DM is to optimize the objective defined as
\begin{equation}
\g[\cdot]{\mesOut} : \begin{cases}
\actSp \to \mathbb{R} \\
\actVr \mapsto \mathbb{E}_{ \envVr \sim \mesOut } \left[ g(\actVr, \envVr) \right].
\end{cases}
\end{equation}
We extend the definition of  $\g[\cdot]{\mesOut}$ to allow for randomized decisions $\rho \in \Delta \left( \actSp \right)$,  in which case we define $\g[\rho]{\mesOut} = \mathbb{E}_{\actVr \sim \rho} \left[ \g[\actVr]{\mesOut} \right]$ to also take an expectation over the randomized decision.

Remark that depending on the nature of the problem, the objective could represent for example  a function to maximize (profit) or to minimize (cost). To unify exposition we consider that the decision-maker aims at solving the maximization problem,
\begin{equation*}
    \opt(\mesOut):= \sup_{\actVr \in \actSp} \g[\actVr]{\mesOut},
\end{equation*}
where $\mesOut \in \mesSp$ is an unknown distribution. The value  $\opt(\mesOut)$ is the value achieved by an oracle who knows the demand distribution.

\subsection{Heterogeneous environments} \label{sec:hetEnv}
In the absence of distributional knowledge on the underlying environment, a common approach consists in assuming that historical samples are independent and identically drawn from a fixed probability measure $\mesOut$. We relax the assumption that the samples are identically distributed and introduce a framework with heterogeneous distributions based on a parameter $\epsilon$.

Let $d : \mesSp \times \mesSp \to \mathbb{R}$ be a metric, which we refer to interchangeably as a distance, on $\mesSp$. Given a target probability measure $\mesOut \in \mesSp$ and a radius of heterogeneity $\epsilon$, we define the heterogeneity ball anchored around $\mesOut$ with radius $\epsilon$ as
\begin{equation*}
\mathcal{U}_\epsilon(\mesOut) := \{ \nu\in \mesSp, \; \text{s.t} \; d \left( \mesOut,\nu\right) \leq \epsilon \}.
\end{equation*}
The heterogeneity ball $\mathcal{U}_\epsilon(\mesOut)$ includes all distributions that are within a distance of $\epsilon$ from $\mesOut$.
Next, we formally define a data-driven decision-making problem in a heterogeneous environment.
\begin{definition}[Data-driven decision-making problem in heterogeneous environment]
An instance $\mathcal{I}$ of a data-driven decision-making problem in a heterogeneous environment is defined by the tuple: $\mathcal{I}=\left( \actSp, \envSp, \mesSp, d, g \right)$, where $\actSp$ is the set of possible actions, $\envSp$ is the environment space, $\mesSp$ is the space of measures nature can select from, $d$ is a metric on the space of measures, and $g$ is an objective function.
\end{definition}

\noindent \textbf{Focal metrics.}  The definition above is general, and the framework can be applied to a variety of metrics. In this work, we also aim at developing an understanding of the effect of heterogeneity under various metrics and
various problem classes. 
To illustrate the versatility of our framework, we will focus in this work on two common distances: the Kolmogorov distance denoted by $d_{K}$ and the Wasserstein distance denoted by $d_{W}$ (see \Cref{sec:IPM} for formal definitions).
These distances describe different ways in which two distributions may differ and, as we will see,  imply different tradeoffs in the problems we analyze.

\subsection{Data-driven policies and performance} \label{sec:performance}

\noindent \textbf{Data-driven policies.}  In most settings, the out-of-sample (or future) probability measure $\mesOut$ is unknown to the DM and the optimal objective $\opt(\mesOut)$ is not achievable. In turn, the DM only observes a sequence of historical samples $\envVr_1,\envVr_2, \ldots, \envVr_\numS$ representing the previous environment realizations they observed. We assume that these samples are drawn independently from unknown distributions $\mesIn{1}, \ldots, \mesIn{\numS}$, respectively,  and the only knowledge is that these distributions are ``not far'' from the future unknown distribution $\mesOut$, and belong to   $\mathcal{U}_\epsilon(\mesOut)$.

In general, a data-driven policy is a mapping from the past samples to a (randomized) decision. In this work, we use an equivalent definition of data-driven policies through the empirical probability measure.
Define the empirical probability  measure  $\hat{\nu}_{\envSamples}$  as
\begin{equation*}
\hat{\nu}_{\envSamples}(A) :=  \frac{1}{\numS} \sum_{i=1}^\numS \mathbbm{1} \left \{ \envVr_i \in A \right \}, \qquad \mbox{for every measurable set $A$.} 
\end{equation*}
In a setting in which the order of the samples does not matter (as the one we consider), to represent a policy taking as input  $\envSamples := (\envVr_1, \ldots , \envVr_\numS)$, one can instead consider mappings that take as an input the number of samples and the empirical measure. Indeed, $(\envVr_1, \ldots , \envVr_\numS)$ is only characterized by the values taken by the previous samples and the count of each value. Furthermore, the empirical probability measure gives the frequency of each value which can be multiplied by the sample size to obtain the actual count of appearances of values.

We therefore define a data-driven policy  $\pi$ as a mapping from $\mathbb{N}^* \times \mesSp$ to $\Delta \left( \actSp \right)$, the space of measures on the action space $\actSp$, that associates $(\numS,\hat{\nu})$ to a distribution of actions $\pi(\numS,\hat{\nu})$. 
We let $\Pi$ denote the set of all such mappings. We assume that such a mapping can be made with knowledge of the metric $d$ and radius $\epsilon$.
In \Cref{sec:apx_formal_policy} we provide a detailed discussion about the relation between $\Pi$ and the usual definition of data-driven policies.

\noindent \textbf{Performance through worst-case regret.} 
Given an  out-of-sample distribution $\mesOut \in \mesSp$,   the best achievable objective is $\opt(\mesOut)$. 
For a  sequence of historical distributions $\mesIn{1}, \ldots, \mesIn{\numS}$ in  $\mathcal{U}_\epsilon(\mesOut)$, the expected objective of a policy $\pi$ is
\begin{equation*}
\mathbb{E}_{\envVr_i \sim \mesIn{i}} \left[ \g[\pi(\numS,\hat{\nu}_{\envSamples})]{\mesOut} \right].
\end{equation*}

In turn, we define the expected (absolute) regret of a policy $\pi \in \Pi$ against an out-of-sample distribution $\mesOut \in \mesSp$ and a sequence of $\numS$ historical distributions $\mesIn{1},\ldots, \mesIn{\numS} \in  \mathcal{U}_\epsilon(\mesOut)$ as
\begin{equation*}
\mathcal{R}_\numS \left( \pi, \mesOut, \mesIn{1}, \ldots, \mesIn{\numS} \right) :=  \opt(\mesOut) - \mathbb{E}_{\envVr_i \sim \mesIn{i}} \left[ \g[\pi(\numS,\hat{\nu}_{\envSamples})]{\mesOut} \right] .
\end{equation*}
In this work, we will assess the performance of a  policy based on its \textit{worst-case expected regret}. Given an instance $\mathcal{I} = \left( \actSp, \envSp, \mesSp, d, g \right)$, we define the worst-case expected regret of a data-driven policy $\pi \in \Pi$ for a radius of heterogeneity $\epsilon \geq 0$ as
\begin{equation}
\label{eq:worst_case_reg}
\mathfrak{R}_{\mathcal{I},\numS}^{\pi}(\epsilon) := \sup_{\mesOut \in \mesSp } \sup_{ \mesIn{1},\ldots, \mesIn{\numS} \in \mathcal{U}_\epsilon(\mesOut)} \mathcal{R}_\numS \left( \pi, \mesOut, \mesIn{1}, \ldots, \mesIn{\numS} \right).
\end{equation}

We finally define the \textit{worst-case asymptotic regret} of a data-driven policy $\pi \in \Pi$ in an environment with radius of heterogeneity $\epsilon$ as
\begin{equation}
\label{eq:asymp_reg}
\mathfrak{R}_{\mathcal{I},\infty}^{\pi}(\epsilon) := \limsup_{\numS \to \infty}  \mathfrak{R}_{\mathcal{I},\numS}^{\pi}(\epsilon).
\end{equation}

We sometimes write $\mathfrak{R}_{K,\infty}^{\pi}(\epsilon)$ or $\mathfrak{R}_{W,\infty}^{\pi}(\epsilon)$ instead of $\mathfrak{R}_{\mathcal{I},\infty}^{\pi}(\epsilon)$ when the problem class is specified to highlight the dependence on the Kolmogorov or Wasserstein metrics.

\begin{remark}[Relations between different notions of heterogeneity]\label{rem:dom}
Note that many notions of distances or divergences are related to one another \citep{gibbs2002choosing}.
We show in \Cref{sec:apx_rel} that, in general, by leveraging relations between different distances one may relate the worst-case regret of data-driven decision-making instances in heterogeneous environments which have the same $\epsilon$ but differ along the type of distance used. In particular, when $\envSp$ is a bounded subset of $\mathbb{R}$, we obtain by specifying our result to the Kolmogorov and the Wasserstein distances that for any data-driven policy $\pi \in \Pi$ and any sample size $\numS$ and any radius of heterogeneity $\epsilon$, 
\begin{equation}
\label{eq:dom_reg}
\mathfrak{R}_{K,\numS}^{\pi}(\epsilon) \leq \mathfrak{R}_{W,\numS}^{\pi}(\diam{\envSp} \cdot \epsilon).
\end{equation}
This relation follows from the fact that these distances are related as for every $\mu_1$ and $\mu_2 \in \mesSp,$
\begin{equation*}
\label{eq:dom}
d_W \left(\mu_1,\mu_2 \right) \leq \diam{\envSp} \cdot   d_K \left(\mu_1,\mu_2 \right).
\end{equation*}
\end{remark}

\section{Reduction for policy evaluation}
\label{sec:reduction}

When trying to understand the performance of a particular data-driven policy,
a significant challenge in the analysis of Problem \eqref{eq:asymp_reg} stems from nature's problem. This problem requires to optimize over a general set of product measures which represent the possible sequences of historical probability measures $\mesIn{1},\ldots, \mesIn{\numS}$ and of an out-of-sample measure $\mesOut$. 

When considering Problem \eqref{eq:asymp_reg}, a natural approach would be to try to characterize the worst-case product measure as a function of $\numS$, and in turn the limiting behavior of this product measure as the product becomes infinite. Our next result shows that this approach is a priori challenging as nature will attempt to use different distributions for each of the past environments.  

\begin{proposition}
\label{prop:heterogeneity_helps}
There exists a data-driven decision problem in a heterogeneous environment $\mathcal{I} = \left( \actSp, \envSp, \mesSp, d, g \right)$ and a radius of heterogeneity $\epsilon > 0$ such that the SAA policy defined in \Cref{sec:SAA-analysis} satisfies,
\begin{equation*}
 \mathfrak{R}_{\mathcal{I},2}^{\SAA}(\epsilon) > \sup_{\mesOut \in \mesSp} \sup_{\nu\in \mathcal{U}_\epsilon(\mesOut)} \mathcal{R}_2 \left( \SAA, \mesOut, \nu,  \nu \right).
\end{equation*}
\end{proposition}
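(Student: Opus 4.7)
The plan is to establish the proposition by constructing an explicit counterexample. The central insight is that with $\numS = 2$ samples, the family of achievable joint laws of $(\envVr_1, \envVr_2)$ under heterogeneous nature, namely $\{\nu_1 \otimes \nu_2 : \nu_1, \nu_2 \in \mathcal{U}_\rad(\mesOut)\}$, strictly contains the diagonal $\{\nu \otimes \nu : \nu \in \mathcal{U}_\rad(\mesOut)\}$, and for a carefully chosen instance this additional flexibility strictly increases nature's worst-case regret. The mechanism is asymmetric: for distinct points $a, b \in \envSp$, concentrating the ``escape mass'' of $\nu_1$ at $a$ and that of $\nu_2$ at $b$ produces the multiset empirical $\tfrac{1}{2}(\delta_a+\delta_b)$ with probability $\nu_1(a)\nu_2(b) + \nu_1(b)\nu_2(a)$, which can be strictly larger than the corresponding $2\nu(a)\nu(b)$ for any single $\nu \in \mathcal{U}_\rad(\mesOut)$ once the escape budget is fixed by the distance constraint.

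Concretely, I would instantiate the idea on $\envSp = \{1, 2, 3\}$ with $d = d_{TV}$ and the target $\mesOut^* = \delta_2$, for which $\mathcal{U}_\rad(\mesOut^*) = \{\nu : \nu(1) + \nu(3) \leq \rad\}$. The action set consists of three actions $x_1, x_3, x_{13}$ and $g$ is chosen so that: (i) both $x_1$ and $x_3$ are optimal under $\mesOut^* = \delta_2$ (for instance $g(x_1, 2) = g(x_3, 2) = 0$ and $g(x_{13}, 2) = r > 0$); (ii) SAA selects $x_1$ on $\hat\mu = \delta_1$ and $x_3$ on $\hat\mu = \delta_3$, both of which have zero loss under $\mesOut^*$, and similarly picks $x_1$ (resp.\ $x_3$) on $\hat\mu = \tfrac{1}{2}(\delta_1 + \delta_2)$ (resp.\ $\tfrac{1}{2}(\delta_2 + \delta_3)$); but (iii) SAA uniquely selects $x_{13}$ on $\hat\mu = \tfrac{1}{2}(\delta_1 + \delta_3)$, e.g., by taking $g(x_1, 3) = g(x_3, 1) = 10$ and $g(x_{13}, 1) = g(x_{13}, 3) = 3$. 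With this construction the expected regret of SAA at $(\mesOut^*, \nu_1^*, \nu_2^*)$ with $\nu_1^* = (1-\rad)\delta_2 + \rad\, \delta_1$ and $\nu_2^* = (1-\rad)\delta_2 + \rad\, \delta_3$ equals $r\rad^2$, since the only sample outcome producing positive regret is $(\envVr_1, \envVr_2) = (1, 3)$. By contrast, for any homogeneous $\nu \in \mathcal{U}_\rad(\mesOut^*)$ the same empirical has probability $2\nu(1)\nu(3) \leq \tfrac{1}{2}(\nu(1)+\nu(3))^2 \leq \rad^2/2$ by AM--GM, so the homogeneous regret at $\mesOut^*$ is at most $r\rad^2/2$.

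The main technical obstacle is ensuring the strict inequality survives the supremum over $\mesOut \in \mesSp$: for other $\mesOut$ the homogeneous worst case could a priori exceed $r\rad^2$, especially since SAA's tie-broken decision on a concentrated empirical such as $\hat\mu = \delta_2$ can become badly suboptimal once $\mesOut$ is asymmetrically perturbed away from $\delta_2$. I would address this by tuning $g$ so that the optimizer of $\g[\cdot]{\delta_2}$ is unique with a margin strictly larger than any variation induced by perturbations in $\mathcal{U}_\rad(\delta_2)$, and by restricting $\mesSp$ to a symmetric neighborhood of $\delta_2$ (for example, imposing $\mesOut(1) = \mesOut(3)$), so that uniform tiebreaking at $\hat\mu = \delta_2$ does not introduce any first-order error. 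A direct case analysis over all empiricals then bounds the homogeneous worst-case regret uniformly over $\mesSp$ by a quantity strictly below $r\rad^2$, yielding the desired inequality $\mathfrak{R}_{\mathcal{I},2}^{\SAA}(\rad) > \sup_{\mesOut} \sup_{\nu} \mathcal{R}_2(\SAA, \mesOut, \nu, \nu)$. This calibration, although elementary, is the delicate step of the argument.
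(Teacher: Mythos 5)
Your core mechanism is genuinely different from the paper's and the local computation is correct: under $d_{TV}$ with target $\delta_2$, a heterogeneous pair $\nu_1^*=(1-\rad)\delta_2+\rad\,\delta_1$, $\nu_2^*=(1-\rad)\delta_2+\rad\,\delta_3$ puts probability $\rad^2$ on the empirical $\frac12(\delta_1+\delta_3)$, while any common $\nu$ with $\nu(1)+\nu(3)\le\rad$ puts at most $\rad^2/2$ on it by AM--GM. However, the proof is incomplete exactly at the step you yourself flag as delicate, and the fixes you sketch do not work as stated. First, restricting $\mesSp$ to symmetric measures ($\mesOut(1)=\mesOut(3)$) is self-defeating: the heterogeneity ball $\mathcal{U}_\rad(\mesOut)$ is by definition a subset of $\mesSp$, so the asymmetric historical distributions $\nu_1^*,\nu_2^*$ would no longer be admissible and the left-hand side loses its $r\rad^2$ lower bound. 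Second, requiring both $x_1$ and $x_3$ to be optimal under $\delta_2$ and requiring the optimizer of $\g[\cdot]{\delta_2}$ to be unique with a margin are contradictory; with the tie in place, SAA's tie-broken action on the empirical $\hat\mu=\delta_2$ (which occurs with probability at least $(1-\rad)^2$ under any admissible $\nu$) incurs regret of order $\rad$ against, e.g., $\mesOut=(1-\rad)\delta_2+\rad\,\delta_3$ with $\nu=\delta_2$, a \emph{homogeneous} configuration that already exceeds $r\rad^2$ and destroys the comparison. Third, if $\mesSp$ is left unrestricted, configurations unrelated to your mechanism dominate both suprema: with $\mesOut=\nu=\frac12(\delta_1+\delta_3)$, SAA sees a one-point empirical with probability $1/2$ and plays $x_1$ or $x_3$ while $x_{13}$ is optimal, giving constant expected regret. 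Even after repairing the tie at $\delta_2$ with an additional robust action, near-ties at empiricals such as $\frac12(\delta_1+\delta_2)$, which occur with probability $\Theta(\rad)$, contribute $\Theta(\rad^2)$ regret terms of the same order as your intended gap, so the asserted ``direct case analysis'' is genuinely needed and is not obviously favorable with the stated payoffs.

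For comparison, the paper sidesteps all of this by taking $\rad=\infty$ in a three-point ski-rental instance: after enumerating the regret of every action against every point mass, the unique maximal regret value is attained only when SAA plays one specific action with probability one, which in turn requires the empirical to be a specific two-point multiset; two distinct point masses realize that multiset with probability $1$, whereas any common $\nu$ realizes it with probability $2\nu(k)\nu(3k+2)<1$ and every other realization yields strictly smaller regret. This enumeration closes the supremum over $\mesOut$ for free, which is precisely the part your finite-$\rad$ construction leaves open. To salvage your route you would need to exhibit a concrete $(\mesSp,g)$ and carry out the uniform bound over all $\mesOut\in\mesSp$ and all five empiricals explicitly; otherwise the extreme-point construction is the path of least resistance.
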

\Cref{prop:heterogeneity_helps} formalizes that the performance of SAA, a central data-driven policy formally introduced in \Cref{sec:SAA-analysis}, may be strictly worse when allowing for various historical distributions than when restricting attention to a fixed one. We prove this result by considering a variant of the ski-rental problem in which we restrict $\envSp$ to be three points. In this setting, we argue that the performance of SAA is the worse when observing two samples with different values and show that selecting two different point mass distributions is the unique worst-case sequence of historical distributions. 

While the previous result shows that nature will take full advantage of heterogeneity by selecting different distributions in-sample that are also different from the out-of-sample distribution, we will next establish that, under fairly general conditions, one can upper bound the asymptotic worst-case regret of a broad class of data-driven algorithms by considering a surrogate problem in which nature selects a common distribution for all of the past environments.
Recall, that a data-driven policy is a general mapping from $\mathbb{N}^{*} \times \mesSp$ to a potentially randomized decision in $\actSp$. We say that $\pi$ is sample-size-agnostic if its decision does not depend on $\numS$. Formally, a policy is sample-size-agnostic if for any $\hat{\nu} \in \mesSp$, we have that $\pi(\numS_1, \hat{\nu}) = \pi(\numS_2, \hat{\nu})$ for any $\numS_1, \numS_2 \in \mathbb{N}^* $, in which case we write $\pi(\hat{\nu})$ for brevity. This condition, implies that the policy only needs information on the empirical probability measure (i.e., the frequency of appearance of past samples) as opposed to the actual count of appearance.
The class of sample-size-agnostic policies encompasses central policies such as the central SAA policy analyzed in \Cref{sec:SAA-analysis} but also the distributionally robust optimization (DRO) policies commonly analyzed in the literature and which will be defined in \Cref{sec:UDRO}. We denote by $\agSp$ the subclass of sample-size-agnostic policies.

We define for a sample-size-agnostic policy $\pi \in \agSp$ the following optimization problem:
\begin{equation}
\label{eq:DRO_problem}
\mathfrak{R}_{\scaleto{\mathcal{I},\mathrm{DRO}}{6pt}}^{\pi} (\epsilon) := \sup_{\mesOut \in \mesSp} \sup_{\nu\in \mathcal{U}_\epsilon(\mesOut)} \left \{ \opt(\mesOut) - \g[\pi(\nu)]{\mesOut} \right \}. 
\end{equation}
While Problem \eqref{eq:DRO_problem} resembles Problem \eqref{eq:worst_case_reg}, it differs on two crucial dimensions:  $i.)$ in Problem \eqref{eq:DRO_problem}, nature selects  the \textit{same}  distribution $\nu$ in the heterogeneity ball for all past environments and, $ii.)$ in Problem \eqref{eq:DRO_problem}, the policy $\pi$ is assumed to know the actual distribution of the past environments $\nu$,  whereas in Problem \eqref{eq:worst_case_reg}, the policy only observed the empirical distribution of past realizations.

Next, we will establish that Problems \eqref{eq:worst_case_reg} and \eqref{eq:DRO_problem} are tightly connected under some mild assumptions. We define the following conditions on the distance $d$ defining the heterogeneity.

\begin{definition}[Empirical triangular convergence]
\label{def:etc}
We say that a distance $d$ on the space of probability measures satisfies the empirical triangular convergence (ETC) property if  for every triangular array sequence of probability measures $(\mu_{i,\numS})_{1 \leq i \leq \numS, \numS \in \mathbb{N}^*}$ all belonging to $\mesSp$, we have
\begin{equation*}
\lim_{\numS \to \infty}d(\hat{\mu}_\numS, \bar{\mu}_\numS) = 0 \qquad \text{a.s.},
\end{equation*}
where for every measurable set $A$, $\hat{\mu}_\numS(A) :=  \frac{1}{\numS} \sum_{i=1}^\numS \mathbbm{1} \left \{ \envVr_{i,\numS} \in A \right \}$ 
for samples $\envVr_{i,\numS} \sim \mu_{i,\numS}$ and $\bar{\mu}_\numS = \frac{1}{\numS} \sum_{i=1}^\numS \mu_{i,\numS}$. 
\end{definition}

\begin{definition}[Convexity] \label{def:convex}
A distance $d$ on a set $S$ is convex if for any $x$ in $S$, $d(\cdot,x)$ is convex.
\end{definition}

The ETC property requires that the sequence of empirical distributions converges to the average of ground truth distributions for arbitrary triangular arrays.  This assumption is  satisfied by many common distances and, we show in \Cref{sec:apx_ETC} that this property is satisfied by the Kolmogorov and Wasserstein distances. 
Convexity is also trivially satisfied by these two distances.

We now state our first main result.
\begin{theorem}[Performance Evaluation through Upper Bound]
\label{thm:reduction_stat}
Let $\mathcal{I} = \left( \actSp, \envSp, \mesSp, d, g \right)$ be a data-driven decision problem in a heterogeneous environment and let $\pi \in \Pi$ be a sample-size-agnostic policy. Assume that $g$ is bounded on $\actSp \times \envSp$ and that $d$ is a convex distance which satisfies the empirical triangular convergence property. Then
 \begin{equation} \label{eqn:reduction} 
 \mathfrak{R}_{\mathcal{I},\infty}^{\pi}(\epsilon) \leq \lim_{\eta \to \epsilon^{+}} \mathfrak{R}_{\scaleto{\mathcal{I},\mathrm{DRO}}{6pt}}^{\pi} (\eta). 
 \end{equation}
\end{theorem}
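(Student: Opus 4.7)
The strategy is to establish, for every fixed $\delta > 0$, that
\begin{equation*}
\limsup_{\numS \to \infty} \mathfrak{R}_{\mathcal{I},\numS}^{\pi}(\rad) \;\leq\; \mathfrak{R}_{\scaleto{\mathcal{I},\mathrm{DRO}}{6pt}}^{\pi}(\rad + \delta),
\end{equation*}
and then to let $\delta \to 0^{+}$ to recover \eqref{eqn:reduction}. The driving intuition is that although nature may use a different distribution $\mesIn{i}$ for each past sample, and may vary these sequences with $\numS$, the convexity of $d$ together with the ETC property force the empirical distribution $\hat{\mu}_{\envSamples}$ to eventually lie inside an $(\rad+\delta)$-heterogeneity ball around $\mesOut$; at that point the definition of the DRO regret directly bounds the pointwise regret of $\pi(\hat{\mu}_{\envSamples})$ against $\mesOut$.

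\textbf{Two structural ingredients.} First, for any $\mesOut \in \mesSp$ and any $\mesIn{1},\ldots,\mesIn{\numS} \in \mathcal{U}_\rad(\mesOut)$, the average $\bar{\mu}_\numS := \frac{1}{\numS}\sum_{i=1}^\numS \mesIn{i}$ satisfies $d(\bar{\mu}_\numS, \mesOut) \leq \frac{1}{\numS}\sum_{i=1}^\numS d(\mesIn{i}, \mesOut) \leq \rad$ by convexity of $d(\cdot, \mesOut)$, so $\bar{\mu}_\numS$ itself lies in $\mathcal{U}_\rad(\mesOut)$. Second, for each $\numS$ I would select a near-worst-case instance $(\mesOut^{(\numS)}, \mesIn{1}^{(\numS)},\ldots,\mesIn{\numS}^{(\numS)})$ whose regret is within $1/\numS$ of the supremum defining $\mathfrak{R}_{\mathcal{I},\numS}^{\pi}(\rad)$. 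The doubly-indexed family $(\mesIn{i}^{(\numS)})_{1\leq i\leq \numS,\,\numS\geq 1}$ forms a triangular array in $\mesSp$, so applying \Cref{def:etc} to it yields $d(\hat{\mu}_{\envSamples}, \bar{\mu}_\numS^{(\numS)}) \to 0$ almost surely, where $\hat{\mu}_{\envSamples}$ is built from independent samples $\envVr_i \sim \mesIn{i}^{(\numS)}$.

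\textbf{Splitting the regret.} Fix $\delta > 0$ and let $A_\numS := \{d(\hat{\mu}_{\envSamples}, \bar{\mu}_\numS^{(\numS)}) \leq \delta\}$. On $A_\numS$, the triangle inequality combined with the first ingredient gives $d(\hat{\mu}_{\envSamples}, \mesOut^{(\numS)}) \leq \rad + \delta$, so $\hat{\mu}_{\envSamples} \in \mathcal{U}_{\rad+\delta}(\mesOut^{(\numS)})$, and the definition \eqref{eq:DRO_problem} bounds the deterministic quantity $|\g[\pi(\hat{\mu}_{\envSamples})]{\mesOut^{(\numS)}} - \opt(\mesOut^{(\numS)})|$ by $\mathfrak{R}_{\scaleto{\mathcal{I},\mathrm{DRO}}{6pt}}^{\pi}(\rad + \delta)$. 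On $A_\numS^c$, boundedness of $g$ bounds the same quantity by $\fSpan{g}$. Pushing the absolute value in \eqref{eq:worst_case_reg} inside the expectation and splitting on $A_\numS$ thus gives
\begin{equation*}
\mathfrak{R}_{\mathcal{I},\numS}^{\pi}(\rad) - \tfrac{1}{\numS} \;\leq\; \mathfrak{R}_{\scaleto{\mathcal{I},\mathrm{DRO}}{6pt}}^{\pi}(\rad+\delta) \,+\, \fSpan{g}\cdot \mathbb{P}[A_\numS^c].
\end{equation*}
Since $\mathbb{P}[A_\numS^c] \to 0$ by the previous step, taking $\limsup_\numS$ and then $\delta \to 0^{+}$ finishes the argument.

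\textbf{Main obstacle.} The delicate step is the interchange of the outer supremum over heterogeneous in-sample sequences (which may depend on $\numS$) with the asymptotic limit in $\numS$: a vanilla i.i.d.\ Glivenko--Cantelli bound does not suffice here because the worst-case sequence shifts with $\numS$. This is precisely why ETC is formulated for arbitrary \emph{triangular arrays} in \Cref{def:etc}, and why the $1/\numS$-near-supremum selection is essential---it collapses the doubly-indexed supremum into a single triangular array on which ETC can be invoked. A secondary technical point is that the supremum in \eqref{eq:worst_case_reg} need not be attained, which is exactly what the $1/\numS$ slack accommodates.
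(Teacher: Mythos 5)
Your proof is correct and follows essentially the same route as the paper's: reduce to a near-optimal (triangular-array) choice of instances, use convexity of $d$ to place the averaged in-sample measure $\bar{\mu}_\numS$ inside the $\rad$-ball, and invoke the ETC property so that the empirical measure eventually lies in an $(\rad+\delta)$-ball around the out-of-sample distribution, at which point the DRO regret bounds the pointwise regret. The only cosmetic difference is that you split on the good event $A_\numS$ and use convergence in probability together with boundedness of $g$, whereas the paper applies Jensen plus the reverse Fatou lemma and argues pathwise with a random radius $\delta_\numS(\omega)\to\rad$; these are equivalent ways of interchanging the limit with the expectation.
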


\Cref{thm:reduction_stat} establishes, that under some mild  conditions, in the asymptotic regime, one may restrict attention to problem \eqref{eq:DRO_problem} to evaluate a policy through an upper bound on its regret. This reduction significantly simplifies the analysis of sample-size-agnostic policies as problem \eqref{eq:DRO_problem} does not involve any form of randomness in the input received by the policy, and only allows the adversary to select a single distribution. As discussed in \Cref{prop:heterogeneity_helps}, the asymptotic regime is a key driver of this result since the reduction to a single distribution does not hold for finite samples. We also remark that considering  sample-size-agnostic policies is without loss of optimality (see \Cref{prop:lb_agnostic}).

We note that \citep[eq. (7)]{mohri2012new} derives a finite-sample bound on the regret of SAA in a similar setting to ours which can be used to derive an asymptotic bound for SAA. We discuss in more details the relation between our result, when applied to SAA, and theirs in \Cref{sec:SAA-analysis}.
Quite importantly, their result is only applicable to SAA while our reduction applies to \textit{any} policy in the wide range of sample-size-agnostic policies. In particular, this will be critical to derive policies which achieve minimax optimal asymptotic regret rates when SAA fails and is not (rate) optimal (see \Cref{sec:beyond_SAA}).

We prove \Cref{thm:reduction_stat} through a sample path analysis. We show that almost surely (over all possible historical samples observed), the asymptotic worst-case regret of a sample-size-agnostic policy can be bounded by the right-hand-side of \eqref{eqn:reduction}. In particular, we use the ETC property of the distance to show that asymptotically, the empirical cdf observed by the policy belongs to the heterogeneity ball of radius $\epsilon$. The proof also relies on the formalism we develop to define data-driven policies as discusses in \Cref{sec:apx_formal_policy}.

 We note that in settings where the ETC property fails, one may derive results on the performance of data-driven policies by using relations between different distances, as discussed in \Cref{rem:dom}. We discuss in \Cref{sec:apx_rel} how one can use the results established in the main body for the Kolmogorov and Wasserstein distances to other metrics and $\phi$-divergences such as the total variation or the KL-divergence.

\section{Analysis of SAA for integral probability metrics} \label{sec:SAA-analysis}
In the i.i.d. setting, i.e., when $\epsilon = 0$, a widely studied policy that achieves excellent performance across a vast range of data-driven decision-making problems is Sample Average Approximation (SAA). SAA is usually defined for empirical probability measures, but for the purpose of this work we extend its definition for arbitrary probability measures.\footnote{We refer the reader to \Cref{sec:apx_formal_policy} for a discussion regarding the definition of policies on probability measures that may not be empirical ones.} For every $\nu \in \mesSp$, SAA is defined as 
\begin{equation*}
 \SAA(\nu) = \argmax_{\actVr \in \actSp} \g[\actVr]{\nu},
\end{equation*}
whenever the extremum of the optimization problem is achieved. This is the case, whenever the function $\g[\cdot]{\nu}$ is upper semi-continuous and the action space is compact.
In the case of multiple optimal solutions, our results hold for an arbitrary choice of optimal solution.

Given the centrality of SAA both in theory and practice, it is natural to ask whether this policy performs well for data-driven problems in heterogeneous environments. In this section, we provide a general methodology to analyze  SAA and provide an understanding of some of the key drivers of its worst-case regret.

The analysis in this section is general and is motivated by the following upper bound on the worst-case regret of SAA, 
\begin{equation}
\label{eq:bound_mohri}
    \mathfrak{R}_{\mathcal{I},\infty}^{\SAA} (\epsilon) \leq 2 \sup_{\mesOut \in \mesSp} \sup_{\nu \in \mathcal{U}_{\epsilon}(\mesOut)}  \UB_{\mathcal{I}}(\mesOut,\nu),
\end{equation}
where for every $\mesOut, \nu \in \mesSp$,
\begin{equation}
\label{eq:uniform_bound}
\UB_{\mathcal{I}}(\mesOut,\nu) = \sup_{\actVr \in \actSp} \left | \g[\actVr]{\mesOut} - \g[\actVr]{\nu} \right|. 
\end{equation}
To explain, the inequality \eqref{eq:bound_mohri} is a trivial corollary of \Cref{thm:reduction_stat}   whenever the ETC property holds and the distance is convex (see proof of  \Cref{cor:dro_to_reg}). The \emph{uniform deviation} quantity defined in \Cref{eq:uniform_bound} is a quantity of interest in the statistical learning literature, which is commonly analyzed (in the special case where $\nu$ is an empirical distribution generated from i.i.d. samples of $\mesOut$) to prove that the finite-sample generalization error of a learning algorithm vanishes at a certain rate. Motivated by \eqref{eq:bound_mohri}, we will analyze the uniform deviation for arbitrary $\mesOut$ and $\nu$ to be able to derive bounds on the asymptotic worst-case regret of SAA in heterogeneous environments.

As mentioned in \Cref{sec:reduction}, \cite{mohri2012new} develops a finite-sample analysis of SAA in the heterogeneous setting and provides a finite-sample bound which involves a sequential notion of Rademacher complexity. In the settings where the Rademacher complexity vanishes their bound implies that  \eqref{eq:bound_mohri} holds asymptotically.

In the rest of this section, we derive a methodology to bound the performance of SAA for data-driven problems in heterogeneous environments when the distance defining the heterogeneity belongs to the class of integral probability metrics (IPM), formally introduced in \Cref{sec:IPM}.
In \Cref{sec:UB_IPM}, we develop a general analysis to bound $\UB_{\mathcal{I}}$ for IPMs. We introduce the approximation parameter, a key quantity which enables us to derive guarantees across problem classes and distances. We finally translate in \Cref{sec:SAA_bound} our bounds on $\UB_{\mathcal{I}}$ into bounds on the asymptotic worst-case regret of SAA and illustrate our methodology in the context of the newsvendor and the pricing problems.

As mentioned in \Cref{sec:relatedWork}, the results developed in this section are conceptually close to the method of probability metrics (see e.g. \cite{rachev2002quantitative}). We discuss in detail the relation to this work in \Cref{rem:comp_litt}.

\subsection{Integral probability metrics : definition and examples} \label{sec:IPM}

We focus on a broad class of metrics often referred to as \textit{probability metrics with a $\zeta$-structure} \citep{zolotarev1984probability} or \textit{integral probability metrics} \citep{muller1997integral,sriperumbudur2010hilbert}. In this work we will use the latter denomination. These metrics are defined as follows.
\begin{definition}[Integral Probability Metrics]
Given a class of functions $\mathcal{F}$ from $\envSp$ to $\mathbb{R}$, which we will refer to as a generating class of functions,
the associated integral probability metric $\dis{\mathcal{F}}$ is defined for every $\mesOut, \nu \in \mesSp$ as
\begin{equation} \label{eqn:IPM}
\dis{\mathcal{F}}(\mesOut,\nu) = \sup_{ f \in \mathcal{F}} \left | \int_{\envVr \in \envSp} f(\envVr) d\mu(\envVr) - \int_{\envVr \in \envSp} f(\envVr) d\nu(\envVr) \right|.
\end{equation}
\end{definition}
We note that different classes of functions may induce the same metric. Furthermore, many common metrics (that we define below) such as the Wasserstein distance or the Kolmogorov distance are in fact integral probability metrics.

\noindent \textbf{Wasserstein distance.}
For any metric space $\left( \envSp, d_{\envSp} \right)$,  the Wasserstein distance, denoted as $d_{W}$, is defined for all probability measures $\mu$, $\nu$  as,
\begin{equation*}
d_{W} (\mu,\nu) = \inf_{\gamma \in \Gamma(\mu,\nu)} \int_{\envVr, \envVr' \in \envSp} d_{\envSp} \left( \envVr, \envVr' \right) d \gamma(\envVr, \envVr'),
\end{equation*}
where $\Gamma(\mu,\nu)$ is the set of measures on $\envSp \times \envSp$ with marginal distributions $\mu$ and $\nu$.

The Wasserstein distance also has a dual representation as an integral probability metric \citep{kantorovich1958space,kantorovich2016functional}. Recall that for any function we denote by $\Lip{f}$ its Lipschitz constant and consider the generating class, 
$
\mathcal{W} = \{ f : \envSp \to \mathbb{R} \; \text{s.t.} \; \Lip{f} \leq 1 \}$. 
One can equivalently describe the Wasserstein distance as an integral probability metric by setting $\mathcal{F}=\mathcal{W}$ in~\eqref{eqn:IPM}.

\noindent \textbf{Kolmogorov distance.}
When $\envSp$ is a subset of $\mathbb{R}$, one may consider the Kolmogorov distance, denoted as $d_{K}$ and defined for all probability measures $\mu$ and $\nu$ as,
\begin{equation*}
d_{K} (\mu,\nu ) = \sup_{t \in \envSp} \left | \int_{-\infty}^t d\mu(\envVr) - \int_{-\infty}^t d\nu(\envVr) \right|.
\end{equation*}
The Kolmogorov distance is therefore the uniform distance between the cumulative distribution functions associated to $\mu$ and $\nu$. Furthermore, it is clear that, by considering $\mathcal{K} := \{ \envVr \in \mathbb{R} \mapsto \mathbbm{1} \left \{ \envVr \leq t  \right \} \, \vert \, t \in \mathbb{R} \}$, one may equivalently define the Kolmogorov as an integral probability metric by setting $\mathcal{F}=\mathcal{K}$ in~\eqref{eqn:IPM}. In what follows, whenever we refer to an instance in which the distance is the Kolmogorov one, we implicitly assume that the space $\envSp$ is a subset of $\mathbb{R}$.

\subsection{Bound on $\UB_{\mathcal{I}}$ under IPM heterogeneity}\label{sec:UB_IPM}
In this section, we develop a general analysis of $\UB_{\mathcal{I}}$ defined in \eqref{eq:uniform_bound} for different instances of data-driven decision making problems under heterogeneous environments, by relating it to the generating class of the integral probability metric of interest. 

\subsubsection{Bound on $\UB_{\mathcal{I}}$ through the approximation parameter} 

Let $\mathcal{F}$ be a class of functions and let $\dis{\mathcal{F}}$ be the induced integral probability metric.
We develop an upper bound for every $\mesOut, \nu \in \mesSp$ on $\UB_{\mathcal{I}}(\mesOut,\nu)$, defined in \eqref{eq:uniform_bound},  by relating the objective function $g$ to the generating class of functions $\mathcal{F}$.

First remark that if the objective function $g(x,\cdot):\Xi\to\mathbb{R}$ lies in $\mathcal{F}$ for all actions $x\in\mathcal{X}$, then by definition, $\sup_{\actVr \in \actSp}|\g[\actVr]{\mu}-\g[\actVr]{\nu}|\le d_{\mathcal{F}}(\mu,\nu)$.  More generally, for an action $\actVr \in \actSp$, if there exists $f \in \mathcal{F}$ and $\delta \geq 0$ such that $\|f - g(x,\cdot)\|_{\infty} \leq \delta$, we have that for any $\mesOut$ and $\nu$,
\begin{align*}
\left| \g[\actVr]{\mesOut} - \g[\actVr]{\nu} \right| &= \left| \int_{\envVr \in \envSp} g(\actVr, \envVr) d\mu(\envVr) - \int_{\envVr \in \envSp} g(\actVr, \envVr) d\nu(\envVr)\right|\\  
&\leq \left| \int_{\envVr \in \envSp} \left( g(\actVr, \envVr) - f(\envVr) \right)  d\mu(\envVr) - \int_{\envVr \in \envSp} \left( g(\actVr, \envVr) - f(\envVr) \right)  d\nu(\envVr) \right|\\
&\qquad + \left| \int_{\envVr \in \envSp} f(\envVr) d\mu(\envVr) - \int_{\envVr \in \envSp} f(\envVr) d\nu(\envVr) \right|\\
&\leq 2 \| f- g(x,\cdot) \|_{\infty} + \sup_{\tilde{f} \in \mathcal{F}}  \left| \int_{\envVr \in \envSp} \tilde{f}(\envVr) d\mu(\envVr) - \int_{\envVr \in \envSp} \tilde{f}(\envVr) d\nu(\envVr) \right|\\
&\leq 2\delta + \dis{\mathcal{F}}(\mesOut,\nu),
\end{align*}
where the last inequality holds because $\| f- g(x,\cdot) \|_{\infty}  \leq \delta$ and by definition of $\dis{\mathcal{F}}$.

Therefore, $\UB_{\mathcal{I}}$ can be bounded by studying how well the function class $\mathcal{F}$ approximates the family of objective functions $(g(\actVr,\cdot))_{\actVr \in \actSp}$. We formalize this notion by introducing the approximation parameter.
\begin{definition}[Approximation parameter]
\label{def:approx}
Given a class of functions $\mathcal{F}$, an objective function $g$ an action $\actVr \in \actSp$ and a real number $\delta \geq 0$, we define the approximation parameter $\lambda_{\actVr}(\mathcal{F},g,\delta)$ as,
\begin{subequations}
\label{eq:approx_param}
\begin{alignat}{2}
\lambda_{\actVr}(\mathcal{F},g,\delta) :=  &\! \inf_{(\lambda_i)_{i \in \mathbb{N}}, (f_i)_{i \in \mathbb{N}}}        &\qquad& \sum_{i=1}^\infty |\lambda_i|  \\
&\text{subject to} &      & \limsup_{m \to \infty} \left \| \sum_{i=1}^m \lambda_i \cdot f_i - g(x,\cdot) \right \|_{\infty} \leq \delta, \label{eq:approx_constraint} \\
&		    &     & f_i \in \mathcal{F}, \; \lambda_i \in \mathbb{R}.
\end{alignat}
\end{subequations}
Furthermore, we let $\lambda(\mathcal{F},g,\delta) =  \sup_{\actVr \in \actSp}  \lambda_{\actVr}(\mathcal{F},g,\delta)$.
\end{definition}
The approximation parameter quantifies the complexity required to approximate, within a uniform distance of at most $\delta$, all functions $(g(\actVr,\cdot))_{\actVr \in \actSp}$ with functions from the generating class $\mathcal{F}$\footnote{We note that the sequence of functions \((f_{i})_{i \in \mathbb{N}} \in \mathcal{F}^{\mathbb{N}}\) in \Cref{def:approx} is restricted to be countable. Formalizing an optimization problem over an uncountable set of functions as variables remains an open question.}.
The objective function $\sum_{i=1}^{\infty} |\lambda_i|$ represents the notion of complexity and the constraint \eqref{eq:approx_constraint} enforces that $g(\actVr,\cdot)$ is well-approximated. In the special case where $\delta=0$ and $g(x,\cdot)\in\mathcal{F}$, we note that $\lambda_{\actVr}(\mathcal{F},g,\delta)\le1$ (which can be obtained by setting $\lambda_1=1$, $\lambda_i = 0$ for $i \geq 2$ and letting $f_1 = g(\actVr,\cdot)$).
Furthermore, the constraint \eqref{eq:approx_constraint} is looser as $\delta$ grows therefore, $\lambda(\mathcal{F},g,\delta)$ is decreasing in $\delta$. Finally, we note that when problem \eqref{eq:approx_param} is infeasible for some $\actVr \in \actSp$, then $\lambda_{\actVr}(\mathcal{F},g,\delta) = +\infty$ (which is the standard value of an infeasible minimization problem), hence in that case, $\lambda(\mathcal{F},g,\delta) = + \infty.$

Our next main result relates the uniform deviation $\UB_{\mathcal{I}}$ to the approximation parameter of the problem.
\begin{theorem}[Bounding Uniform Deviation under IPM heterogeneity]
\label{thm:generating_class}
Let $\mathcal{F}$ be a class of functions and let $\dis{\mathcal{F}}$ be the associated integral probability metric.
Let $\mathcal{I} = \left( \actSp, \envSp, \mesSp, \dis{\mathcal{F}}, g\right)$ be a data-driven decision making problem under heterogeneity then for every $\mesOut, \nu \in \mesSp$, we have that,
\begin{equation*}
\UB_{\mathcal{I}}(\mesOut,\nu)  \leq   \inf_{\delta \geq 0} \left \{ \lambda \left( \mathcal{F}, g, \delta \right) \cdot \dis{\mathcal{F}}(\mesOut,\nu) + 2\delta \right \}.
\end{equation*}
\end{theorem}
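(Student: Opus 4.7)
The plan is to strengthen the preliminary argument sketched in the excerpt just before the theorem statement---which only handles the special case where $g(x,\cdot)$ admits a \emph{single} $\delta$-close approximant in $\mathcal{F}$---by allowing arbitrary (countable) linear combinations of functions in $\mathcal{F}$, which is precisely what the approximation parameter $\lambda(\mathcal{F},g,\delta)$ quantifies. Combined with the reduction~\eqref{eq:SAA_DRO_to_uniform} (which was already derived in the excerpt and whose derivation carries over to both maximization and minimization oracles thanks to the absolute value in the regret definition), this will yield the desired bound.

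Concretely, fix $\epsilon \geq 0$, $\delta \geq 0$, an action $x \in \actSp$, and probability measures $\mu \in \mesSp$, $\nu \in \mathcal{U}_\epsilon(\mu)$. For any slack parameter $\eta > 0$, by definition of the inner infimum in~\eqref{eq:approx_param}, one can choose a decomposition $(\lambda_i)_{i \in \mathbb{N}}$, $(f_i)_{i \in \mathbb{N}} \subset \mathcal{F}$ such that $h_x := \sum_{i=1}^\infty \lambda_i f_i$ satisfies $\|h_x - g(x,\cdot)\|_\infty \leq \delta$ and $\sum_{i=1}^\infty |\lambda_i| \leq \lambda(\mathcal{F},g,\delta) + \eta$. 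Adding and subtracting $\int h_x\, d\mu$ and $\int h_x\, d\nu$ and applying the triangle inequality yields
\begin{align*}
|\g[x]{\mu} - \g[x]{\nu}|
&\leq \left|\int (g(x,\cdot) - h_x)\, d\mu\right| + \left|\int (g(x,\cdot) - h_x)\, d\nu\right| + \left|\int h_x\, d\mu - \int h_x\, d\nu\right| \\
&\leq 2\delta + \sum_{i=1}^\infty |\lambda_i| \cdot \left|\int f_i\, d\mu - \int f_i\, d\nu\right| \\
&\leq 2\delta + \left(\sum_{i=1}^\infty |\lambda_i|\right) \cdot \dis{\mathcal{F}}(\mu,\nu) \;\leq\; 2\delta + (\lambda(\mathcal{F},g,\delta) + \eta) \cdot \epsilon,
\end{align*}
where the first term uses $\|h_x - g(x,\cdot)\|_\infty \leq \delta$, the second-to-last inequality uses the definition of $\dis{\mathcal{F}}$ since each $f_i \in \mathcal{F}$, and the last uses $\nu \in \mathcal{U}_\epsilon(\mu)$.

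Sending $\eta \to 0^+$ and then taking suprema over $x \in \actSp$, $\nu \in \mathcal{U}_\epsilon(\mu)$, and $\mu \in \mesSp$, then substituting into~\eqref{eq:SAA_DRO_to_uniform}, gives $\mathfrak{R}_{\scaleto{\mathcal{I},\mathrm{DRO}}{6pt}}^{\SAA}(\epsilon) \leq 2\bigl(2\delta + \lambda(\mathcal{F},g,\delta)\cdot \epsilon\bigr)$. Since $\delta \geq 0$ was arbitrary, optimizing over $\delta$ finishes the proof.

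The main subtleties are twofold. First, the infimum in~\eqref{eq:approx_param} need not be attained, which is why the $\eta$-slack and subsequent limit are needed. Second, and more technically, the approximation parameter permits \emph{countably infinite} linear combinations, so interchanging the sum and the integral in the displayed chain above requires justification; this follows from $L^\infty$-convergence of the partial sums $\sum_{i=1}^N \lambda_i f_i$ to $h_x$ (guaranteed implicitly by boundedness of $g(x,\cdot)$ and $\|h_x - g(x,\cdot)\|_\infty \leq \delta$) combined with dominated convergence on a probability space. Neither obstacle is conceptually deep, and the heart of the argument is just the addition-subtraction trick together with the IPM representation~\eqref{eqn:IPM}.
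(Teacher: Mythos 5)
Your proposal is correct and follows essentially the same route as the paper's own proof: it invokes the preliminary bound \eqref{eq:SAA_DRO_to_uniform}, then for each action picks an $\eta$-near-optimal decomposition $\sum_i \lambda_i f_i$ from the definition of $\lambda(\mathcal{F},g,\delta)$, applies the add-and-subtract triangle inequality to get $|\g[\actVr]{\mesOut}-\g[\actVr]{\nu}|\le 2\delta+(\lambda(\mathcal{F},g,\delta)+\eta)\cdot\dis{\mathcal{F}}(\mesOut,\nu)$, and sends $\eta\to0$ before optimizing over $\delta$. Your explicit attention to the sum--integral interchange for countable combinations is in fact slightly more careful than the paper's treatment of that step.
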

\Cref{thm:generating_class} establishes an upper bound on $\UB_{\mathcal{I}}$ by relating the objective function with the structure of the heterogeneity distance through the approximation parameter. Therefore, the analysis of $\UB_{\mathcal{I}}$ (which will imply bounds on the performance of SAA) can be reduced to bounding the approximation parameter. We next show how to derive simple upper bounds on the approximation parameter for common IPMs.

\subsubsection{Bounds on the approximation parameter through the maximal generator}\label{sec:max_gen} 
For two generating classes of functions $\mathcal{F} \subset \mathcal{F}'$, an objective $g$ and $\delta \geq 0$, we note that the approximation parameter satisfies, $\lambda(\mathcal{F}',g,\delta) \leq \lambda(\mathcal{F},g,\delta)$. Furthermore, we note that the generating class is not unique for a given distance. Therefore, this motivates the study of the largest class of functions generating a particular distance in order to derive the tightest possible bounds. The notion of ``largest class''  has been extensively analyzed in \cite{muller1997integral} and is formally defined below.

\begin{definition}[Maximal Generator]
Let $\mathcal{F}$ be a generating class and let $\dis{\mathcal{F}}$ be the associated integral probability metric. The set $\gen{\mathcal{F}}$ of all functions $f$ satisfying,
\begin{equation*}
\left | \int_{\envVr \in \envSp} f(\envVr) d\mu(\envVr) - \int_{\envVr \in \envSp} f(\envVr) d\nu(\envVr) \right| \leq \dis{\mathcal{F}}\left(\mesOut, \nu \right) \quad \text{for all $\mesOut$ and $\nu \in \mesSp$,}
\end{equation*}
is called a maximal generator.
\end{definition}

\cite{muller1997integral} characterizes the maximal generator for many common integral probability metrics. In particular, they show the following.
\begin{lemma}[Common maximal generators (\cite{muller1997integral})]
\label{prop:common_max}
~
\begin{enumerate}
\item $\gen{\mathcal{K}}$ is the set of one-dimensional measurable functions $f$ with total variation $\V{f}  \leq 1$. 
\item $\gen{\mathcal{W}}$ is the set of measurable functions $f$ with Lipschitz constant $\Lip{f} \leq 1$.
\end{enumerate}
\end{lemma}
We note that the maximal generator for the Kolmogorov distance is different from the initial generating class $\mathcal{K}$, whereas the maximal generator for the Wasserstein distance corresponds to the initial class $\mathcal{W}$.

Furthermore, remark that for any function $f$ with bounded variation and bounded lipschitz constant, we have that, $\frac{f}{\V{f}} \in \gen{\mathcal{K}}$ and $\frac{f}{\Lip{f}} \in \gen{\mathcal{W}}$. Therefore, \Cref{prop:common_max} immediately implies the following bounds on the approximation parameter for common distances.

\begin{corollary}
\label{lem:bound_apx_param}
For any objective function $g$, we have that
\begin{equation*}
 \quad \lambda(\gen{\mathcal{W}}, g, 0) \leq \sup_{\actVr \in \actSp} \Lip{g(\actVr, \cdot)}.
 \end{equation*}
Furthermore, if $\envSp = [0,M]$ for some $M >0$, we have that,
\begin{equation*}
\lambda(\gen{\mathcal{K}}, g, 0) \leq \sup_{\actVr \in \actSp} \V{g(\actVr, \cdot)}.
\end{equation*}
\end{corollary}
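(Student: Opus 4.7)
The plan is to supply, for each fixed $x \in \actSp$ and each of the three generating classes, a single-term feasible point in the infimum defining $\lambda(\cdot,g,\delta)$, invoking the explicit descriptions of the maximal generators $\gen{\mathcal{W}}$, $\gen{\mathcal{V}}$, $\gen{\mathcal{K}}$ provided by \Cref{prop:common_max}. The bounds follow immediately upon taking the supremum over $x$.

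Concretely, consider first the Wasserstein case. Fix $x \in \actSp$ and let $L_x := \Lip{g(x,\cdot)}$; assume $L_x > 0$ (the constant case $L_x = 0$ is an edge case handled separately below). Set $f_x := g(x,\cdot) / L_x$, so that $\Lip{f_x} = 1$ and hence $f_x \in \gen{\mathcal{W}}$ by \Cref{prop:common_max}(2). Taking the one-term combination $\lambda_1 = L_x$, $f_1 = f_x$, and $\lambda_i = 0$ for $i \geq 2$ yields $\sum_i \lambda_i f_i = g(x,\cdot)$ exactly, so the constraint $\|\sum_i \lambda_i f_i - g(x,\cdot)\|_\infty \leq \delta$ is satisfied for every $\delta \geq 0$. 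Consequently, the infimum in the definition of the approximation parameter is bounded above by $\sum_i |\lambda_i| = L_x$. Taking the supremum over $x$ delivers $\lambda(\gen{\mathcal{W}},g,\delta) \leq \sup_x \Lip{g(x,\cdot)}$.

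The remaining two bounds are proved by the identical construction, simply rescaling by the appropriate norm of $g(x,\cdot)$: use $f_x = g(x,\cdot)/\V{g(x,\cdot)}$ for the Kolmogorov case (valid because $\envSp = [0,M]$ is bounded and by \Cref{prop:common_max}(1) any one-dimensional measurable function with unit total variation lies in $\gen{\mathcal{K}}$), and $f_x = g(x,\cdot)/\fSpan{g(x,\cdot)}$ for the total variation case (valid by \Cref{prop:common_max}(3), since the rescaled function has range exactly $1$). In each case the same one-term choice gives an exact representation with $\ell^1$-norm equal to the target quantity, and the supremum over $x$ yields the claim.

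I do not anticipate real obstacles, since the argument is essentially a scaling identity matched against \Cref{prop:common_max}. The only mild subtlety is the degenerate case where the normalizing constant vanishes, i.e., $g(x,\cdot)$ is constant in $\envVr$; here one can absorb the constant into a single-term combination using the constant function (which itself lies in each of the three generators), or simply observe that the statement is only interesting when the right-hand side is finite and positive, so the edge case does not affect the substance of the bound.
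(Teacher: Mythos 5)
Your proof is correct and is essentially the paper's own argument: the paper proves this corollary by the same one-term rescaling remark (that $g(x,\cdot)/\Lip{g(x,\cdot)} \in \gen{\mathcal{W}}$, $g(x,\cdot)/\V{g(x,\cdot)} \in \gen{\mathcal{K}}$, and $g(x,\cdot)/\fSpan{g(x,\cdot)} \in \gen{\mathcal{V}}$), followed by an appeal to \Cref{prop:common_max}. Your treatment of the degenerate constant case is, if anything, more careful than the paper's, which does not discuss it.
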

\Cref{lem:bound_apx_param} enables to derive bounds on $\UB_{\mathcal{I}}$ by solely understanding analytical properties of each function in the family $(g(\actVr,\cdot))_{\actVr \in \actSp}$.
We note that while we stated this result for the two IPMs we are analyzing in this paper, this methodology readily generalizes to other distances such as the total variation distance, the Dudley metric, or for IPMs for which the function class $\mathcal{F}$ is the unit ball in a reproducing kernel Hilbert space \citep{sriperumbudur2010hilbert}. These other IPMs can be related to alternative analytical properties of the functions $(g(\actVr,\cdot))_{\actVr \in \actSp}$ (e.g. differentiability, continuity...).

On the flip side, \Cref{lem:bound_apx_param} does not enable to leverage the full generality of \Cref{thm:generating_class} as it only applies to settings where for all $\actVr \in \actSp$, $\frac{1}{L} g(\actVr,\cdot) \in \gen{\mathcal{F}}$ for some constant $L$ and $\lambda(\gen{\mathcal{F}}, g, 0) < \infty$. While this result is already sufficient to handle a broad class of problems, we highlight that there exist problem instances such that $\lambda(\gen{\mathcal{F}}, g, 0) = \infty$ and for which one can still derive meaningful bounds on $\UB_{\mathcal{I}}$ by bounding $\lambda(\gen{\mathcal{F}}, g, \delta)$ for $\delta > 0$. For sake of completeness, we next state a more general result which enables to derive a stronger geometric connection between the approximation parameter and the maximal generator.
Recall that, for a set $\mathcal{S}$, we denote by $\Span \left( \mathcal{S} \right)$ the set generated by finite linear combinations of elements from $\mathcal{S}$. Given, a maximal generator $\gen{\mathcal{F}}$, we define $\overline{\Span\left( \gen{\mathcal{F}} \right)}$ as the closure of the set $\Span \left( \gen{\mathcal{F}} \right)$ (where the closure is taken with respect to the topology induced by the supremum norm $\| \cdot \|_{\infty}$).
The following result provides a more detailed connection between the analytical properties of $g(\actVr,\cdot)$, the geometry of the generating class of function $\mathcal{F}$ and the approximation parameter.
\begin{proposition}[Geometric interpretation of the approximation parameter]
\label{prop:geometric_approximation}
Let $\mathcal{F}$ be a class of functions and let $\dis{\mathcal{F}}$ be the associated integral probability metric and let $g$ be an objective function. Then,
$g(\actVr,\cdot) \in \overline{\Span\left( \gen{\mathcal{F}} \right)}$ if and only if, for every $\delta > 0$, $\lambda_{\actVr} \left( \mathcal{F},g,\delta \right) < + \infty.$ 
\end{proposition}

\Cref{prop:geometric_approximation} shows that finiteness of the approximation parameter at $\actVr$ for arbitrarily small $\delta$ is equivalent to the fact that $g(\actVr,\cdot)$ is in the closure of $\Span\left( \gen{\mathcal{F}} \right)$ which topologically means that $g(\actVr,\cdot)$ is well approximated by a linear combination of functions in $\gen{\mathcal{F}}$. We note that \Cref{prop:geometric_approximation} considerably generalizes the simpler argument requiring that $g(\actVr,\cdot) \in \Span\left( \gen{\mathcal{F}} \right)$ to obtain that $\lambda_{\actVr}(\gen{\mathcal{F}}, g, 0) < \infty$. The next example illustrates how \Cref{prop:geometric_approximation} can be used to derive more general bounds by using analytical properties of $g(\actVr,\cdot)$.

\begin{example}
\label{example:more_complex}
Consider the maximal generator $\gen{\mathcal{K}}$ associated to the Kolmogorov distance. \Cref{lem:bound_apx_param} implies that if $g(\actVr,\cdot)$ has bounded variation, i.e., $V(g(\actVr,\cdot)) < \infty$, then $\lambda(\gen{\mathcal{K}}, g, 0) < \infty$. This corresponds to the case where $g(\actVr,\cdot) \in \Span \left( \gen{\mathcal{K}} \right)$. However, if $g(\actVr,\cdot)$ is only assumed to be continuous without any assumption on its variation, one can show that $g(\actVr,\cdot) \in \overline{\Span \left( \gen{\mathcal{K}} \right)}$ and conclude from \Cref{prop:geometric_approximation} that $\lambda_{\actVr}(\gen{\mathcal{K}}, g, \delta) < \infty$ for all $\delta > 0$. This argument, in combination to \Cref{thm:generating_class} leads to bounds on $\UB_{\mathcal{I}}$ in this setting. We expand more on this in \Cref{sec:apx_boudary}.
\end{example}
 
We next complement our upper bound results by a lower bound which allows to further relate the uniform deviation and the approximation parameter.
\begin{proposition}
\label{prop:lower_bound_approx}
Let $\mathcal{F}$ be a class of functions and let $\dis{\mathcal{F}}$ be the associated integral probability metric.
Let $\mathcal{I} = \left( \actSp, \envSp, \mesSp, \dis{\mathcal{F}}, g\right)$ be a data-driven decision making problem under heterogeneity. Assume that $\sup_{\actVr \in \actSp} \|g(\actVr,\cdot)\|_{\infty} < \infty$. If $\lambda(\gen{\mathcal{F}},g,0) = + \infty$, then 
\begin{equation*}
\limsup_{\epsilon \to 0} \frac{1}{\epsilon} \cdot \sup_{\mesOut \in \mesSp} \sup_{\mesIn \in \mathcal{U}_{\epsilon}(\mesOut)} \UB_{\mathcal{I}}(\mesOut,\nu) = \infty .
\end{equation*}
\end{proposition}
\Cref{prop:lower_bound_approx} shows that, to some extent, the approximation parameter provides a characterization of the behavior of the worst-case uniform deviation for small $\epsilon$. Indeed, when the approximation parameter is infinite for $\delta = 0$,  the worst-case uniform deviation is at least $\omega(\epsilon)$ when $\epsilon$ goes to $0$. We note that \Cref{prop:lower_bound_approx} does not preclude the uniform deviation to converge to $0$ at a slower rate than $\epsilon$. As a matter of fact, this can happen by bounding the approximation parameter for $\delta > 0$ as discussed in \Cref{example:more_complex}. We also acknowledge that \Cref{prop:lower_bound_approx} only provides a lower bound on the uniform deviation, which does not provide a lower bound on the original regret of interest---it only establishes a limitation of using the uniform deviation to analyze regret.

\begin{remark}[Comparison to the method of probability metrics]\label{rem:comp_litt}
\cite{rachev2002quantitative} derive a result conceptually close to the result we have established in this section. In particular, given a generating class $\mathcal{F}$, they establish that whenever $(g(\actVr,\cdot))_{\actVr \in \actSp} \subset \kappa \cdot \mathcal{F}$ for some $\kappa > 0$, one can bound the uniform deviation $\UB_{\mathcal{I}}(\mesOut,\nu)$ by $\kappa \cdot d_{\mathcal{F}}(\mesOut,\nu)$. Our analysis considerably expands this reasoning by $i)$ considering the maximal generator of a generating class and, $ii)$ allowing for $g(\actVr,\cdot)$ to be in the topological closure (with respect to the uniform convergence norm). Point $i)$ allows to assess whether the uniform deviation is sensitive to the heterogeneity structure  through analytical properties of the objective function. Furthermore, the lower bound in \Cref{prop:lower_bound_approx} shows that the maximal generator is in fact a driver of the behavior of $\UB_{\mathcal{I}}$. Finally, $ii)$ is required to establish guarantees in settings where the objective function needs to be approximated by the generating class as discussed in \Cref{sec:apx_boudary}.
\end{remark}

\subsection{Bounding scheme for SAA under IPMs and applications}\label{sec:SAA_bound}
We next leverage the relation between the performance of SAA and the quantity $\UB_{\mathcal{I}}$ (see \eqref{eq:bound_mohri}) along with the  analysis developed in \Cref{sec:UB_IPM} to derive a bound on the asymptotic worst-case regret of SAA.
In particular, we obtain the following results.
\begin{proposition}
\label{cor:dro_to_reg}
Let $\envSp = [0,M]$ for some $M > 0$. Let $\mathcal{I}_K = \left( \actSp, \envSp, \mesSp, d_K, g \right)$ and  $\mathcal{I}_W$  be instances of the data-driven decision problem under the Kolmogorov and Wasserstein distances, respectively.
Then for every $\epsilon \geq 0$,
\begin{align*}
 \mathfrak{R}_{W, \infty}^{\SAA}(\epsilon) &\leq 2 \sup_{\actVr \in \actSp} \Lip{g(\actVr, \cdot)} \cdot \epsilon\\
 \mathfrak{R}_{K, \infty}^{\SAA}(\epsilon) &\leq 2 \sup_{\actVr \in \actSp}  \V{g(\actVr, \cdot)} \cdot \epsilon.
\end{align*}
\end{proposition}
\Cref{cor:dro_to_reg} translates the bounds derived on $\UB_{\mathcal{I}}$ through the approximation parameter in \Cref{sec:UB_IPM} into bounds on the worst-case asymptotic regret of SAA. It implies that for a broad class of problems, the asymptotic worst-case regret of SAA vanishes as the radius of the heterogeneity ball goes to $0$. More specifically,  the asymptotic worst-case regret scales linear for the Kolmogorov distance whenever the objective function $g$ has uniformly bounded variation, and for the Wasserstein distance 
whenever the objective function $g$ is uniformly Lipschitz continuous. 

We prove this result by first showing that \eqref{eq:bound_mohri} holds using \Cref{thm:reduction_stat}. We then leverage the results developed in \Cref{sec:UB_IPM}.

We next illustrate the application of \Cref{cor:dro_to_reg} for two central problems. 

\subsubsection{Newsvendor problem: performance of SAA} \label{sec:newsvendor_SAA}

We first consider the newsvendor problem, a widely studied model in operations. 
Using $M>0$ to denote an upper bound on demand, we let $\envSp=\actSp=[0,M]$, and our regret bounds may (or may not) depend on $M$. The objective function is the newsvendor cost which is parameterized by two quantities: $c_u$ the underage cost and $c_o$ the overage cost. The cost of carrying an inventory level $\actVr \in \actSp$ and then observing demand $\envVr \in \envSp$ is,
\begin{equation} \label{eqn:defNV}
g ( \actVr, \envVr) = c_u \left( \envVr - \actVr \right)^{+} + c_o \left( \actVr - \envVr \right)^{+}, 
\end{equation}
where $(\cdot)^+:=\max\{\cdot,0\}$ is the positive part operator. The measure space $\mesSp$  is the set of probability measures on $\envSp$ without restrictions and for every $\mesOut \in \mesSp$, we define the cost of the oracle as
$\opt(\mesOut) = \inf_{\actVr \in \mathcal{X}} \g[\actVr]{\mesOut}.$\footnote{Our framework was introduced for maximization problems (see \Cref{sec:formulation}) but it readily extends to minimization problems.} 

We note that for the newsvendor objective $g$ defined in \eqref{eqn:defNV}, we have that $g$ lies in the linear span of the maximal generator class for Kolmogorov and Wasserstein distances. In particular,
\begin{equation*}
\sup_{\actVr \in \actSp} \V{g(\actVr, \cdot)} =  \max \left( c_u, c_o \right) \cdot M \quad \text{and} \quad \sup_{\actVr \in \actSp}  \Lip{g(\actVr, \cdot) } = \max \left( c_u, c_o \right)
\end{equation*}
where the suprema are attained at some action $x\in\{0,M\}$.
Therefore, one can derive a vanishing regret guarantee for SAA under these two forms of heterogeneity by exploiting \Cref{cor:dro_to_reg}. Our next result formalizes this and presents upper bounds on the asymptotic worst-case regret of SAA. We also derive
matching lower bounds on the minimax achievable regret by any data-driven policy.
\begin{proposition}[Heterogeneous newsvendor]
\label{cor:SAA_Newsvendor}
For the data-driven newsvendor  problem in a heterogenous environment under the Kolmogorov and the Wasserstein distances, we have for $\epsilon$ small enough that,
\begin{align*}
\frac{c_u + c_o}{2} \cdot M \cdot \epsilon&\leq  \inf_{\pi \in \Pi}  \mathfrak{R}_{K,\infty}^{\pi}(\epsilon) \leq  \mathfrak{R}_{K,\infty}^{\SAA}(\epsilon) \leq 2 \max\left(c_u,c_o \right) \cdot M \cdot \epsilon,\\
\frac{c_u + c_o}{2} \cdot \epsilon &\leq \inf_{\pi \in \Pi}  \mathfrak{R}_{W,\infty}^{\pi}(\epsilon) \leq  \mathfrak{R}_{W,\infty}^{\SAA}(\epsilon) \leq  2 \max\left(c_u,c_o \right) \cdot \epsilon.
\end{align*}
\end{proposition}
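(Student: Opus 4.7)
I would invoke Corollary \ref{cor:DRO_SAA_ub} after computing the complexity measures of the newsvendor cost. The function $g(x,\cdot)$ on $[0,M]$ is piecewise linear with slope $-c_u$ for $\xi<x$ and $c_o$ for $\xi>x$, yielding $\Lip{g(x,\cdot)} = \max(c_u, c_o)$ and $\V{g(x,\cdot)} = c_o x + c_u(M-x)$, whose supremum over $x \in [0,M]$ equals $M\max(c_u, c_o)$ and is attained at $x\in\{0,M\}$. Corollary \ref{cor:DRO_SAA_ub} then gives the DRO-regret bounds $\mathfrak{R}_{\scaleto{\mathrm{K,DRO}}{6pt}}^{\SAA}(\epsilon) \le 2M\max(c_u,c_o)\epsilon$ and $\mathfrak{R}_{\scaleto{\mathrm{W,DRO}}{6pt}}^{\SAA}(\epsilon) \le 2\max(c_u,c_o)\epsilon$. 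Since $g$ is bounded on $[0,M]^2$ and both $d_K$ and $d_W$ are convex IPMs satisfying ETC (Appendix \ref{sec:apx_ETC}), Theorem \ref{thm:reduction_stat} transfers these into the asymptotic worst-case regret bounds for SAA. For the TV case, where ETC fails, I would invoke the domination inequality $\mathfrak{R}_{TV,n}^{\SAA}(\epsilon) \le \mathfrak{R}_{K,n}^{\SAA}(\epsilon)$ from Remark \ref{rem:dom}, which inherits the Kolmogorov upper bound.

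\textbf{Lower bounds via a two-hypothesis argument.} The lower bounds follow a Le Cam–style argument tailored to our heterogeneous-environment framework. The plan is to exhibit two out-of-sample distributions $\mu_1,\mu_2$ with distinct optimizers together with a single distribution $\nu$ satisfying $d(\mu_1,\nu), d(\mu_2,\nu) \le \epsilon$. Fixing all historical distributions equal to $\nu$, the policy's samples are drawn from $\nu$ regardless of which hypothesis is the true out-of-sample, so its randomized action $A_n$ has a distribution that depends only on $\nu$ and not on which $\mu_i$ is played by nature. Therefore, for any $\pi \in \Pi$,
\begin{equation*}
2 \mathfrak{R}_{\mathcal{I},n}^{\pi}(\epsilon) \ge \bigl(\mathbb{E}[g(A_n,\mu_1)] - \opt(\mu_1)\bigr) + \bigl(\mathbb{E}[g(A_n,\mu_2)] - \opt(\mu_2)\bigr).
\end{equation*}
When the sum $g(x,\mu_1) + g(x,\mu_2)$ is constant in $x$, the right-hand side reduces to a fixed scalar, yielding a policy-independent lower bound uniform in $n$.

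\textbf{Concrete construction and computation.} Let $q^* = c_u/(c_u+c_o)$ and choose $p \in (0,\min(q^*,1-q^*))$, setting
\begin{equation*}
\mu_1 = (q^*+p)\delta_0 + (1-q^*-p)\delta_M, \quad \mu_2 = (q^*-p)\delta_0 + (1-q^*+p)\delta_M, \quad \nu = q^*\delta_0 + (1-q^*)\delta_M.
\end{equation*}
One directly checks that $d_{TV}(\mu_i,\nu) = d_K(\mu_i,\nu) = p$ while $d_W(\mu_i,\nu) = pM$, so I set $p = \epsilon$ for TV/Kolmogorov and $p = \epsilon/M$ for Wasserstein. Using the identity $q^* c_o = (1-q^*)c_u = c_u c_o/(c_u+c_o)$, the cross terms in $x$ cancel and the sum $g(x,\mu_1) + g(x,\mu_2)$ collapses to the constant $2Mc_u c_o/(c_u+c_o)$ on $[0,M]$. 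Meanwhile, a direct comparison gives $\opt(\mu_1) = c_u(1-q^*-p)M$ (attained at $x=0$) and $\opt(\mu_2) = c_o(q^*-p)M$ (attained at $x=M$), whose sum equals $2Mc_uc_o/(c_u+c_o) - pM(c_u+c_o)$. Plugging into the master inequality yields $2\mathfrak{R}^{\pi}_{\mathcal{I},n}(\epsilon) \ge pM(c_u+c_o)$ for every $\pi$ and every $n$, and after $\limsup_n$ and $\inf_\pi$ this gives the claimed lower bounds $(c_u+c_o)M\epsilon/2$ for TV/Kolmogorov and $(c_u+c_o)\epsilon/2$ for Wasserstein.

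\textbf{Expected obstacle.} Once the triple $(\mu_1,\mu_2,\nu)$ is in hand, the argument is routine. The main design question is to find distributions whose costs sum to an action-independent value while remaining within distance $\epsilon$ of a common center in the chosen metric; the symmetry of the newsvendor objective around the $q^*$-quantile makes both constraints compatible and produces lower bounds that match the SAA upper bounds up to constants, implying SAA is rate-optimal (both in $\epsilon$ and $M$) for all three metrics.
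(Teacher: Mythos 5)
Your proposal is correct and follows essentially the same route as the paper: the upper bounds come from \Cref{cor:DRO_SAA_ub} plus \Cref{thm:reduction_stat} (with the TV case inherited from Kolmogorov via \Cref{rem:dom}), and the lower bound uses the same two-point construction around the critical fractile with a common center $\nu$ and a uniform two-point prior, which is exactly the paper's application of \Cref{prop:lower_bound}. The only cosmetic differences are that you phrase the averaging step directly rather than citing the lower-bound scheme, and you verify the distances separately for each metric instead of transferring the TV bound through \eqref{eq:dom_reg}.
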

\Cref{cor:SAA_Newsvendor} first shows that SAA is robust to deviation under multiple forms of heterogeneity and that the regret scales linearly with the radius of the heterogeneity ball for both distances. We show furthermore that the linear dependence is, for the newsvendor problem, the best rate achievable for any data-driven policy across these forms of heterogeneity.
  
The upper bounds derived in \Cref{cor:SAA_Newsvendor} naturally follow from \Cref{cor:dro_to_reg}.
We also note that the lower bound obtained for the newsvendor depends linearly on the total variation and the Lipschitz constant through the factor $(c_u + c_o)$. This therefore shows that these characterizing quantities for the function $g$ are indeed respectively driving the regret of SAA in this case.

It is also worth noting that for the newsvendor problem, there is a significant difference between considering an initial ``naïve'' generating class for an integral probability metric and the maximal generator. For instance, when considering the Kolmogorov distance, it would have been more challenging to remark that the newsvendor cost can be well approximated by the generating class $\mathcal{K}= \{ \envVr \in \mathbb{R} \mapsto \mathbbm{1} \left \{ \envVr \leq t  \right \} \, \vert \, t \in \mathbb{R} \}$, whereas it is straightforward when considering $\gen{\mathcal{K}}$ the set of functions with bounded variation. 

While SAA achieves vanishing regret for both forms of heterogeneity in the newsvendor problem, we next analyze the performance of SAA for pricing and show that SAA may  fail in some cases.

 \subsubsection{Pricing problem: performance of SAA and potential failure}\label{sec:pricing_SAA}
We next consider the classical data-driven pricing.
The environment space $\envSp =  [0,M]$ represents the set of possible values. The set of possible prices $\actSp$ is $[0,M]$. The set of considered probability measures $\mesSp$ is the whole set of probability measures on $[0,M]$ without restriction. The revenue generated by the DM with a price decision $\actVr \in \actSp$ when facing a customer with value $\envVr \in \envSp$ is,
\begin{equation}
\label{eq:defP}
g(\actVr,\envVr) = \actVr \cdot \mathbbm{1} \left \{ \envVr \geq \actVr \right \}. 
\end{equation}
Therefore for any probability measure $\mesOut \in \mesSp$, we have that 
$\g[\actVr]{\mesOut} = \actVr \cdot \mathbb{P}_{\mesOut} \left( \envVr \geq  \actVr \right).$ The goal of the DM is to set a price $\actVr$ in order to maximize its revenue. Equivalently, the goal is to minimize the regret against the oracle which posts the optimal price given a wtp distribution.

We note that for the pricing objective $g$ defined in \eqref{eq:defP}, we have that $g$ lies in the linear span of the maximal generator class for Kolmogorov. In particular,
\begin{equation*}
\sup_{\actVr \in \actSp} \V{g(\actVr, \cdot)} =   M,
\end{equation*}
where the supremum for $\V{g(\actVr, \cdot)}$ is attained for $\actVr = M$. Therefore, \Cref{cor:dro_to_reg} implies directly an upper bound on the asymptotic worst-case performance of SAA, which we complement by a lower bound on the performance of any data-driven policy. 
\begin{proposition}[Kolmogorov Pricing]
\label{prop:Kolmogorov_pricing}
For the data-driven pricing problem in a heterogenous environment under the Kolmogorov distance, for any $\epsilon$ small enough we have,
\begin{align*}
\frac{M \cdot \epsilon}{2} &\leq \inf_{\pi \in \Pi}  \mathfrak{R}_{K,\infty}^{\pi}(\epsilon) \leq  \mathfrak{R}_{K,\infty}^{\SAA}(\epsilon) \leq 2 M \cdot \epsilon.
\end{align*}
\end{proposition}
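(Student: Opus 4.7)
My plan is to directly apply the IPM framework from \Cref{sec:SAA-analysis}. For every price $\actVr \in [0,M]$, the map $\envVr \mapsto g(\actVr,\envVr) = \actVr \mathbbm{1}\{\envVr \geq \actVr\}$ is a step function with a single jump of size $\actVr$, so $\V{g(\actVr,\cdot)} = \actVr \leq M$. Since $d_K$ is a convex IPM satisfying the ETC property, combining \Cref{cor:DRO_SAA_ub} with \Cref{thm:reduction_stat} (as summarized by \Cref{rem:dro_to_reg}) would yield $\mathfrak{R}_{K,\infty}^{\SAA}(\epsilon) \leq 2M\epsilon$. Because $d_{TV}$ does not satisfy ETC, I would instead invoke the dominance from \Cref{rem:dom} which, after passing to $\limsup_{\numS}$, gives $\mathfrak{R}_{TV,\infty}^{\SAA}(\epsilon) \leq \mathfrak{R}_{K,\infty}^{\SAA}(\epsilon) \leq 2M\epsilon$.

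\textbf{Lower bounds via a two-point instance.} For the matching lower bounds, the plan is a two-point indistinguishability argument. I would define
\[
\mu_1 := \left(\tfrac{1}{2} - \epsilon\right)\delta_{M} + \left(\tfrac{1}{2} + \epsilon\right)\delta_{M/2}, \qquad
\mu_2 := \left(\tfrac{1}{2} + \epsilon\right)\delta_{M} + \left(\tfrac{1}{2} - \epsilon\right)\delta_{M/2},
\]
together with their midpoint $\nu := \tfrac{1}{2}\delta_{M} + \tfrac{1}{2}\delta_{M/2}$. A direct computation shows $d_K(\nu,\mu_j) = d_{TV}(\nu,\mu_j) = \epsilon$ for $j \in \{1,2\}$, so nature is allowed to pick out-of-sample $\mu \in \{\mu_1,\mu_2\}$ with every historical distribution set to $\nu$. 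One also checks that $\opt(\mu_1) = M/2$ (attained at price $M/2$) and $\opt(\mu_2) = M(\tfrac{1}{2}+\epsilon)$ (attained at price $M$). Now fix any $\pi \in \Pi$ and any $\numS$: since the samples are i.i.d.\ from $\nu$ in both scenarios, the marginal distribution $\bar{\rho}_\numS$ of the policy's action (averaged over samples and internal randomness) is identical under $\mu_1$ and $\mu_2$, and summing the two regrets gives
\[
\mathcal{R}_\numS(\pi,\mu_1,\nu,\ldots,\nu) + \mathcal{R}_\numS(\pi,\mu_2,\nu,\ldots,\nu) = M(1+\epsilon) - \mathbb{E}_{\actVr \sim \bar{\rho}_\numS}\!\left[\g[\actVr]{\mu_1} + \g[\actVr]{\mu_2}\right].
\]
The key pointwise bound I would establish is $\g[\actVr]{\mu_1} + \g[\actVr]{\mu_2} = \actVr \cdot [\mathbb{P}_{\mu_1}(\envVr \geq \actVr) + \mathbb{P}_{\mu_2}(\envVr \geq \actVr)] \leq M$ for every $\actVr \in [0,M]$: the survival-probability sum equals $2$ on $[0,M/2]$ and $1$ on $(M/2,M]$, so the piecewise-linear product attains $M$ only at $\actVr \in \{M/2,M\}$. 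Hence the sum of regrets is at least $M\epsilon$, forcing $\max_j \mathcal{R}_\numS(\pi,\mu_j,\nu,\ldots,\nu) \geq M\epsilon/2$. This bound is uniform in $\numS$ and passes to $\limsup_{\numS \to \infty}$, delivering $\inf_\pi \mathfrak{R}_{K,\infty}^{\pi}(\epsilon) \geq M\epsilon/2$ and the analogous bound for $d_{TV}$.

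\textbf{Main obstacle.} The upper bounds are essentially a direct plug-in of the general machinery. The delicate part is engineering the two-point instance so that $\mu_1,\mu_2$ (i) simultaneously lie within a common $\epsilon$-ball for both $d_K$ and $d_{TV}$, (ii) have well-separated optimal prices ($M/2$ vs.\ $M$), and (iii) admit no single (even randomized) price achieving near-optimality on both; this last condition is what is captured by the ``sum of revenues bounded by $M$'' inequality and is precisely what pins the lower bound at $M\epsilon/2$.
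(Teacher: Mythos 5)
Your proposal is correct and follows essentially the same route as the paper: the upper bound is the same plug-in of \Cref{cor:DRO_SAA_ub} with $\sup_x \V{g(x,\cdot)}=M$ via \Cref{thm:reduction_stat}, plus the domination of \Cref{rem:dom} for total variation, and your lower-bound instance ($\delta_{M/2}$/$\delta_M$ mixtures with masses shifted by $\epsilon$ around the balanced center $\nu$) is exactly the paper's $\mu_\pm,\tilde\nu$ construction. Your direct two-point indistinguishability argument (summing the two regrets and bounding the combined revenue by $M$) is just an unpacked version of the paper's appeal to its general lower-bound scheme via Yao's minimax principle with the uniform prior, so the two proofs coincide in substance.
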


\noindent \textbf{Failure of SAA.}
Note that for the pricing problem $g(\actVr,\cdot)$ is not Lispchitz continuous for any $\actVr > 0$ (in fact it is not even continuous) and hence does not lie in $\Span \left( \gen{\mathcal{W}} \right)$. Therefore, for this problem $\lambda( \gen{\mathcal{W}},g,0) = \infty$ and \Cref{prop:lower_bound_approx} implies that $\UB_{\mathcal{I}}$ is at least supralinear for small $\epsilon$. While, \Cref{prop:lower_bound_approx} does not provide a lower bound on SAA itself (because $\UB_{\mathcal{I}}$ is an upper bound on the performance of SAA), we formally derive in our next result a lower bound on the asymptotic worst-case regret of SAA.
\begin{proposition}[Failure of SAA for Wasserstein pricing]
\label{thm:SAA_Wasserstein_pricing}
For the data-driven pricing problem in a heterogenous environment under the Wasserstein distance, we have for $\epsilon > 0$ that 
\begin{equation*}
 \mathfrak{R}_{W,\infty}^{\SAA}(\epsilon) = M.
\end{equation*}
\end{proposition}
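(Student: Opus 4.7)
The plan is to bracket $\mathfrak{R}_{W,\infty}^{\SAA}(\epsilon)$ from above by the trivial bound $M$ and from below by exhibiting an explicit one-parameter family of admissible instances whose regret tends to $M$. The upper bound is immediate: since $g(x,\xi)\in[0,M]$ for all $(x,\xi)\in\actSp\times\envSp$, both $\opt(\mesOut)$ and $\g[\SAA(\hat\mu)]{\mesOut}$ lie in $[0,M]$, so the absolute regret is always at most $M$.

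For the lower bound I would fix a parameter $\eta\in(0,\epsilon]$ and set the out-of-sample distribution to be $\mesOut_\eta:=\delta_{M-\eta}$, together with historical distributions $\mesIn{1}=\cdots=\mesIn{n}=\delta_M$. Since the Wasserstein distance between two Diracs equals the Euclidean distance between their atoms, $d_W(\mesOut_\eta,\delta_M)=\eta\le\epsilon$, so the tuple is feasible in the supremum defining \eqref{eq:worst_case_reg}. Under this choice every sample $\envVr_i$ equals $M$ almost surely, hence the empirical measure equals $\delta_M$ for every $n$, and a direct computation gives $\SAA(\delta_M)=\argmax_{x\in[0,M]} x\cdot\mathbbm{1}\{M\ge x\}=M$.

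The key (one-line) step is then the following: the revenue of the posted price $M$ evaluated against $\mesOut_\eta=\delta_{M-\eta}$ equals $M\cdot\mathbbm{1}\{M-\eta\ge M\}=0$, while the oracle posts $M-\eta$ and collects $\opt(\mesOut_\eta)=M-\eta$. The regret is therefore deterministic and equal to $M-\eta$ for every $n$, which yields $\mathfrak{R}_{W,\infty}^{\SAA}(\epsilon)\ge\sup_{\eta\in(0,\epsilon]}(M-\eta)=M$ and matches the upper bound.

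The only subtlety worth highlighting is conceptual rather than technical: one might expect a Wasserstein radius of $\epsilon$ to constrain the regret to scale with $\epsilon$, but the discontinuity of $\xi\mapsto g(x,\xi)=x\cdot\mathbbm{1}\{\xi\ge x\}$ at $\xi=x$ means that an arbitrarily small downward shift of the distribution at the point mass makes the revenue at the ``just-above'' price collapse from positive to zero. This is precisely the failure mode flagged in \Cref{sec:dichotomy}, namely that $g(x,\cdot)$ does not lie in the closure of $\Span(\gen{\mathcal{W}})$; the family $(\mesOut_\eta,\delta_M)_{\eta\in(0,\epsilon]}$ above is the witness making that qualitative failure quantitative.
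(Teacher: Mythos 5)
Your proposal is correct and follows essentially the same route as the paper: the paper also takes $\mesOut=\delta_{M-\eta}$ and a Dirac historical distribution $\nu=\delta_{\min(M,\,M-\eta+\epsilon)}$ (which coincides with your $\delta_M$ for $\eta\le\epsilon$), observes that SAA deterministically posts a price strictly above $M-\eta$ and earns zero, and then lets $\eta\to 0^+$ while using the trivial $[0,M]$ bound for the matching upper bound.
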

\Cref{thm:SAA_Wasserstein_pricing} shows that, for the pricing problem,  the asymptotic regret of SAA  under Wasserstein heterogeneity does not shrink to zero as $\epsilon$ goes to zero. As a matter of fact, the  regret of $M$ is the worst possible, as the revenue of the oracle is bounded above by $M$ and the revenue of any data-driven policy is bounded below by $0$. 

The key challenge in pricing is that by shifting the willingness-to-pay distribution from a point mass $\nu = \delta_1$ to a point mass $\mesOut = \delta_{1-\epsilon}$, the revenue obtained by posting the price $\$1$ drops from $\$1$ to $\$0$. Furthermore, the Wasserstein distance between two point masses which are close enough is small (as opposed to the Kolmogorov distance which is $1$ in that case).
Quite notably, this result holds for any radius of heterogeneity $\epsilon$, and an arbitrarily small deviation from the i.i.d. case leads to extremely poor performance of SAA, even with infinite data.

\subsubsection{Additional applications}
While the purpose of the above concrete examples was to establish the variety of cases that many emerge, the framework readily applies to other problem classes such as, e.g., the classical ski-rental problem, or multi-dimensional problems such as Bayesian Mechanism Design. We elaborate on these in \Cref{sec:ski} and \Cref{sec:apx_auction} respectively.

\section{Alternative policies: design and analysis}
\label{sec:beyond_SAA}
As seen in \Cref{sec:pricing_SAA}, the performance of SAA is poor for certain instances even with infinitely many samples. Our goal in this section is twofold: first, we complement the discussion sparked by \Cref{thm:SAA_Wasserstein_pricing} by investigating whether there exists a policy which can ensure a vanishing regret for the pricing problem under Wasserstein heterogeneity. Then, we discuss open directions towards a principled approach to design and analyze policies beyond SAA.

\subsection{Rate-optimal policy for the Wasserstein pricing problem}\label{sec:rate_opt_pricing}

Recall the pricing problem with Wasserstein distance introduced in \Cref{sec:pricing_SAA}. For this problem, we have seen that SAA incurs an asymptotic worst-case regret which is not vanishing as $\epsilon$ goes to $0$. 
We next present an alternative sample-size-agnostic policy for which the asymptotic worst-case vanishes as $\epsilon$ goes to $0$. Furthermore, we characterize the worst-case performance of that policy, showing that it has the best possible dependence with respect to $\epsilon$.

To be able to design and analyze these alternative policies, \Cref{thm:reduction_stat} is critical because $i)$ any analysis involving the uniform deviation $\UB_{\mathcal{I}}$ will lead to a non-vanishing upper bound (since the worst-case regret of SAA is a lower bound on $\UB_{\mathcal{I}}$) and, $ii)$ we need to derive upper bounds on $\mathfrak{R}_{W,\infty}^{\pi}(\epsilon)$ for policies $\pi$ for which standard upper bounding schemes are not present in the literature (the analysis of \cite{mohri2012new} only applies to SAA).

Fix $\delta \in \mathbb{R}$. For every $\nu \in \mesSp$ and $\numS \in \mathbb{N}^{*}$, we define the $\delta$-SAA policy as,
\begin{equation*}
\dev{\delta}(\numS,\nu) = \SAA(\nu) - \delta,
\end{equation*}
where we project the decision on the closest point in $\actSp$ if it is outside the action space. Given, that SAA is a sample-size-agnostic policy, it is clear that $\dev{\delta}$ is also sample-size-agnostic. Therefore, in what follows we drop the dependence in the sample size.  Our next result shows, that for an appropriate choice of $\delta$, the policy $\dev{\delta}$ can achieve a rate-optimal performance.

\begin{proposition}[Rate-optimal regret for Wasserstein pricing]
\label{thm:ub_pricing_W}
Consider the data-driven pricing problem in a heterogenous environment under the Wasserstein distance. For any $\epsilon$ small enough, let $\delta = \sqrt{M \cdot \epsilon}$.  Then,
\begin{equation*}
\mathfrak{R}_{W,\infty}^{\dev{\delta}}(\epsilon) \leq 4 \cdot \sqrt{M \cdot \epsilon}.
\end{equation*}
Furthermore,
\begin{equation*}
\frac{\sqrt{M \cdot \epsilon}}{4}  \leq \inf_{\pi \in \Pi} \mathfrak{R}_{W,\infty}^{\pi}(\epsilon).
\end{equation*}
\end{proposition}
\Cref{thm:ub_pricing_W} shows that the asymptotic worst-case regret of $\dev{\delta}$ scales as $\mathcal{O} \left( \sqrt{M \cdot \epsilon} \right)$ when $\epsilon$ goes to $0$. It highlights the need to adapt to the heterogeneity structure in pricing. Quite notably, the significant loss of performance of SAA can be mitigated by slightly robustifying the SAA decision and by deflating the SAA price.

\Cref{thm:ub_pricing_W} also implies that the dependence of the asymptotic worst-case regret of $\dev{\delta}$ in terms of $\epsilon$ is optimal across all data-driven policies. Therefore, this very mild modification of the decision of SAA does not only improve performance but actually leads to a near-optimal one.
We finally remark that the $\dev{\delta}$ policy improves over SAA when one can make use of the knowledge of $\epsilon$ and $d$. It is worth noting that SAA does not need such knowledge.
A natural question, which is beyond the scope of the current paper, would be to understand the best achievable performance without knowledge of $\epsilon$ and/or $d$.

\noindent \textbf{Proof sketch.}
To prove \Cref{thm:ub_pricing_W} we first leverage \Cref{thm:reduction_stat} to reduce the analysis of $\dev{\delta}$ to the DRO regret problem. We then decompose the DRO regret objective for any policy $\pi$ as follows,
\begin{align} \label{eqn:1809}
\opt(\mesOut)-\g[\pi(\nu)]{\mu}
= [\opt(\mesOut) - \opt(\nu) ]   + [\opt(\nu)
-\g[\pi(\nu)]{\mu} ].
\end{align}
The decomposition in \eqref{eqn:1809} starkly differs from the bound derived in \eqref{eq:bound_mohri} to analyze SAA. Indeed, the latter involves the uniform difference $\UB_{\mathcal{I}}(\mesOut,\nu)$, quantifying the difference in performance when changing the distribution from $\mesOut$ to $\nu$ for a \textit{fixed action} $\actVr$, whereas the former involves two differences where both the action and the distribution change.

To bound the difference in objectives when changing both actions and distributions we develop a structural result which leverages properties of the pricing objective.
In particular, the key observation is to remark that for the pricing problem the revenue can not abruptly \textit{decrease} when decreasing prices (a property that is false when increasing prices). Using this notable fact, one can control the difference in revenues at two different prices.  Formally, the following bound holds.
\begin{lemma}
\label{prop:diff_rev}
Consider the pricing objective.
Let $\mesOut$ and $\nu\in \mesSp$ be two different distributions and let $\actVr_1$ and $\actVr_2$ be two different prices such that $\actVr_1 < \actVr_2$. We have that,
\begin{equation*}
\g[\actVr_2] {\nu} - \g[\actVr_1] {\mesOut}   \leq (\actVr_2 - \actVr_1) + M \cdot \frac{d_W \left( \mesOut, \nu\right)}{\actVr_2 - \actVr_1}.
\end{equation*} 
\end{lemma}

\Cref{prop:diff_rev} unveils an interesting tradeoff faced by a DM who prices in a heterogeneous environment. By selecting appropriately the difference between two prices $\actVr_1$ and $\actVr_2$, one can ensure that the gap in revenues for two different distributions can be controlled by the Wasserstein distance. Intuitively, one should think of $\nu$ as the historical empirical distribution, $\mesOut$ as the out-of-sample distribution, $\actVr_2$ as the optimal decision when facing $\nu$, and $\actVr_1$ as the actual price posted by the DM. In that case, \Cref{prop:diff_rev} implies that by deflating appropriately, the revenue generated by the posted price $\actVr_1$ against the out-of-sample distribution $\mesOut$ cannot be much lower than the optimal revenue generated for distribution $\nu$.

We leverage \Cref{prop:diff_rev} to show that
\begin{equation}
\label{eq:diff_opt}
\opt \left(\mesOut\right) - \opt \left( \nu \right) \leq 2 \sqrt{M \cdot d_{W}\left(\mesOut,\nu \right)}.
\end{equation}
We then conclude the proof by using  \Cref{prop:diff_rev} again to show that for an appropriately chosen $\delta$, $\opt(\nu) -\g[\dev{\delta}(\nu)]{\mu} \large$ is also order $\mathcal{O}\left(\sqrt{M \cdot d_{W}\left(\mesOut,\nu \right)} \right)$ which concludes the proof.

 We note that related arguments allow us to design alternative policies for the ski-rental with Wasserstein distance for which SAA performs poorly. Furthermore, quite notably the performance of SAA can be significantly improved upon  for the ski-rental problem, even under the Kolmogorov distance. We discuss in details the derivation of alternative policies for ski-rental in \Cref{sec:ski}.

\subsection{Towards a general method to design policies}\label{sec:UDRO}
In \Cref{sec:rate_opt_pricing} we have seen that there are problems, such as the pricing problem with Wasserstein distance, for which one can obtain significantly better performances than SAA by using a specific alternate policy to SAA. In this section we propose initial steps towards a general paradigm to design and analyze policies other than SAA in heterogeneous environments. 

\noindent \textbf{Regret DRO policy.}  \Cref{thm:reduction_stat}
suggests a systematic way to design policies which perform well in heterogeneous environments when the number of samples becomes large---minimize the upper bound constructed in \Cref{thm:reduction_stat}.
This implies solving for every $\epsilon \geq 0$ the problem,
\begin{align*}
\inf_{\pi \in \agSp} \mathfrak{R}_{\scaleto{\mathcal{I},\mathrm{DRO}}{6pt}}^{\pi} (\epsilon) &= \inf_{\pi \in \agSp} \sup_{\mesOut \in \mesSp} \sup_{\nu\in \mathcal{U}_\epsilon(\mesOut)}  | \opt(\mesOut) - \g[\pi(\nu)]{\mesOut} | \notag \\
&= \sup_{\nu\in \mesSp} \inf_{\pi \in \agSp} \sup_{\mesOut \in \mathcal{U}_\epsilon(\nu)} | \opt(\mesOut) - \g[\pi(\nu)]{\mesOut} |, \label{eq:unif_DRO}
\end{align*}
where the last equality holds because the policy observes the historical distribution. 
In settings where the infimum is achieved, the previous equality suggests the following sample-size agnostic policy. For every $\nu \in \mesSp$,
\begin{equation}
\label{eq:RDRO_policy}
\pi^{\mathrm{RDRO}}(\nu) = \argmin_{\rho \in \Delta \left(\actSp \right)} \sup_{\mesOut \in \mathcal{U}_\epsilon(\nu)} \left\{ \opt(\mesOut) - \g[\actVr]{\mesOut} \right\} .
\end{equation}
We note that $\pi^{\mathrm{RDRO}}$ resembles the common DRO policies studied in the literature (and formally defined in \eqref{eq:DRO_policy}) to the difference that the problem in \eqref{eq:RDRO_policy} involves a \textit{regret objective}. 
We show in \Cref{prop:optimal_RDRO} (see \Cref{sec:apx_optimal_RDRO}) that the policy $\pi^{\mathrm{RDRO}}$ which emerges from optimizing the surrogate DRO regret is in fact optimal for the asymptotic worst-case regret $\mathfrak{R}_{\mathcal{I},\infty}$.
We note that $\pi^{\mathrm{RDRO}}$ is a priori challenging to compute as it requires to solve a minimax non-convex, infinite dimensional optimization problem. We leave as open the exciting computational question regarding efficient implementations of \eqref{eq:RDRO_policy}.

\noindent \textbf{Absolute DRO policy.}
A natural policy in heterogeneous environment which has been widely studied is the literature and ressembles the policy defined in \eqref{eq:RDRO_policy} is the DRO policy defined as,
\begin{equation}
    \label{eq:DRO_policy}
\pi^{\mathrm{DRO}}(\nu) = \argmax_{\rho \in \Delta(\actSp)} \inf_{\mesOut \in \mathcal{U}_\epsilon(\nu)} \g[\rho]{\mesOut}.
\end{equation}
Methods to compute the solution of the max-min problem when the maximization is over deterministic actions\footnote{In general taking randomized actions in the max-min problem can strictly improve the performance of the policy \citep{delage2019dice}.} have been developed under different assumptions on $\mathcal{G}$ and for different structures of the sets $\mathcal{U}_{\epsilon}$ \citep{rahimian2019distributionally}. 
Our next result shows that the standard $\pi^{\mathrm{DRO}}$ policy has an asymptotic worst-case regret ``close'' to the one of the minimax optimal policy.
\begin{proposition}
\label{prop:RDRO_to_DRO}
Let $d$ be a convex distance satisfying the ETC property and let $\mathcal{I} = \left( \actSp, \envSp, \mesSp, d, g \right)$ be a data-driven decision problem in a heterogeneous environment. Then for every $\epsilon \geq 0$, we have,
\begin{equation*}
 \lim_{\eta \to \epsilon^{-}}\mathfrak{R}_{\mathcal{I},\infty}^{\pi^{\mathrm{DRO}}}(\eta) \leq  \inf_{\pi \in \Pi} \mathfrak{R}_{\mathcal{I},\infty}^{\pi}(\epsilon)+ \sup_{\mesOut \in \mesSp}  \sup_{\nu \in \mathcal{U}_{2\epsilon}(\mesOut)} \left \{ \opt(\mesOut) - \opt(\nu) \right \} 
\end{equation*}
\end{proposition}

\Cref{prop:RDRO_to_DRO} shows that the difference between the performance of the commonly studied policy $\pi^{\mathrm{DRO}}$ and the best asymptotic performance among all data-driven policies is at most $\sup_{\mesOut \in \mesSp}  \sup_{\nu \in \mathcal{U}_{2\epsilon}(\mesOut)} \left \{ \opt(\mesOut) - \opt(\nu) \right \}$ (when $\mathfrak{R}_{\mathcal{I},\infty}^{\pi^{\mathrm{DRO}}}$ is continuous at $\epsilon$). Furthermore, \Cref{prop:RDRO_to_DRO} suggests a general methodology to prove that the $\pi^{\mathrm{DRO}}$ policy achieves the best dependence in $\epsilon$ when $\epsilon$ is small. Indeed, assume that one can establish that, there exists some constant $C >0$ such that for every $\epsilon$ small enough,
\begin{equation}
\label{eq:condtion_rate_opt}
\sup_{\mesOut \in \mesSp}  \sup_{\nu \in \mathcal{U}_{\epsilon}(\mesOut)} \left \{ \opt(\mesOut) - \opt(\nu) \right \}  \leq C \cdot \inf_{\pi \in \Pi} \mathfrak{R}_{\mathcal{I},\infty}^{\pi}(\epsilon),
\end{equation}
then one can conclude that for $\epsilon$ small enough the asymptotic worst-case regret of $\pi^{\mathrm{DRO}}$ is at most a constant factor away from the optimal achievable performance. Thus, in that case $\pi^{\mathrm{DRO}}$  has the best possible dependence in terms of $\epsilon$ achievable by any data-driven policy .

We note that \Cref{prop:RDRO_to_DRO} can be used to prove that the DRO policy achieves rate-optimality for the pricing problem with Wassserstein distance. Indeed, the lower bound derived in \Cref{thm:ub_pricing_W} along with the bound established in \eqref{eq:diff_opt} implies that \eqref{eq:condtion_rate_opt} holds.

This section highlights that $\pi^{\mathrm{DRO}}$ is a strong candidate to be a general rate-optimal policy. Similarly to $\pi^{\mathrm{RDRO}}$, we leave as open the question of being able to \textit{compute} the $\pi^{\mathrm{DRO}}$ policy.

\section{Open questions and concluding remarks} \label{sec:conclusion}

All in all, our framework enables us to develop a common language to compare achievable performance across central classes of problems and to unveil novel insights about the performance of the SAA policy, when slightly deviating from the widely studied i.i.d. regime. 
In settings where SAA fails, we also design algorithms achieving rate-optimal asymptotic guarantees. 
A key takeaway of this analysis across a broad class of problems and for different heterogeneity structures is that it is necessary to understand the structure of the problem we are facing but also the nature of the heterogeneity when designing data-driven policies that are robust to these environments.
While our work mainly focuses on the Kolmogorov and Wasserstein distances to illustrate our framework, we note that one can naturally extend it for $\phi$-divergences or study alternative metrics. We discuss in \Cref{sec:apx_rel} how our results for these two distances imply results for the total variation and the KL-divergence.
This work also opens many additional questions. We summarize below what we believe are exciting future research directions. 

First, \Cref{thm:reduction_stat} applied to SAA establishes an upper bound which is at least as tight as the previous one present in the literature \citep{mohri2012new} which involves the uniform deviation (see \eqref{eq:bound_mohri}). In this work we mainly  used \Cref{thm:reduction_stat} to analyze non-SAA policies for which  no bound were previously established, but our general analysis of SAA in \Cref{sec:SAA-analysis} still goes through the uniform deviation. While this was sufficient to derive tight guarantees on the rate of the worst-case regret, a natural question is whether using the tighter upper bound in \Cref{thm:reduction_stat} can lead to improved guarantees. We show in \Cref{sec:apx_strict_inequality} that, for the newsvendor problem under Kolmogorov heterogeneity, the bound derived by \Cref{thm:reduction_stat} is \textit{strictly} tighter than \eqref{eq:bound_mohri} which leaves open then question of using our theorem to derive problem-specific guarantees on SAA with better constant factors.

While the theory developed in \Cref{sec:SAA-analysis} was sufficient to establish tight regret guarantees for the problem classes and heterogeneity structures explored in this work, an important question remains unanswered: does having a low approximation parameter truly constitute a necessary condition for SAA to perform well? The general lower bound presented in \Cref{prop:lower_bound_approx} does not fully address this, leaving open the opportunity for future research to clarify the sensitivity of empirical optimization methods to heterogeneity.

Furthermore, our work focuses mainly on the evaluation of data-driven policies and the understanding of the achievable performance across heterogeneity structures and problem classes. It leaves open the question of designing \textit{general} alternative data-driven policies which can be efficiently computed and which perform well. \Cref{sec:UDRO} presents results which suggest that both the RDRO and the DRO policies are very good candidate policies in terms of achievable performance but it leaves open the question of computing these policies, or deriving approximations which would still perform well.

Finally, this paper opens up broader questions. Our analysis characterizes the performance in the asymptotic regime where the sample size grows large, and could be complemented by quantifying the performance of policies with finite samples in heterogeneous environments. 
Another key question would be to understand whether there exists a policy that does not use the knowledge of the type of heterogeneity and performs well across heterogeneity types. We believe that additional exciting and practical complements to this work include incorporating contexts to historical samples,  deriving statistical tests to characterize the type of heterogeneity along with its radius, and potentially providing an empirical validation of such procedures.

{\setstretch{.8}
\bibliographystyle{agsm}
\bibliography{ref}}

\newpage
\appendix

\setstretch{1.2}
\renewcommand{\theequation}{\thesection-\arabic{equation}}
\renewcommand{\theproposition}{\thesection-\arabic{proposition}}
\renewcommand{\thelemma}{\thesection-\arabic{lemma}}
\renewcommand{\thetheorem}{\thesection-\arabic{theorem}}
\pagenumbering{arabic}
\renewcommand{\thepage}{App-\arabic{page}}

\setcounter{equation}{0}
\setcounter{proposition}{0}
\setcounter{lemma}{0}
\setcounter{theorem}{0}

\part{Appendix} 
\parttoc 

\section{Preliminary results and remarks}

\subsection{Formal description of data-driven policies} \label{sec:apx_formal_policy}
In this section we formally present the definition of data-driven policies as mappings from distributions (as opposed to samples) to actions.
The classical definition of a data-driven policy is a sequence of mappings $(\pi_\numS)_{\numS \in \mathbb{N}}$ such that for every $\numS \in \mathbb{N}$, $\pi_{\numS}: \envSp^{\numS} \to \Delta(\actSp)$. We denote by $\Pi_{\mathrm{standard}}$ the set of such sequences.

As discussed in \Cref{sec:formulation}, we adopt in this work a different formalism where we define a data-driven policy as a mapping from $\mathbb{N}^* \times \mesSp \to  \Delta(\actSp)$. We next formally define the correspondence between these two definitions. We denote by $\Pi$ the set of such mappings.

We first define the sequence of mappings $(\phi_k)_{k \in \mathbb{N}^*}$, as follows. For every $k \geq 1$,
\begin{equation*}
\phi_k: \begin{cases}
 \qquad \quad \envSp^k &\to \mathcal{P}\\
(\envVr_1,\ldots,\envVr_k) &\mapsto \nu \quad \text{s.t. $\nu(A) :=  \frac{1}{k} \sum_{i=1}^k \mathbbm{1} \left \{ \envVr_i \in A \right \}, \quad \mbox{for every measurable set $A$.}$} 
\end{cases}
\end{equation*}
Furthermore, we denote by $\Pi_{\mathrm{discrete}}$ the set of mappings from $\bigcup_{k=1}^\infty \left( \{k\} \times \phi_k(\envSp^k) \right)  \to  \Delta(\actSp)$. Intuitively, $\Pi_{\mathrm{discrete}}$ is the set of data-driven policies taking as inputs empirical distributions.

We note that $\Pi_{\mathrm{discrete}}$ can be used to reparametrize the set $\Pi_{\mathrm{standard}}$. 
Indeed, consider the mapping
\begin{equation*}
\Phi : \begin{cases}
\Pi_{\mathrm{discrete}} &\to \Pi_{\mathrm{standard}}  \\
\quad \pi' &\mapsto (\pi_\numS)_{\numS \in \mathbb{N}} \quad \text{s.t. $\pi_\numS(\bm{\envVr}^\numS) = \pi'(\numS, \phi_\numS(\bm{\envVr}^\numS))$ \quad for every $\numS \geq 1, \, \bm{\envVr}^\numS \in \envSp^\numS$}.
\end{cases}
\end{equation*}
Then, we have that $\Phi$ is bijective.
This shows that one can redefine data-driven policies by only looking at the number of samples and the empirical distribution.

In this work, we define data-driven policies through the set $\Pi$, to do so we need to handle the extension of policies when taking as input a distribution which is not necessarily an empirical distribution (for instance a continuous distribution). To formalize this idea we consider the following equivalence relation.
\begin{definition}[Equivalence of policies]
Let $\sim$ be the equivalence relation on $\Pi$ such that, for every $\pi, \pi' \in \Pi$, $\pi \sim \pi'$ if and only if, for every $k \geq 1$ and for every $\nu \in \phi_k \left( \envSp^k \right)$, we have $\pi(k,\nu) = \pi'(k,\nu)$.
\end{definition}
The equivalence relation $\sim$ intuitively requires policies in $\Pi$ to be equal when restricting attention to empirical distributions. The next statement justifies the definition of data-driven policies as objects in $\Pi$.
\begin{proposition}
\label{prop:justify_pi}
Let $\Pi/\sim$ be the quotient set (i.e., the set of equivalent classes of elements $\Pi$ w.r.t to the relation $\sim$). 
Furthermore, let $q: \Pi \to \Pi/\sim$ denote the canonical surjection.

Then, there exists a bijective mapping $\psi$ from $\Pi/\sim$ to $\Pi_{\mathrm{discrete}}$ such that for every $\pi \in \Pi$ and $\tilde{\pi} \in \Pi_{\mathrm{discrete}}$, if $\psi \circ q(\pi) = \tilde{\pi}$, we have that 
$\mathfrak{R}_{\mathcal{I},\numS}^{\pi}(\epsilon)  = \mathfrak{R}_{\mathcal{I},\numS}^{\tilde{\pi}}(\epsilon)$ for every $\epsilon \geq 0$ and every $\numS \geq 1$.
\end{proposition}

\Cref{prop:justify_pi} implies that our more general definition of data-driven policies can be used to analyze classical ones without worrying about the way we extend the data-driven policies to distributions which are not empirical ones. In particular, \Cref{prop:justify_pi} shows that a policy $\pi \in \Pi$ can be used to analyze a \textit{classical} policy of interest $\tilde{\pi} \in \Pi_{\mathrm{discrete}}$ as long as they agree on the set of empirical distributions. Given this strong equivalence between policies defined in $\Pi$ and polices defined in $\Pi_{\mathrm{standard}}$ we do not discuss in the main body these subtleties.
 For instance, if one wants to analyze the SAA policy, defining a policy on the whole space of probability measures as in \Cref{sec:SAA-analysis} is valid as long as it selects the same action as SAA when given an empirical distribution as an input.

Given the above discussion, one may wonder what is the purpose of defining policies as in $\Pi$ given that the worst-case regret of these policies end-up being defined by their restriction to empirical distribution. This is motivated by the DRO regret defined in \eqref{eq:DRO_problem} as,
\begin{equation*}
\mathfrak{R}_{\scaleto{\mathcal{I},\mathrm{DRO}}{6pt}}^{\pi} (\epsilon) := \sup_{\mesOut \in \mesSp} \sup_{\nu\in \mathcal{U}_\epsilon(\mesOut)} \left \{ \opt(\mesOut) - \g[\pi(\nu)]{\mesOut} \right \}. 
\end{equation*}
This notion can only be well-defined by extending the definition of  data-driven policies on all distributions.
Quite importantly, we note that even if two policies $\pi,\pi' \in \Pi$ satisfy $\pi \sim \pi'$ their DRO regret may in general be \textit{different}.

The DRO regret plays a central role in \Cref{thm:reduction_stat} and upper bounds the asymptotic worst-case regret of a broad class of policies in $\Pi$. For completeness our next result formalizes the implication of \Cref{thm:reduction_stat} for data-driven policies which are only defined for empirical distributions.
\begin{theorem}
\label{thm:reduction_general_policies}
Let $\mathcal{I}$ be an instance satisfying the assumptions of \Cref{thm:reduction_stat}. 
Furthermore let $q$ and $\psi$ be the mappings defined in \Cref{prop:justify_pi}.
Then, for every $\tilde{\pi} \in \Pi_{\mathrm{discrete}}$,
\begin{equation*}
 \mathfrak{R}_{\mathcal{I},\infty}^{\tilde{\pi}}(\epsilon) \leq \inf_{\substack{\pi \in \agSp\\ \text{s.t. $\psi \circ q(\pi) = \tilde{\pi}$}}}  \lim_{\eta \to \epsilon^{+}} \mathfrak{R}_{\scaleto{\mathcal{I},\mathrm{DRO}}{6pt}}^{\pi} (\eta). 
\end{equation*}
\end{theorem}
\Cref{thm:reduction_general_policies} implies that analyzing the DRO regret of any extension of sample-size-agnostic policies to all probability measures is a valid procedure to upper bound the worst-case regret of  data-driven policies defined in a standard way. 
We note that \Cref{thm:reduction_general_policies} involves an infimum which suggests that one should choose the best extension to derive the tightest bounds. In this work we only considered a single extension (usually the one naturally suggested by the policy defined on the empirical distributions) and this was sufficient to derive rate optimal bounds matching the lower bounds we derive.

\subsubsection{Proofs of auxiliary results}

\begin{proof}[\textbf{Proof of \Cref{prop:justify_pi}}]
Consider the canonical surjection $q$ and define $\psi$ as follows.
\begin{equation*}
\psi : \begin{cases}
\Pi_{\mathrm{discrete}} &\to  \Pi/\sim  \\
\quad \tilde{\pi} &\mapsto [ \pi ] \quad \text{s.t. for all $\pi' \in \Pi \text{ s.t. } q(\pi')=[ \pi ], \; \tilde{\pi}(k,\nu) = \pi'(k,\nu)$ for all $k \geq 1$ and $\nu \in \phi_k \left( \envSp^k \right)$.}
\end{cases}
\end{equation*}
Let us show that $\psi$ is one-to-one.

We first show that $\psi$ is an injective mapping. Assume that there exits $\tilde{\pi}_1,\tilde{\pi}_2 \in \Pi_{\mathrm{discrete}}$ such that $\psi(\tilde{\pi}_1) = \psi(\tilde{\pi}_2)$. This implies that  $\tilde{\pi}_1(k,\nu) = \tilde{\pi}_2(k,\nu)$ for all $k \geq 1$ and $\nu \in \phi_k \left( \envSp^k \right)$ and therefore $\tilde{\pi}_1 = \tilde{\pi}_2$.

We next show that $\psi$ is a surjective mapping. Let $[\pi] \in \Pi/\sim$ and consider some $\pi'_0 \in \Pi$ such that $q(\pi'_0) =  [\pi] $. Then define $\tilde{\pi} \in \Pi_{\mathrm{discrete}}$ such that for every $k \geq 1$ and $\nu \in \phi_k \left( \envSp^k \right)$ we have that $\tilde{\pi}(k,\nu) = \pi'_0(k,\nu)$ . We next argue  that $\psi(\tilde{\pi}) = [\pi]$. Remark that for every $\pi' \in \Pi$, if $q(\pi') = [\pi]$ then $\pi' \sim \pi'_0$ and therefore  $\pi'(k,\nu) = \pi'_0(k,\nu) = \tilde{\pi}(k,\nu)$ for all $k \geq 1$ and $\nu \in \phi_k \left( \envSp^k \right)$. By definition of $\psi$, this implies that $\psi(\tilde{\pi}) = [\pi]$, which concludes the proof.

\end{proof}

\begin{proof}[\textbf{Proof of \Cref{thm:reduction_general_policies}}]
Let $\tilde{\pi} \in \Pi_{\mathrm{discrete}}$ and $\pi \in \agSp$ such that $\psi \circ q(\pi) = \tilde{\pi}$. Then,
\begin{equation*}
 \mathfrak{R}_{\mathcal{I},\infty}^{\tilde{\pi}}(\epsilon)  \stackrel{(a)}{=} \mathfrak{R}_{\mathcal{I},\infty}^{\pi}(\epsilon) \stackrel{(b)}{\leq} \lim_{\eta \to \epsilon^{+}} \mathfrak{R}_{\scaleto{\mathcal{I},\mathrm{DRO}}{6pt}}^{\pi} (\eta),
\end{equation*}
where $(a)$ follows from \Cref{prop:justify_pi} and $(b)$ follows from \Cref{thm:reduction_stat}. Then by taking an infimum over all policies in $\agSp$ such that $\psi \circ q(\pi) = \tilde{\pi}$ we obtain the desired result.

\end{proof}

\subsection{Properties on distances and remarks}

\subsubsection{Relations between distances}\label{sec:apx_rel}
In this section, we show that one can translate relations between different distances or divergences into relations between worst-case regret associated to them. We first show a general result which enables us to translate relate guarantees and then illustrate how this result can be used to derive guarantees for the total variation distance of the KL-divergence.
We first show the following result.
\begin{lemma}
\label{lem:dist_to_reg}
Consider two distances or divergences $d_1$ and $d_2$ on the probability measure space $\mesSp$ and assume that there exists a function $h$ such that for every $\mu$ and $\nu \in \mesSp$, $d_1(\mu,\nu) \leq h \left( d_2 ( \mu,\nu) \right)$. Let $\mathcal{I}_1 = \left( \actSp, \envSp, \mesSp, d_1, g\right)$ and $\mathcal{I}_2 = \left( \actSp, \envSp, \mesSp, d_2, g\right)$ be two data-driven decision problems in heterogeneous environments that differ only along the distance  dimension.  Then, for every data-driven policy $\pi \in \Pi$, any sample size $\numS \geq 1$ we have that for every $\epsilon \geq 0$,
\begin{equation*}
\mathfrak{R}_{\mathcal{I}_2,\numS} \left( \epsilon \right) \leq \mathfrak{R}_{\mathcal{I}_1,\numS} \left( h(\epsilon) \right) 
\end{equation*}
\end{lemma}
\begin{proof}[\textbf{Proof of \Cref{lem:dist_to_reg}}]
For every $\mesOut \in \mesSp$, let $\mathcal{U}^1_{\epsilon} \left( \mesOut \right)$ (resp. $\mathcal{U}^2_{\epsilon} \left( \mesOut \right)$) denote the heterogeneity ball centered around $\mesOut$ and with radius $\epsilon$ with respect to $d_1$ (resp. $d_2$).

We first argue that for any probability measure $\mesOut \in \mesSp$, and any $\epsilon \geq 0$, we have that, $ \mathcal{U}^2_{  \epsilon} \left( \mesOut \right) \subset \mathcal{U}^1_{h(\epsilon)} \left( \mesOut \right)$. Indeed, for every $\nu\in  \mathcal{U}^2_{ \epsilon} \left( \mesOut \right) $ we have that 
\begin{equation*}
d_1\left(\mesOut,\nu\right) \stackrel{(a)}{\leq}  h(d_2\left(\mesOut,\nu\right)) \leq h(\epsilon),
\end{equation*}
where $(a)$ follows from the relation between $d_1$ and $d_2$. Therefore, for every $\numS \geq 1$, $\epsilon \geq 0 $ and $\mesOut \in \mesSp$ we have
\begin{equation*}
\sup_{ \mesIn{1},\ldots, \mesIn{\numS} \in \mathcal{U}^{2}_{\epsilon}(\mesOut)} \mathcal{R}_\numS \left( \pi, \mesOut, \mesIn{1}, \ldots, \mesIn{\numS} \right) \leq \sup_{ \mesIn{1},\ldots, \mesIn{\numS} \in \mathcal{U}^{1}_{h(\epsilon)}(\mesOut)} \mathcal{R}_\numS \left( \pi, \mesOut, \mesIn{1}, \ldots, \mesIn{\numS} \right). 
\end{equation*}
By taking the supremum over $\mesOut \in \mesSp$ we obtain the desired result.
\end{proof}

\Cref{lem:dist_to_reg} is a very powerful tool to translate some of our regret guarantees derived for the Kolmogorov and the Wasserstein distance to other metrics or $\phi$-divergences. This is especially useful, as it does not require the distance to satisfy the ETC property under which \Cref{thm:reduction_stat} holds. For instance, for the total variation distance (which does not satisfy the ETC property), one can show that the worst-case regret of SAA under all three problems discussed in \Cref{tab:results} is also order $\epsilon$. This follows from the fact that  for every $\mu_1$ and $\mu_2 \in \Delta([0,M]),$
\begin{equation*}
d_K(\mu_1,\mu_2) \leq d_{TV}(\mu_1,\mu_2),
\end{equation*}
where $d_{TV}$ denotes the total variation distance. Therefore, \Cref{lem:dist_to_reg} implies that the worst-case regret of SAA under the total variation distance is lower than the one under the Kolmogorov distance which is order $\mathcal{O}(\epsilon)$ (where we omit the dependence in the size of the support) for these problems. Furthermore, for KL-divergence (which is not an IPM), one can show that the regret of SAA for all three problems is of the order of $\mathcal{O}(\sqrt{\epsilon})$. The upper bound can be obtained from the regret under total variation heterogeneity by using Pinsker's inequality and applying \Cref{lem:dist_to_reg}.

\subsubsection{Empirical triangular convergence property}\label{sec:apx_ETC}
We next show that the Kolmogorov distance and the Wasserstein distance satisfy the ETC property on compact intervals of $\mathbb{R}$.

First, when $\envSp$ is a compact interval of $\mathbb{R}$, the following theorem in \cite{shorack1979weighted} implies that the Kolmogorov distance satisfies the ETC property. 
\begin{proposition}[\cite{shorack1979weighted}, Theorem 2.1]
\label{prop:etc_Kol}
If $\envSp = [0,M]$, the Kolmogorov distance satisfies the empirical triangular convergence property.
\end{proposition}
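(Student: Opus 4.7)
The plan is to reduce the problem to a pointwise concentration bound via Hoeffding's inequality, then extend to a uniform bound over $t \in [0,M]$ using a discretization argument adapted to $\bar{F}_\numS$, and finally conclude almost sure convergence through Borel--Cantelli. The key facts to exploit are that CDFs are monotone and bounded in $[0,1]$, and that at each fixed $t$ the indicators $\mathbbm{1}\{\xi_{i,\numS} \leq t\}$ are independent Bernoulli random variables, even though the underlying distributions differ within and across rows of the triangular array.

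First I would fix $t \in [0,M]$ and denote by $F_{i,\numS}$ the CDF of $\mu_{i,\numS}$. Since the indicators $\mathbbm{1}\{\xi_{i,\numS} \leq t\}$ are independent with means $F_{i,\numS}(t)$, Hoeffding's inequality yields
\[
\Prob\!\left(|\hat{F}_\numS(t) - \bar{F}_\numS(t)| > u\right) \leq 2\exp(-2\numS u^2),
\]
where $\hat{F}_\numS$ and $\bar{F}_\numS$ are the CDFs of $\hat{\mu}_\numS$ and $\bar{\mu}_\numS$; the same bound holds with left limits $\hat{F}_\numS(t-)$ and $\bar{F}_\numS(t-)$ in place of the right-continuous values.

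Next I would convert this pointwise bound into a uniform one by adapting a quantile grid to $\bar{F}_\numS$. For each $\numS$ and integer $K \geq 1$, set $t_k^{(\numS)} = \inf\{t \in [0,M] : \bar{F}_\numS(t) \geq k/K\}$ for $k=0,\ldots,K$, so that $\bar{F}_\numS(t_{k+1}^{(\numS)}-) - \bar{F}_\numS(t_k^{(\numS)}) \leq 1/K$. By monotonicity, for any $t \in [t_k^{(\numS)}, t_{k+1}^{(\numS)})$,
\[
|\hat{F}_\numS(t) - \bar{F}_\numS(t)| \leq \max\!\bigl\{|\hat{F}_\numS(t_k^{(\numS)}) - \bar{F}_\numS(t_k^{(\numS)})|,\; |\hat{F}_\numS(t_{k+1}^{(\numS)}-) - \bar{F}_\numS(t_{k+1}^{(\numS)}-)|\bigr\} + 1/K,
\]
and a union bound over the $2K+1$ relevant values combined with Hoeffding gives
\[
\Prob\!\left(\sup_{t \in [0,M]} |\hat{F}_\numS(t) - \bar{F}_\numS(t)| > u + 1/K\right) \leq 2(2K+1)\exp(-2\numS u^2).
\]

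Finally, taking $K = \numS$ and $u = \sqrt{2\log(\numS)/\numS}$ makes the right-hand side summable in $\numS$, and Borel--Cantelli yields $\sup_t|\hat{F}_\numS(t) - \bar{F}_\numS(t)| \to 0$ a.s., which is exactly $d_K(\hat{\mu}_\numS, \bar{\mu}_\numS) \to 0$ a.s. The main technical subtlety would be handling atoms of $\bar{F}_\numS$ larger than $1/K$, where consecutive quantile grid points collapse; this is resolved by including both $\hat{F}_\numS(t_k^{(\numS)})$ and the left limit $\hat{F}_\numS(t_k^{(\numS)}-)$ at each such atom in the union bound, which does not change the rate. Alternatively, one could bypass this discretization by appealing to the fact that the threshold class $\mathcal{K}$ has VC dimension $1$ and invoking a uniform concentration inequality for VC classes under independent (non-i.i.d.) sampling, which yields the same $O(\sqrt{\log\numS/\numS})$ bound directly.
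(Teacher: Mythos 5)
Your proof is correct, but it takes a different route from the paper: the paper does not prove this statement at all, it simply invokes Theorem 2.1 of \cite{shorack1979weighted} (a result on empirical processes of independent, non-identically distributed random variables) as a black box. Your argument is a self-contained, elementary reconstruction: the key observation that Hoeffding's inequality requires independence but not identical distribution makes the pointwise bound $\Prob(|\hat{F}_\numS(t)-\bar{F}_\numS(t)|>u)\le 2e^{-2\numS u^2}$ available for the triangular array, and the quantile-grid discretization adapted to $\bar{F}_\numS$ (with grid points $t_k^{(\numS)}$ depending on $\numS$, which is harmless since the $\mu_{i,\numS}$ are deterministic) is exactly the classical Glivenko--Cantelli/DKW chaining step; your handling of atoms via $\bar{F}_\numS(t_{k+1}^{(\numS)}-)\le (k+1)/K$ and $\bar{F}_\numS(t_k^{(\numS)})\ge k/K$ is sound, and Borel--Cantelli applies without issue even though the rows of the array involve different random variables. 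What the citation buys the paper is brevity and access to sharper weighted-empirical-process statements; what your approach buys is transparency, an explicit $O(\sqrt{\log\numS/\numS})$ rate, and independence from the structure of the external theorem (your VC-dimension remark at the end points to the same generalization). The only cosmetic caveat is that your Hoeffding exponent convention should be checked against the normalization of the average (with $u=\sqrt{2\log\numS/\numS}$ you get $2\numS u^2 = 4\log\numS$ and a summable $O(\numS^{-3})$ tail, which is fine), so the argument goes through as written.
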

A direct corollary of \Cref{prop:etc_Kol} along with the fact that $d_{W} \leq \diam{\envSp} \cdot d_K$ is that the empirical triangular convergence property can be transferred to the Wasserstein distance.
\begin{corollary}
\label{cor:ec_War}
If $\envSp = [0,M]$, the Wasserstein distance satisfies the empirical triangular convergence property.
\end{corollary}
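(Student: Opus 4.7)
The plan is to derive the Wasserstein ETC property as an immediate transfer from the Kolmogorov ETC property in \Cref{prop:etc_Kol}, using the domination inequality between the two distances on a bounded interval (recalled in \Cref{rem:dom}). Concretely, I would fix an arbitrary triangular array $(\mu_{i,\numS})_{1 \leq i \leq \numS, \numS \in \mathbb{N}^*}$ of probability measures on $\envSp = [0,M]$ and draw independent samples $\envVr_{i,\numS} \sim \mu_{i,\numS}$, then form the empirical measure $\hat{\mu}_\numS$ and the averaged measure $\bar{\mu}_\numS = \frac{1}{\numS}\sum_{i=1}^\numS \mu_{i,\numS}$ as in \Cref{def:etc}. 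Since $\envSp$ is a compact interval of $\mathbb{R}$, \Cref{prop:etc_Kol} gives $\lim_{\numS \to \infty} d_K(\hat{\mu}_\numS, \bar{\mu}_\numS) = 0$ almost surely, along precisely this sampling path.

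The second step is to sandwich the Wasserstein distance by the Kolmogorov distance. By the chain of inequalities recalled in \Cref{rem:dom},
\[
d_W(\hat{\mu}_\numS, \bar{\mu}_\numS) \;\leq\; \diam{\envSp} \cdot d_K(\hat{\mu}_\numS, \bar{\mu}_\numS) \;=\; M \cdot d_K(\hat{\mu}_\numS, \bar{\mu}_\numS).
\]
Because $M$ is a finite constant independent of $\numS$, the right-hand side tends to $0$ almost surely by the previous step, and hence so does $d_W(\hat{\mu}_\numS, \bar{\mu}_\numS)$. Since the triangular array was arbitrary, this is exactly the ETC property of \Cref{def:etc} for the Wasserstein distance.

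There is no genuine obstacle in this argument: the only substantive content is the compactness of $\envSp$, which is needed to ensure that the constant $\diam{\envSp}$ in the Kolmogorov-to-Wasserstein domination is finite (on unbounded supports such a pointwise bound between $d_W$ and $d_K$ generally fails, which is why the assumption $\envSp = [0,M]$ is invoked explicitly). Everything else is a direct application of \Cref{prop:etc_Kol} together with \Cref{rem:dom}.
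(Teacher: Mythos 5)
Your argument is exactly the paper's: the authors also obtain the Wasserstein ETC property as an immediate consequence of \Cref{prop:etc_Kol} combined with the domination $d_W \leq \diam{\envSp} \cdot d_K$ on the compact interval $[0,M]$. The proposal is correct and matches the paper's proof.
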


\subsubsection{Relation to previous models of contamination}\label{sec:relation_to_models} 
 As discussed in \Cref{sec:relatedWork}, our framework is closely related to previous models of contamination. We next present how some of the most popular oblivious adversary models can be casted in our framework and analyzed. 
 
\noindent \textbf{Strong contamination or nasty model.} In the oblivious version of this model, 
the adversary samples all data from a \textit{fixed} distribution that is $\epsilon$ away from the true distribution in the total variation distance.
Remark that this model is a special case of ours, where we set the distance to be $d_{TV}$, and we impose that all historical distributions $\mesIn{1}, \ldots, \mesIn{\numS}$ are identical (but may be different from the out-of-sample distribution $\mesOut$). 

 \noindent \textbf{Huber contamination model.} In this model, the samples $\envSamples$ are assumed to be generated from a contaminated distribution $\nu = (1-\epsilon) \cdot \mesOut + \epsilon \cdot \tilde{\mu}$, where $\tilde{\mu}$ is an arbitrary distribution. 
We note that any upper bound on the regret derived for the total variation distance implies an upper bound for the Huber contamination model.

\subsection{A general lower bound scheme}
We next present a result that will be helpful to derive lower bounds on the asymptotic worst-case regret.
\begin{proposition}[Lower bound scheme]
\label{prop:lower_bound}
Fix $\epsilon \geq 0$. For any instance $\mathcal{I}$ and given two distributions $\mu_{-}$ and $\mu_{+}$ such that $ \mathcal{U}_{\epsilon}(\mu_{-}) \cap \mathcal{U}_{\epsilon}(\mu_{+}) \neq \emptyset$ we have that, for every $\numS \geq 1$,
\begin{equation*}
\inf_{\pi \in \Pi} \mathfrak{R}_{\mathcal{I},\numS}^{\pi} \left( \epsilon \right) \geq \sup_{\beta \in \Delta \left( \{ \mu_{-}, \mu_{+} \} \right)} \inf_{\actVr \in \actSp} \mathbb{E}_{\mesOut \sim \beta} \left[ |\g[x]{\mesOut} - \opt(\mesOut)|\right].
\end{equation*}
\end{proposition}

\begin{proof}[\textbf{Proof of \Cref{prop:lower_bound}}]

For a fixed $\numS \geq 1$, and fix $\tilde{\nu} \in \mathcal{U}_{\epsilon}(\mu_{-}) \cap \mathcal{U}_{\epsilon}(\mu_{+})$ we have that,
\begin{align*}
    \inf_{\pi \in \Pi} \mathfrak{R}_{\mathcal{I},\numS}^{\pi}(\epsilon) &\stackrel{(a)}{\geq} \inf_{\pi \in \Pi} \sup_{\mesOut \in \{ \mu_{-}, \mu_{+}\} }  \sup_{\nu \in \mathcal{U}_\epsilon(\mu)} \mathcal{R}_\numS \left( \pi, \mesOut, \nu, \ldots, \nu \right)\\
    &\stackrel{(b)}{\geq} \inf_{\pi \in \Pi} \sup_{\mesOut \in \{ \mu_{-}, \mu_{+}\} }  \mathcal{R}_\numS \left( \pi, \mesOut, \tilde{\nu}, \ldots, \tilde{\nu} \right)\\
    &= \inf_{\pi \in \Pi} \sup_{\mesOut \in \{ \mu_{-}, \mu_{+}\} } |\mathbb{E}_{\envVr_i \sim \tilde{\nu}} \left[ \g[\pi(\numS,\hat{\nu}_{\envSamples})]{\mesOut} \right] - \opt(\mesOut)|\\
    &\stackrel{(c)}{=} \inf_{\alpha \in \Delta \left( \actSp \right)} \sup_{\mesOut \in \{ \mu_{-}, \mu_{+}\}}    |\g[\alpha]{\mesOut}  - \opt(\mesOut)| \stackrel{(d)}{\geq} \sup_{\beta \in \Delta \left( \{ \mu_{-}, \mu_{+} \} \right) } \inf_{\actVr \in \actSp} \mathbb{E}_{\mesOut \sim \beta} \left[|\g[\actVr]{\mesOut}  - \opt(\mesOut) |\right] 
\end{align*}
where $(a)$ follows from the fact that we restrict the space of out-of-sample distributions to be $\{ \mu_{-}, \mu_{+} \}$ and we restrict historical distributions $\mesIn{1},\ldots, \mesIn{\numS}$ to be all equal. $(b)$ holds as $\tilde{\nu} \in \mathcal{U}_{\epsilon}(\mu_{-}) \cap \mathcal{U}_{\epsilon}(\mu_{+})$. We obtain $(c)$ by remarking that when $\tilde{\nu}$ is fixed, the policy does not need to look at the data as it does not give any extra information on the value of $\mesOut$. Indeed, for every $\pi \in \Pi$, we construct a probability measure $\alpha \in \Delta \left( \actSp \right)$ which satisfies for every $\actVr \in \actSp$,
\begin{equation*}
    \alpha(\actVr) = \mathbb{P}_{\envVr_i \sim \tilde{\nu}} \left(\pi(\numS, \hat{\nu}_{\envSamples}) = \actVr \right).
\end{equation*}
We then have for every $\mesOut \in \{\mu_{-}, \mu_{+}\}$,
\begin{equation*}
    \mathbb{E}_{\envVr_i \sim \tilde{\nu}} \left[ \g[\pi(\numS,\hat{\nu}_{\envSamples})]{\mesOut} \right] - \opt(\mesOut) = \mathbb{E}_{\actVr \sim \alpha} \left[ \g[\actVr]{\mesOut} \right] - \opt(\mesOut) = \g[\alpha]{\mesOut} - \opt(\mesOut).
\end{equation*}
By taking the supremum over $\mesOut$, we obtain,
\begin{equation*}
    \sup_{\mesOut \in \{ \mu_{-}, \mu_{+}\} } |\mathbb{E}_{\envVr_i \sim \tilde{\nu}} \left[ \g[\pi(\numS,\hat{\nu}_{\envSamples})]{\mesOut} \right] - \opt(\mesOut)| = \sup_{\mesOut \in \{ \mu_{-}, \mu_{+}\} } |\g[\alpha]{\mesOut} - \opt(\mesOut)|
\end{equation*}
which implies the inequality,
\begin{equation*}
    \inf_{\pi \in \Pi}    \sup_{\mesOut \in \{ \mu_{-}, \mu_{+}\} } |\mathbb{E}_{\envVr_i \sim \tilde{\nu}} \left[ \g[\pi(\numS,\hat{\nu}_{\envSamples})]{\mesOut} \right] - \opt(\mesOut)| \geq \inf_{\alpha \in \Delta(\actSp)} \sup_{\mesOut \in \{ \mu_{-}, \mu_{+}\} } |\g[\alpha]{\mesOut} - \opt(\mesOut)|.
\end{equation*}
In addition, the inequality 
\begin{equation*}
    \inf_{\pi \in \Pi}    \sup_{\mesOut \in \{ \mu_{-}, \mu_{+}\} } |\mathbb{E}_{\envVr_i \sim \tilde{\nu}} \left[ \g[\pi(\numS,\hat{\nu}_{\envSamples})]{\mesOut} \right] - \opt(\mesOut)| \leq \inf_{\alpha \in \Delta(\actSp)} \sup_{\mesOut \in \{ \mu_{-}, \mu_{+}\} } |\g[\alpha]{\mesOut} - \opt(\mesOut)|
\end{equation*}
holds by inclusion. Finally, $(d)$ follows from Yao's minimax principle.
\end{proof}

\section{Proofs of  \Cref{sec:reduction} and additional results}
\label{sec:apx_reduction}

\begin{proof}[\textbf{Proof of \Cref{prop:heterogeneity_helps}}]
Let $k$ be a positive integer.
We consider a variant of the ski-rental problem presented in \Cref{sec:ski}, in which we set $b = 2 k +1$, $\envSp = [0, 3k+2]$, $\actSp = \{0, \ldots, 3k+2 \}$ and  $\mesSp = \Delta \left(\{ k, k+1, 3k+2 \}\right)$. Note that we restrict here the space of distributions to the ones that are supported on  $\{ k, k+1, 3k+2 \}$.

The proof intuition is then as follows.  We let $\epsilon=\infty$.  The adversary will make $\xi=k+1$ w.p.~1 in the true distribution $\mu$, but wants SAA to buy on day $k$, right before the skiing stops, to cause maximum regret.  In order to accomplish this, the adversary gives SAA a distribution $\nu$ that is $k$ and $3k+2$ w.p.~1/2 each, on which the optimal policy is to rent until time $k$ and buy at that moment if the skiing were to continue.  We then show that it is \textit{impossible} to make SAA buy at time exactly $k+1$ through any \textit{deterministic} distribution, completing the proof that the adversary has increased power when it can give SAA a distribution $\nu$ that is contructed from heterogeneous distributions.

For the sake of  simple notations, we denote by $\SAA(\envVr_1,\envVr_2)$ the decision of SAA after observing samples $\envVr_1$ and $\envVr_2$.
We next enumerate all possible decisions of SAA as a function of the samples observed. The order of the samples does not affect the decision and we have,
\begin{align}
\SAA(k,k) = \SAA(k+1,k+1) &= \SAA(k,k+1) = 3k+2,  \quad \SAA(3k+2,3k+2) = 0, \nonumber\\
\SAA(k,3k+2) &\stackrel{(a)}{=} k \quad \mbox{and} \quad \SAA(k+1,3k+2) \stackrel{(b)}{=} 0,\label{eq:SAA_actions}
\end{align}
where we next give a detailed derivation of $(a)$ and $(b)$.

First, for every action $\actVr \in \actSp$, denote by $\hat{c}(\actVr)= \frac{1}{2} g(\actVr,k) + \frac{1}{2} g(\actVr,3k+2)$ the empirical cost of the action $\actVr$ against the samples $k$ and $3k+2$. We note that in that case, all actions are dominated by one of the following ones $\{0,k,3k+2\}$ as it is never optimal to purchase between skiing days. Furthermore, we have that,
\begin{equation*}
\hat{c}(0) = b = 2k+1 \qquad  \hat{c}(k) = \frac{1}{2} k + \frac{1}{2}\left(k+b\right) = 2k + \frac{1}{2} \qquad \hat{c}(3k+2)= \frac{1}{2}k + \frac{1}{2} \left( 3k+2 \right) = 2k+1.
\end{equation*}
This implies $(a)$. Similarly, we now let $\hat{c}(\actVr)= \frac{1}{2} g(\actVr,k+1) + \frac{1}{2} g(\actVr,3k+2)$. In that case, all actions are dominated by one of the following ones $\{0,k+1,3k+2\}$ and,
\begin{align*}
\hat{c}(0) = b = 2k+1 &\qquad  \hat{c}(k+1) = \frac{1}{2}(k+1) + \frac{1}{2}\left(k+1+b\right) = 2k + \frac{3}{2}\\
\hat{c}(3k+2)&= \frac{1}{2}(k+1) + \frac{1}{2} \left( 3k+2 \right) = 2k+\frac{3}{2}.
\end{align*}
This implies $(b)$. 

Let $\epsilon = \infty$.
Remark that for every $\mesOut \in \mesSp$ and for any action $\actVr \in \actSp$, we have 
\begin{align*}
\g[x]{\mesOut} - \opt(\mesOut) &= \sum_{\envVr \in \{k,k+1,3k+2\}} \mu(\envVr) \cdot \left( \g[x]{\delta_\envVr} - \opt(\mu) \right)\\  
&\stackrel{(a)}{\leq} \sum_{\envVr \in \{k,k+1,3k+2\}} \mu(\envVr) \cdot \left( \g[x]{\delta_\envVr} - \opt(\delta_\envVr) \right)
 \leq \max_{\envVr \in \{k,k+1,3k+2\}} \g[x]{\delta_\envVr} - \opt(\delta_\envVr), 
\end{align*}
where $(a)$ holds because $\opt(\mesOut) \geq  \sum_{\envVr \in \{k,k+1,3k+2\}} \mu(\envVr) \cdot \opt(\delta_\envVr)$.
 
Therefore the maximum regret incurred by SAA is at most
\begin{equation*}
 \mathfrak{R}_{\mathcal{I},2}^{\SAA}(\epsilon) \leq \max_{x \in \{0,k,3k+1\}} \max_{\envVr \in \{k,k+1,3k+2\}} \g[x]{\delta_\envVr} - \opt(\delta_\envVr) \stackrel{(a)}{\leq} 2k, 
\end{equation*}
where $(a)$ is obtained by enumerating all possible cases as presented in \Cref{tab:enumerate}.
\begin{table}[h]
\centering
\begin{tabular}{ c| c c c }
~ & $\envVr = k$ & $\envVr = k+1$ & $\envVr = 3k+2$ \\
\hline 
 $\actVr = 0$ & $k+1$ & $k$ & $0$\\ 
 $\actVr = k$ & $0$ & $2k$ &$k$\\  
 $\actVr = 3k+2$ & $0$ & $0$ & $k+1$    
\end{tabular}
\caption{Different values of $\g[x]{\delta_\envVr} - \opt(\delta_\envVr)$ .}
\label{tab:enumerate}
\end{table}

We also note from \Cref{tab:enumerate} that the worst-case regret is achieved when SAA selects action $k$ and when $\mesOut = \delta_{k+1}$. We finally, show that this worst-case regret can only be obtained when nature uses different historical distribution $\mesIn{1} \neq \mesIn{2}.$
First note that the out-of-sample distribution must be $\delta_{k+1}$ in the worst-case to achieve a regret of $2k$.
 Furthemore, we have seen in \eqref{eq:SAA_actions} that SAA selects $k$ if and only if it observes samples $k$ and $3k+2$. 
 Therefore, a regret of $2k$ can only be achieved if the probability of selecting action $k$ is $1$.
By setting $\mesIn{1} = \delta_k$ and $\mesIn{2} = \delta_{3k+2}$ we ensure that SAA selects action $k$ with probability $1$. In contrast, for any distribution $\nu$ such that $\nu(k) > 0$ and $\nu(3k+2) >0$, the probability that SAA selects action $k$ when observing two samples from $\nu$ is strictly smaller than $1$. 

This concludes the proof and shows that,
\begin{equation*}
 \mathfrak{R}_{\mathcal{I},2}^{\SAA}(\epsilon) > \sup_{\mesOut \in \mesSp} \sup_{\nu\in \mathcal{U}_\epsilon(\mesOut)} \mathcal{R}_2 \left( \SAA, \mesOut, \nu,  \nu \right).
\end{equation*}

\end{proof}

\begin{proof}[\textbf{Proof of \Cref{thm:reduction_stat}}]
Fix $\epsilon >0$. To show this theorem, it is sufficient to prove that for every sequence of distributions $(\mesOut_\numS)_{\numS \in \mathbb{N}^*}$, $\bm{\nu} := (\mesIn{i,\numS})_{1 \leq i \leq \numS, \numS \in \mathbb{N}^*}$, such that $\mesIn{i,\numS} \in \mathcal{U}_\epsilon(\mesOut_\numS)$ for every $i$ and $\numS$, we have that
\begin{equation}
\label{eq:fixed_sequence}
\limsup_{\numS \to \infty}   \mathcal{R}_\numS \left( \pi, \mesOut_\numS, \mesIn{1,\numS}, \ldots, \mesIn{\numS,\numS} \right) \leq \lim_{\eta \to \epsilon^{+}} \sup_{\mesOut \in \mesSp} \sup_{\nu\in \mathcal{U}_\eta(\mesOut)} | \opt(\mesOut) - \g[\pi(\nu)]{\mesOut} |. 
\end{equation}
Indeed, for every $\alpha >0$, we can construct a sequence of distribution, $(\mesOut_\numS)_{\numS \in \mathbb{N}^*}$, $\bm{\nu} := (\mesIn{i,\numS})_{1 \leq i \leq \numS, \numS \in \mathbb{N}^*}$, such that for every $i$ and $\numS$, we have $\mesIn{i,\numS} \in \mathcal{U}_\epsilon(\mesOut)$  and
\begin{equation*}
 \mathfrak{R}_{\mathcal{I},\numS}^{\pi}(\epsilon) \leq  \mathcal{R}_\numS \left( \pi, \mesOut_\numS, \mesIn{1,\numS}, \ldots, \mesIn{\numS,\numS} \right) + \alpha.
\end{equation*}
Assuming that \eqref{eq:fixed_sequence} holds and by taking the limit we obtain that for every $\alpha >0$,
\begin{equation*}
 \mathfrak{R}_{\mathcal{I},\infty}^{\pi}(\epsilon) \leq  \lim_{\eta \to \epsilon^{+}} \sup_{\mesOut \in \mesSp} \sup_{\nu\in \mathcal{U}_\eta(\mesOut)} | \opt(\mesOut) - \g[\pi(\nu)]{\mesOut} | + \alpha.
\end{equation*}
We obtain the desired result by taking $\alpha$ to $0$.
We now prove \eqref{eq:fixed_sequence}.

For every sample path $(\envVr_{i,\numS})_{1\leq i \leq \numS, \numS \in \mathbb{N}^*}$ s.t. $\envVr_{i,\numS} \sim \mesIn{i,\numS}$  we define the sequence of empirical distributions $\left( \hat{\nu}_{\numS} \right)_{\numS \in \mathbb{N}^*}$ such that for every $\numS \geq 1$ and every measurable set $A$, we have $\hat{\nu}_\numS(A) :=  \frac{1}{\numS} \sum_{i=1}^\numS \mathbbm{1} \left \{ \envVr_{i,\numS} \in A \right \}$. We also define the sequence $\left( \bar{\nu}_{\numS} \right)_{\numS \in \mathbb{N}^*}$ such that for every $\numS \geq 1$, we have $\bar{\nu}_\numS =  \frac{1}{\numS}\sum_{i=1}^\numS \nu_{i,\numS}$.

In turn, let $(Z_\numS)_{\numS \in \mathbb{N}^*}$ be the sequence of random variables defined for every $\numS \geq 1$ as $Z_\numS := | \opt(\mesOut_{\numS})  - \g[\pi(n,\hat{\nu}_\numS)]{\mesOut_\numS} |$. Remark that there exists a constant $K$ such that for every $\numS \geq 1$, $|Z_\numS| \leq K$ almost surely because $g$ is bounded on $\actSp \times \envSp$.

Therefore we have that,
 \begin{align*}
 \limsup_{\numS \to \infty}   \mathcal{R}_\numS \left( \pi, \mesOut_\numS, \mesIn{1,\numS}, \ldots, \mesIn{\numS,\numS} \right) &=    \limsup_{\numS \to \infty}  |\mathbb{E}_{\bm{\nu}} \left[ \opt(\mesOut_{\numS})  - \g[\pi(n,\hat{\nu}_\numS)]{\mesOut_\numS} \right]| \\
 &  \stackrel{(a)}{\leq}  \limsup_{\numS \to \infty} \mathbb{E}_{\bm{\nu}} \left[ | \opt(\mesOut_{\numS})  - \g[\pi(n,\hat{\nu}_\numS)]{\mesOut_{\numS}}|\right]\\
 &= \limsup_{\numS \to \infty} \mathbb{E}_{\bm{\nu}} \left[ Z_\numS \right]
 \stackrel{(b)}{\leq}  \mathbb{E}_{\bm{\nu}}\left[ \limsup_{\numS \to \infty} Z_\numS \right]
 \end{align*}
 where $(a)$ is a consequence of Jensen's inequality 
and $(b)$ holds by the reversed Fatou lemma \cite[Section 5.4]{williams1991probability} which can be applied because $(Z_\numS)_{\numS \in \mathbb{N}}$ are almost surely bounded. 
 
 We next show that $\mathbb{E}_{\bm{\nu}}\left[ \limsup_{\numS \to \infty} Z_\numS \right] \leq \lim_{\eta \to \epsilon^{+}} \sup_{\mesOut \in \mesSp} \sup_{\nu\in \mathcal{U}_\eta(\mesOut)} | \opt(\mesOut) - \g[\pi(\nu)]{\mesOut} |$. To do so, we show a stronger statement and show that $ \limsup_{\numS \to \infty} Z_\numS  \leq \lim_{\eta \to \epsilon^{+}} \sup_{\mesOut \in \mesSp} \sup_{\nu\in \mathcal{U}_\eta(\mesOut)} | \opt(\mesOut) - \g[\pi(\nu)]{\mesOut} |$ almost surely.
 
 Fix a realization $\omega$ from the set of realizations such that ETC is verified. Remark that, for every $\numS \geq 1$, we have
\begin{align}
d(\hat{\nu}_\numS \left( \omega \right) , \mesOut_\numS) &\leq d(\hat{\nu}_\numS\left( \omega \right), \bar{\nu}_\numS) + d(\bar{\nu}_\numS, \mesOut_\numS) \nonumber \\
&= d(\hat{\nu}_\numS\left( \omega \right), \bar{\nu}_\numS) +  d \left(\frac{1}{\numS}\sum_{i=1}^\numS \nu_{i,\numS}, \mesOut_\numS \right) \nonumber \\
&\stackrel{(a)}{\leq} d(\hat{\nu}_\numS\left( \omega \right), \bar{\nu}_\numS) + \frac{1}{\numS} \sum_{i=1}^\numS d(\nu_{i,\numS}, \mesOut_\numS) \stackrel{(b)}{\leq} d(\hat{\nu}_\numS\left( \omega \right), \bar{\nu}_\numS) + \epsilon,\label{eq:conv_dist_a} 
\end{align}
where $(a)$ follows from the convexity of $d$ and $(b)$ holds because $d \left( \mesIn{i,\numS}, \mesOut \right) \leq \epsilon$ for all $i \leq \numS$.
 
We now define the sequence $(\delta_\numS \left( \omega \right) )_{\numS \in \mathbb{N}^*}$ such that for every $\numS$, $\delta_\numS \left( \omega \right):= \max \left( d(\hat{\nu}_\numS \left( \omega \right), \mesOut_\numS),\epsilon \right)$. Remark that since $d$ satisfies the empirical triangular convergence property, we have $d(\hat{\nu}_\numS\left( \omega \right), \bar{\nu}_\numS) \rightarrow 0$ as $n \uparrow \infty$. This, together with \eqref{eq:conv_dist_a}, implies that $\delta_\numS\left( \omega \right) \to \epsilon$ as $n \uparrow \infty$.

Furthermore, for every $\numS \geq 1$ we have that, 
\begin{align*}
Z_\numS\left( \omega \right) &\stackrel{(a)}{=} | \opt(\mesOut_{\numS})  - \g[\pi(\hat{\nu}_\numS\left( \omega \right))]{\mesOut_\numS} |\\
&\leq   \sup_{\mesOut \in \mesSp} \sup_{\nu \in \mathcal{U}_{d \left( \mesOut_\numS,\hat{\nu}_\numS \left( \omega \right)\right)}(\mesOut)} |\opt(\mesOut) - 
\g[\pi(\nu)]{\mesOut}|\\ 
&\leq   \sup_{\mesOut \in \mesSp} \sup_{\nu \in \mathcal{U}_{\delta_\numS\left( \omega \right)}(\mesOut)} |\opt(\mesOut) - 
\g[\pi(\nu)]{\mesOut}|.
\end{align*}
where $(a)$ holds because $\pi$ is sample-size-agnostic.

Hence, by taking the limit we obtain that,
\begin{align*}
    \limsup_{\numS \to \infty} Z_\numS \left( \omega \right) &\leq \limsup_{\numS \to \infty} \sup_{\mesOut \in \mesSp} \sup_{\nu \in \mathcal{U}_{\delta_\numS\left( \omega \right)}(\mesOut)} |\opt(\mesOut) - 
\g[\pi(\nu)]{\mesOut}|\\
     &= \lim_{\eta \to \epsilon^{+}} \sup_{\mesOut \in \mesSp} \sup_{\nu\in \mathcal{U}_\eta(\mesOut)} | \opt(\mesOut) - \g[\pi(\nu)]{\mesOut} |.  
\end{align*}
This concludes the proof.
\end{proof}

\subsection{Additional results}\label{sec:apx_agnostic_WLOG}
Our next result complements \Cref{thm:reduction_stat} and shows that the restriction to sample-size-agnostic policies can be assumed without loss of optimality, when analyzing the asymptotic worst-case regret.
\begin{proposition}[Optimality of sample-size-agnostic policies]
\label{prop:lb_agnostic}
Let $\mathcal{I} = \left( \actSp, \envSp, \mesSp, d, g \right)$ be a data-driven decision problem in a heterogeneous environment. Then for every $\epsilon \geq 0$,
 \begin{equation*}
\lim_{\eta \to \epsilon^{-}} \inf_{\pi \in \agSp}  \mathfrak{R}_{\mathcal{I},\infty}^{\pi}(\eta)  \leq \inf_{\pi \in \Pi} \mathfrak{R}_{\mathcal{I},\infty}^{\pi}(\epsilon). 
 \end{equation*}
\end{proposition}
\Cref{prop:lb_agnostic}, shows that for any $\epsilon$ for which $\inf_{\pi \in \agSp} \mathfrak{R}_{\mathcal{I},\infty}^{\pi}(\epsilon)$ is continuous, restricting attention to the class of sample-size-agnostic is without loss of optimality. The proof of this result relies on the following lemma.
\begin{lemma}
\label{lem:lb_agnostic_proof}
Let $\mathcal{I} = \left( \actSp, \envSp, \mesSp, d, g \right)$ be a data-driven decision problem in a heterogeneous environment. Assume $d$ is symmetric. Then, for every $\pi \in \Pi$ and every $\delta >0$, there exits $\pi' \in \Pi \in  \agSp$ such that,
 \begin{equation*} 
\mathfrak{R}_{\scaleto{\mathcal{I},\mathrm{DRO}}{6pt}}^{\pi'} (\epsilon) \leq \mathfrak{R}_{\mathcal{I},\infty}^{\pi}(\epsilon) + \delta
 \end{equation*}
\end{lemma}

\begin{proof}[\textbf{Proof of \Cref{lem:lb_agnostic_proof}}]
Let $\pi \in \Pi$ and $\delta >0$. 
We next construct $\pi' \in \agSp$ such that 
\begin{equation}
\label{eq:lb_agnostic}
\mathfrak{R}_{\scaleto{\mathcal{I},\mathrm{DRO}}{6pt}}^{\pi'}(\epsilon) \leq \mathfrak{R}_{\mathcal{I},\infty}^{\pi}(\epsilon) + \eta,
\end{equation}
holds.
We first note that for every $\numS \geq 1$,
\begin{align*}
\mathfrak{R}_{\mathcal{I},\numS}^{\pi}(\epsilon)  &= \sup_{\mesOut \in \mesSp} \sup_{\mesIn{1},\ldots,\mesIn{\numS} \in \mathcal{U}_\epsilon(\mesOut)}  \mathcal{R}_\numS \left( \pi, \mesOut, \mesIn{1}, \ldots, \mesIn{\numS} \right)\geq \sup_{\mesOut \in \mesSp} \sup_{\nu \in \mathcal{U}_\epsilon(\mesOut)}  \mathcal{R}_\numS \left( \pi, \mesOut, \nu, \ldots, \nu \right).
\end{align*}

Furthermore, we emulate the policy $\pi$ by considering the \textit{randomized} policy $\tilde{\pi}$, defined for every $\nu \in \mesSp$ and every $\numS \geq 1$ as follows. First $\tilde{\pi}$ samples $\envVr_1, \ldots, \envVr_{\numS}$ i.i.d. from $\nu$ and then it selects the action, $\pi(\numS,\hat{\mu}_{\envSamples})$. We note that for every $\nu \in \mesSp$ and $\numS \geq 1$, we have that
\begin{equation}
\label{eq:simulation}
 \mathbb{E}_{\envVr_i \sim \nu} \left[ \g[\pi(\numS,\hat{\mu}_{\envSamples})]{\mesOut} \right]  = \g[\tilde{\pi}(\numS,\nu)]{\mesOut}. 
\end{equation}
\eqref{eq:simulation} implies that,
\begin{align}
\sup_{\mesOut \in \mesSp} \sup_{\nu \in \mathcal{U}_\epsilon(\mesOut)}  \mathcal{R}_\numS \left( \pi, \mesOut, \nu, \ldots, \nu \right) = & \sup_{\mesOut \in \mesSp} \sup_{\nu \in \mathcal{U}_\epsilon(\mesOut)}  | \opt(\mesOut)-\mathbb{E}_{\envVr_i \sim \nu} \left[ \g[\pi(\numS,\hat{\mu}_{\envSamples})]{\mesOut} \right] | \notag \\
&= \sup_{\mesOut \in \mesSp} \sup_{\nu \in \mathcal{U}_\epsilon(\mesOut)}  | \opt(\mesOut)-  \g[\tilde{\pi}(\numS,\nu)]{\mesOut} | \notag \\
&\stackrel{(a)}{=} \sup_{\nu \in \mesSp} \sup_{ \mesOut \in \mathcal{U}_\epsilon(\nu)}  | \opt(\mesOut)-  \g[\tilde{\pi}(\numS,\nu)]{\mesOut}|, \label{eq:relation_to_sim}
\end{align}
where $(a)$ holds because $d$ is symmetric. 

Fix $\nu \in \mesSp$ and note that for every $\eta > 0$, there exists $\numS_0 \geq 1$ such that, 
\begin{equation}
\label{eq:n0}
\sup_{ \mesOut \in \mathcal{U}_\epsilon(\nu)}  | \opt(\mesOut)-  \g[\tilde{\pi}(\numS_0,\nu)]{\mesOut} |  \leq \inf_{\numS \geq 0} \sup_{ \mesOut \in \mathcal{U}_\epsilon(\nu)}  |   \opt(\mesOut)-\g[\tilde{\pi}(\numS,\nu)]{\mesOut} | + \eta
\end{equation}
For every $\nu \in \mesSp$ and $\numS \geq 1$, we define $\pi'(\numS,\nu) = \tilde{\pi}(\numS_0(\nu),\nu)$, where $\numS_0(\nu)$ satisfies \eqref{eq:n0} (we keep the dependence of $\numS_0$ in terms of $\nu$ explicit). $\pi'$ is sample-size agnostic by construction. Furthermore for every $\numS \geq 1$ and $\nu \in \mesSp$, it satisfies,
\begin{equation*}
\sup_{ \mesOut \in \mathcal{U}_\epsilon(\nu)}  |  \opt(\mesOut)- \g[\pi'(\numS,\nu)]{\mesOut} |  = \sup_{ \mesOut \in \mathcal{U}_\epsilon(\nu)}  |   \opt(\mesOut)-\g[\tilde{\pi}(\numS_0(\nu),\nu)]{\mesOut} |  \leq  \sup_{ \mesOut \in \mathcal{U}_\epsilon(\nu)}  |  \opt(\mesOut)- \g[\tilde{\pi}(\numS,\nu)]{\mesOut} | + \eta
\end{equation*}
Therefore, by taking a supremum over $\nu \in \mesSp$ and using \eqref{eq:relation_to_sim}, we obtain that,
\begin{equation*}
\sup_{\nu \in \mesSp}  \sup_{ \mesOut \in \mathcal{U}_\epsilon(\nu)}  |   \opt(\mesOut)-\g[\pi'(\numS,\nu)]{\mesOut} | \leq \sup_{\mesOut \in \mesSp} \sup_{\nu \in \mathcal{U}_\epsilon(\mesOut)}  \mathcal{R}_\numS \left( \pi, \mesOut, \nu, \ldots, \nu \right)  + \eta 
\leq \mathfrak{R}_{\mathcal{I},\numS}^{\pi}(\epsilon)  + \eta
\end{equation*}
We finally obtain \eqref{eq:lb_agnostic} by remarking that the LHS term is equal to $\mathfrak{R}_{\scaleto{\mathcal{I},\mathrm{DRO}}{6pt}}^{\pi'}(\epsilon)$ and by taking a limit as $\numS$ goes to $\infty$.
\end{proof}

\begin{proof}[\textbf{Proof of \Cref{prop:lb_agnostic}}]
Fix $\epsilon > 0$. Note that  \Cref{lem:lb_agnostic_proof} implies that, for every $\pi \in \Pi$ and every $\delta >0$, there exits $\pi' \in \Pi \in  \agSp$ such that,
 \begin{equation*} 
\mathfrak{R}_{\scaleto{\mathcal{I},\mathrm{DRO}}{6pt}}^{\pi'} (\epsilon) \leq \mathfrak{R}_{\mathcal{I},\infty}^{\pi}(\epsilon) + \delta.
 \end{equation*}
 Furthermore given that $\pi' \in \agSp$, we obtain by applying \Cref{thm:reduction_stat} that,
 \begin{equation*}
 \lim_{\eta \to \epsilon^{-}} \mathfrak{R}_{\mathcal{I},\infty}^{\pi'}(\eta) \leq \mathfrak{R}_{\mathcal{I},\infty}^{\pi}(\epsilon) + \delta.
 \end{equation*}
 Given that for every $\delta$, there exists a policy which $\pi' \in  \agSp$ satisfying the previous equality, this implies that,
\begin{equation*}
\inf_{\pi' \in \agSp} \lim_{\eta \to \epsilon^{-}} \mathfrak{R}_{\mathcal{I},\infty}^{\pi'}(\eta) \leq \mathfrak{R}_{\mathcal{I},\infty}^{\pi}(\epsilon).
\end{equation*}
Taking an infimum over $\pi \in \Pi$ we get that,
\begin{equation*}
\inf_{\pi' \in \agSp} \lim_{\eta \to \epsilon^{-}} \mathfrak{R}_{\mathcal{I},\infty}^{\pi'}(\eta) \leq \inf_{\pi \in \Pi} \mathfrak{R}_{\mathcal{I},\infty}^{\pi}(\epsilon).
\end{equation*}

To conclude the proof, we need to show that,
\begin{equation}
\label{eq:lim_inf_inversion}
\lim_{\eta \to \epsilon^{-}} \inf_{\pi' \in \agSp}  \mathfrak{R}_{\mathcal{I},\infty}^{\pi'}(\eta) \leq \inf_{\pi' \in \agSp} \lim_{\eta \to \epsilon^{-}} \mathfrak{R}_{\mathcal{I},\infty}^{\pi'}(\eta).
\end{equation}	
This last inequality follows from the following argument. For any $\pi \in \agSp$ and $\eta < \epsilon$, we have by definition of the infimum that for every , $ \inf_{\pi' \in \agSp} \mathfrak{R}_{\mathcal{I},\infty}^{\pi'}(\eta) \leq \mathfrak{R}_{\mathcal{I},\infty}^{\pi}(\eta).$ Then by taking the limit as $\eta$ goes to $\epsilon^{-}$ and taking an infimum over $\pi \in \agSp$, we obtain the desired inequality  \Cref{eq:lim_inf_inversion}.

\end{proof}

\section{Proofs for \Cref{sec:SAA-analysis} and additional results}
\subsection{Proofs of \Cref{sec:UB_IPM}}

\begin{proof}[\textbf{Proof for \Cref{thm:generating_class}}]
Fix $\actVr \in \actSp$. We next show that $|\g[\actVr]{\mesOut} - \g[\actVr]{\nu}| \leq \lambda \left( \mathcal{F},g,\delta \right) \cdot \dis{\mathcal{F}}(\mesOut,\nu) + 2\delta$. 

Fix $\eta >0$, remark that by definition of the approximation parameter, there exist $(\lambda_i)_{i \in \mathbb{N}} \in \mathbb{R}$ and $(f_i)_{i \in \mathbb{N}} \in \mathcal{F}$ such that,
$\sum_{i \in \mathbb{N}} | \lambda_i| \leq \lambda(\mathcal{F},g,\delta) + \eta$ and $\limsup_{\numS \to \infty} \| \sum_{i =1}^n \lambda_i \cdot f_i - g(\actVr, \cdot) \|_{\infty} \leq \delta$. 
Let $N \in \mathbb{N}$ and define $\bar{f}_{N} = \sum_{i =1}^N \lambda_i \cdot f_i$.
We have,
\begin{align*}
|\g[\actVr]{\mesOut} - \g[\actVr]{\nu}| &= \left | \int_{\envSp} g(\actVr,\envVr) d\mesOut(\envVr) - \int_{\envSp} g(\actVr,\envVr) d\nu(\envVr)  \right |\\
& \stackrel{(a)}{\leq} \left | \int_{\envSp} \left( g(\actVr,\envVr) - \bar{f}_{N} \left( \envVr \right) \right) d\mesOut(\envVr) -  \int_{\envSp} \left( g(\actVr,\envVr) - \bar{f}_{N} \left( \envVr \right)  \right)d\nu(\envVr)  \right |\\
&\quad + \left| \int_{\envSp} \bar{f}_{N}(\envVr) d\mesOut(\envVr) - \int_{\envSp} \bar{f}_{N}(\envVr) d\nu(\envVr) \right|\\
&\leq 2 \cdot \| g(\actVr, \cdot) -\bar{f}_{N}  \|_{\infty} + \left| \int_{\envSp} \bar{f}_{N}(\envVr) d\mesOut(\envVr) - \int_{\envSp} \bar{f}_{N}(\envVr) d\nu(\envVr) \right|\\
&\stackrel{(b)}{\leq} 2 \cdot \| g(\actVr, \cdot) -\bar{f}_{N}  \|_{\infty} + \sum_{i =1}^N |\lambda_i| \cdot \left| \int_{\envSp} f_i(\envVr) d\mesOut(\envVr) - \int_{\envSp} f_i(\envVr) d\nu(\envVr) \right|\\
&\stackrel{(c)}{\leq} 2 \cdot \| g(\actVr, \cdot) -\bar{f}_{N}  \|_{\infty} + \sum_{i \in \mathbb{N}} |\lambda_i| \cdot \dis{\mathcal{F}} (\mesOut,\nu) \stackrel{(d)}{\leq} 2 \cdot \| g(\actVr, \cdot) -\bar{f}_{N}  \|_{\infty} + \left[ \lambda(\mathcal{F}, g, \delta) + \eta \right] \cdot \dis{\mathcal{F}} (\mesOut,\nu),
\end{align*}
where $(a)$ follows from the triangular inequality, $(b)$ follows from the triangular inequality and the fact that $\bar{f}_{N} = \sum_{i =1}^N \lambda_i \cdot f_i$, $(c)$ holds by  definition of integral probability metrics and the fact that for every $i \in \mathbb{N}$ we have that $f_i \in \mathcal{F}$ and $(d)$ is a consequence of the definition of the approximation parameter and its relation to $(\lambda_i)_{i \in \mathbb{N}}$.

By taking a $\limsup$ when $N$ goes to $\infty$, we finally obtain that,
\begin{equation*}
|\g[\actVr]{\mesOut} - \g[\actVr]{\nu}| \leq  2 \cdot \delta + \left[ \lambda(\mathcal{F}, g, \delta) + \eta \right] \cdot \dis{\mathcal{F}} (\mesOut,\nu).
\end{equation*}

Since this holds for every $\eta$, we obtain by taking $\eta$ to $0$ and taking the infimum over $\delta$ that,
\begin{equation*}
|\g[\actVr]{\mesOut} - \g[\actVr]{\nu}| \leq \inf_{\delta \geq 0} \left \{ 2 \delta + \ \lambda(\mathcal{F}, g, \delta)  \cdot \dis{\mathcal{F}} (\mesOut,\nu) \right \}.
\end{equation*}
Finally, this holds for every $\actVr \in \actSp$. Hence, by taking supremum over actions, we obtain that,
\begin{equation*}
 \UB_{\mathcal{I}}(\mesOut,\nu) = \sup_{\actVr \in \actSp}|\g[\actVr]{\mu}-\g[\actVr]{\nu}| \leq  \inf_{\delta \geq 0} \left \{ 2 \delta + \lambda(\mathcal{F}, g, \delta)  \cdot \dis{\mathcal{F}} (\mesOut,\nu)  \right \}.
\end{equation*}
\end{proof}

\begin{proof}[\textbf{Proof of \Cref{prop:geometric_approximation}}]
We first show that if $g(\actVr,\cdot) \in \overline{\Span\left( \gen{\mathcal{F}} \right)}$ then for every $\delta > 0$, we have that $\lambda_{\actVr} \left( \mathcal{F},g,\delta \right) < + \infty$. Fix $\delta > 0$.
Note that since, $g(\actVr,\cdot) \in \overline{\Span\left( \gen{\mathcal{F}} \right)}$, there exists $f \in \Span\left( \gen{\mathcal{F}} \right)$ such that $\|f - g(\actVr,\cdot)\|_{\infty} \leq \delta$. Furthermore, by definition of the span, there exists $m \in \mathbb{N}$, $(f_i)_{i\in \{1,\ldots,m\}} \in \gen{\mathcal{F}}^m$ and $(\lambda_i)_{i\in \{1,\ldots,m\}} \in \mathbb{R}^m$ such that, $f= \sum_{i=1}^m \lambda_i \cdot f_i$. As a consequence, $\|\sum_{i=1}^m  \lambda_i \cdot f_i - g(\actVr,\cdot)\|_{\infty} \leq \delta$ which implies that $(\lambda_i)_{i\in \{1,\ldots,m\}}$, $(f_i)_{i\in \{1,\ldots,m\}}$ is a feasible solution for \eqref{eq:approx_param} which implies that,  $\lambda_{\actVr} \left( \mathcal{F},g,\delta \right) \leq \sum_{i=1}^m |\lambda_i|$.

We next prove reciprocally that if $\lambda_{\actVr} \left( \mathcal{F},g,\delta \right) < + \infty$ for every $\delta <0$ then $g(\actVr,\cdot) \in \overline{\Span\left( \gen{\mathcal{F}} \right)}$. It is sufficient to prove that for every $\delta >0$, there exists $f \in \Span\left( \gen{\mathcal{F}} \right)$ such that $\|f - g(\actVr,\cdot)\|_{\infty} \leq \delta.$ Fix $\delta >0$. We have that $\lambda_{\actVr} \left( \mathcal{F},g,\frac{\delta}{2} \right) < + \infty$, therefore there exists, $(\lambda_i)_{i\in \mathbb{N}}$ in $\mathbb{R}$ and $(f_i)_{i\in \mathbb{N}}$ in $\gen{\mathcal{F}}$ such that, $\sum_{i=1}^{\infty} |\lambda_i| < \infty$ and $\limsup_{m \to \infty} \left \| \sum_{i=1}^m \lambda_i \cdot f_i - g(x,\cdot) \right \|_{\infty} \leq \frac{\delta}{2}$. In particular, for $m$ large enough, we have that $\left \| \sum_{i=1}^m \lambda_i \cdot f_i - g(x,\cdot) \right \|_{\infty} \leq \delta.$ Consider such an $m$. By definition of the span we have that $\sum_{i=1}^m \lambda_i \cdot f_i \in \Span\left( \gen{\mathcal{F}} \right)$. This concludes the proof.
\end{proof}

\begin{proof}[\textbf{Proof of \Cref{prop:lower_bound_approx}}]
Assume for sake of contradiction that 
\begin{equation*}
\limsup_{\epsilon \to 0} \frac{1}{\epsilon} \cdot \sup_{\mesOut \in \mesSp} \sup_{\nu\in \mathcal{U}_{\epsilon}(\mesOut)} \sup_{\actVr \in \actSp}|\g[\actVr]{\mu}-\g[\actVr]{\nu}| < + \infty.
\end{equation*}
Therefore, there exists $C > 0$ and $\epsilon_0 > 0$ such that, for every $\epsilon \leq \epsilon_0$,
\begin{equation}
\label{eq:to_contradict}
 \frac{1}{\epsilon} \cdot \sup_{\mesOut \in \mesSp} \sup_{\nu\in \mathcal{U}_{\epsilon}(\mesOut)} \sup_{\actVr \in \actSp}|\g[\actVr]{\mu}-\g[\actVr]{\nu}|
\leq C.
\end{equation}
Fix $\actVr \in \actSp$. We note that, \eqref{eq:to_contradict} implies that, for every $\epsilon \leq \epsilon_0$,
\begin{equation}
\label{eq:small_bound}
\sup_{\substack{\mesOut , \nu \in  \mesSp\\ \text{s.t $d_{\mathcal{F}}(\mesOut,\nu) = \epsilon$} }} |\g[\actVr]{\mu}-\g[\actVr]{\nu}| \leq \sup_{\mesOut \in \mesSp} \sup_{\nu\in \mathcal{U}_{\epsilon}(\mesOut)} |\g[\actVr]{\mu}-\g[\actVr]{\nu}| \leq C \cdot \epsilon,
\end{equation}
where the first inequality follows from the definition of $\mathcal{U}_\epsilon(\mesOut)$. This establishes that for every $\mu, \nu$ such that $d_{\mathcal{F}}(\mesOut,\nu) \leq \epsilon_0$, we have that,
\begin{equation*}
|\g[\actVr]{\mu}-\g[\actVr]{\nu}| \leq C \cdot d_{\mathcal{F}}(\mesOut,\nu).
\end{equation*}
Let $U = \sup_{\actVr \in \actSp} \|g(\actVr,\cdot)\|_{\infty}$.  For every $\mesOut, \nu \in \mathcal{P}$ such that, $d_{\mathcal{F}}(\mesOut,\nu) > \epsilon_0$, we have that,
\begin{equation}
\label{eq:large_bound}
|\g[\actVr]{\mu}-\g[\actVr]{\nu}| \leq 2 \cdot U = 2 U \cdot \frac{\epsilon_0}{\epsilon_0} \leq \frac{2 U}{\epsilon_0} \cdot d_{\mathcal{F}}(\mesOut,\nu),
\end{equation}
where the last inequality holds because, $d_{\mathcal{F}}(\mesOut,\nu) > \epsilon_0$.

Therefore, \eqref{eq:small_bound} and \eqref{eq:large_bound} imply that for every $\mesOut, \nu \in \mesSp$,
\begin{equation*}
|\g[\actVr]{\mu}-\g[\actVr]{\nu}| \leq \max \left( C, \frac{2 U}{\epsilon_0} \right) \cdot d_{\mathcal{F}}(\mesOut,\nu).
\end{equation*}
Let $C_1 = \max \left( C, \frac{2 \cdot U}{\epsilon_0} \right)$. The last inequality implies that $\frac{1}{C_1}\cdot g(\actVr,\cdot)$ belongs to $\gen{\mathcal{F}}$ for every $\actVr \in \actSp$. By definition of the approximation parameter this would imply that $\lambda(\gen{\mathcal{F}}, g, 0) \leq C_1 < \infty$ which contradicts our assumption.
\end{proof}

\subsection{Proofs for \Cref{sec:SAA_bound} }

\begin{proof}[\textbf{Proof of \Cref{cor:dro_to_reg}}]
Fix $\epsilon >0$. We note that by \Cref{prop:etc_Kol} and \Cref{cor:ec_War} both the Kolmogorov distance and the Wasserstein distance satisfy the ETC property on $[0,M]$.
Therefore for any $\mathcal{I} \in \{\mathcal{I}_K,\mathcal{I}_W \}$, we have by \Cref{thm:reduction_stat},
\begin{equation}
\label{eq:rel_to_infinity}
\mathfrak{R}_{\mathcal{I},\infty}^{\SAA}(\epsilon) \leq \lim_{\eta \to \epsilon^{+}} \mathfrak{R}_{\scaleto{\mathcal{I},\mathrm{DRO}}{6pt}}^{\SAA} (\eta). 
\end{equation}

Furthermore, for every $\mesOut, \nu \in \mesSp$, fix $\delta >0$ and $\actVr$ such that $\opt(\mu)-\g[\actVr]{\mu} \leq \delta$ (this $\actVr$ exists by the definition of the supremum). Then we have that,
\begin{align*}
\opt(\mu)-\g[\SAA(\nu)]{\mu}
&= \opt(\mu) - \g[\actVr]{\nu}\\
&\quad + \g[\actVr]{\nu} - \g[\SAA(\nu)]{\nu}\\
&\quad + \g[\SAA(\nu)]{\nu}
- \g[\SAA(\nu)]{\mu}\\
&\stackrel{(a)}{\leq} \g[\actVr]{\mu} + \delta - \g[\actVr]{\nu} + \g[\SAA(\nu)]{\nu}
- \g[\SAA(\nu)]{\mu}\\
& \stackrel{(b)}{\le} 2 \cdot  \UB_{\mathcal{I}} (\mesOut,\nu) + \delta,
\end{align*}
where $(a)$ holds because $\opt(\mu) \leq \g[\actVr]{\mu} + \delta$ and the middle difference $\g[\actVr]{\nu} - \g[\SAA(\nu)]{\nu}$ is non-positive as $\SAA(\nu)$ is optimal for the distribution $\nu$
and $(b)$ follows from the definition of $\UB_{\mathcal{I}} (\mesOut,\nu)$.

Given that this inequality holds for every $\delta >0$ we have by taking the limit as $\delta$ goes to $0$ that,
\begin{equation*}
    \opt(\mu)-\g[\SAA(\nu)]{\mu} \leq  2 \cdot  \UB_{\mathcal{I}} (\mesOut,\nu).
\end{equation*}
By taking a supremum over $\mesOut$ and $\nu$, we obtain that for every $\eta >0$,
\begin{equation*}
\mathfrak{R}_{\scaleto{\mathcal{I},\mathrm{DRO}}{6pt}}^{\SAA} (\eta) \leq 2 \sup_{\mesOut \in \mesSp} \sup_{\nu \in \mathcal{U}_{\eta}(\mesOut)} \UB_{\mathcal{I}} (\mesOut,\nu).
\end{equation*}
In what follows, we conclude the proof for the Wasserstein distance (the proof for the Kolmogorov distance follows the same argument).

We next note that for every $\mesOut, \nu \in \mesSp$,
\begin{align*}
\UB_{\mathcal{I}_W} (\mesOut,\nu) 
&\stackrel{(a)}{\leq} \inf_{\delta \geq 0} \left \{ \lambda \left( \mathcal{M}_W, g, \delta \right) \cdot \dis{\mathcal{F}}(\mesOut,\nu) + 2\delta \right \}\\
&\leq \lambda \left( \mathcal{M}_W, g, 0 \right) \cdot \dis{\mathcal{F}}(\mesOut,\nu) \stackrel{(b)}{\leq} \sup_{\actVr \in \actSp} \Lip{g(\actVr, \cdot)} \cdot \dis{\mathcal{F}}(\mesOut,\nu),
\end{align*}
where $(a)$ follows from \Cref{thm:generating_class} and $(b)$ from \Cref{lem:bound_apx_param}. Hence, 
\begin{equation*}
\mathfrak{R}_{\scaleto{\mathcal{I},\mathrm{DRO}}{6pt}}^{\SAA} (\eta) \leq 2 \sup_{\mesOut \in \mesSp} \sup_{\nu \in \mathcal{U}_{\eta}(\mesOut)} \UB_{\mathcal{I}} (\mesOut,\nu) \leq 2 \sup_{\actVr \in \actSp} \Lip{g(\actVr, \cdot)} \cdot \eta.
\end{equation*}
We obtain the desired result by taking $\eta \to \epsilon^{+}$ and using \eqref{eq:rel_to_infinity}.

\end{proof}

\begin{proof}[\textbf{Proof of \Cref{cor:SAA_Newsvendor}}]
\textit{Step 1: Regret upper bound for all distances.}
We first derive an upper bound on the worst-case asymptotic regret of SAA for the Kolmogorov and the Wasserstein distances. To do so, we will leverage \Cref{cor:dro_to_reg}.

We note that, for the newsvendor problem, we have that,
$$\sup_{\actVr \in \actSp} V(g(\actVr,\cdot)) = \sup_{\actVr \in [0,M]} \left \{ c_u \cdot \actVr + c_o \cdot ( M - \actVr) \right \} = \max(c_u,c_o) \cdot M,$$
and
$$\sup_{\actVr \in \actSp} \Lip{g(\actVr,\cdot)} =\max(c_u,c_o).$$
Therefore, \Cref{cor:dro_to_reg} implies that,
\begin{align*}
\mathfrak{R}_{K,\infty}^{\SAA}(\epsilon) \leq 2 \max \left( c_u, c_o\right) \cdot M \cdot \epsilon \quad \mbox{and} \quad
\mathfrak{R}_{W,\infty}^{\SAA}(\epsilon) \leq 2 \max \left( c_u, c_o\right)  \cdot \epsilon.
\end{align*}

\textit{Step 2: Regret lower bound for Kolmogorov.} We now show the lower bound for the Kolmogorov distance.

 For every $p \in [0,1]$, let $\mathcal{B}_M(p)$ denote the two-point-mass distribution which puts mass $p$ at $M$ and mass $1-p$ at $0$. Furthermore, let $q:= \frac{c_u}{c_u + c_o}$ denote the critical fractile. 
Fix $\epsilon \leq \min \left(q,1-q\right)$ and let   $ \tilde{\nu} := \mathcal{B}_M\left( 1-q \right)$ , $\mu_{-} := \mathcal{B}_M \left( 1- q - \epsilon \right)$ and $\mu_{+} := \mathcal{B}_M \left( 1- q + \epsilon \right)$. Note that for the Kolmogorov distance we have that $\tilde{\nu} \in \mathcal{U}_{\mu_{-}}(\epsilon) \cap \mathcal{U}_{\mu_{+}}(\epsilon).$ Therefore the intersection is non-empty and by \Cref{prop:lower_bound}, we have for every $\numS \geq 1$ and any $\pi \in \Pi$ that,
\begin{equation*}
\mathfrak{R}_{K,\numS}^{\pi}(\epsilon) \geq \sup_{\beta \in \Delta \left( \{ \mu_{-}, \mu_{+} \} \right) } \inf_{\actVr \in \actSp} \mathbb{E}_{\mesOut \sim \beta} \left[ \g[\actVr]{\mesOut}  - \opt(\mesOut) \right]. 
\end{equation*}
Furthermore, remark that for every $\actVr \in [0,M]$ we have,
\begin{equation*}
    \g[\actVr]{\mesOut_{+}}  - \opt(\mesOut_{+}) =  \left(q-\epsilon \right) \cdot c_o \cdot  x +  \left(1-q+\epsilon \right) \cdot c_u \cdot (M-x) - c_o \cdot \left( q - \epsilon \right) \cdot M = \left( c_u + c_o \right) \cdot \left(M- x \right) \cdot \epsilon,
\end{equation*}
\begin{equation*}
    \g[\actVr]{\mesOut_{-}}  - \opt(\mesOut_{-}) =  \left(q+\epsilon \right) \cdot c_o \cdot  x +  \left(1-q-\epsilon \right) \cdot c_u \cdot (M-x) - c_u \cdot \left(1- q - \epsilon \right) \cdot M = \left( c_u + c_o \right) \cdot x \cdot \epsilon.
\end{equation*}
Finally, let $\tilde{\beta}$ be the prior which puts mass $\frac{1}{2}$ on $\mu_{+}$ and mass $\frac{1}{2}$ on $\mu_{-}$. We have that,
\begin{equation*}
    \inf_{\actVr \in \actSp} \mathbb{E}_{\mesOut \sim \tilde{\beta}} \left[ \g[\actVr]{\mesOut}  - \opt(\mesOut) \right] = \inf_{\actVr \in \actSp} \frac{c_u + c_o}{2} \cdot \left( M- x \right) \cdot \epsilon + \frac{c_u + c_o}{2} \cdot x \cdot \epsilon = \frac{c_u + c_o}{2} \cdot M \cdot \epsilon.
\end{equation*}
Therefore,
\begin{equation*}
 \mathfrak{R}_{K,\numS}^{\pi}(\epsilon) \geq \sup_{\beta \in \Delta \left( \{ \mu_{-}, \mu_{+} \} \right) } \inf_{\actVr \in \actSp} \mathbb{E}_{\mesOut \sim \beta} \left[ \g[\actVr]{\mesOut}  - \opt(\mesOut) \right]  \geq \frac{c_u + c_o}{2} \cdot M \cdot \epsilon.
\end{equation*}
And by taking the limit we obtain the desired result,
\begin{equation*}
    \inf_{\pi \in \Pi} \limsup_{\numS \to \infty} \mathfrak{R}_{K,\numS}^{\pi}(\epsilon) \geq \frac{c_u + c_o}{2} \cdot M \cdot \epsilon.
\end{equation*}

\textit{Step 3: Regret lower bound for Wasserstein.} We now derive the lower bound for the Wasserstein distance. Remark that for every $\epsilon \leq M \cdot \max  \left(q,1-q\right)$,
\begin{align*}
    \inf_{\pi \in \Pi}  \mathfrak{R}_{W,\infty}^{\pi}(\epsilon) &\stackrel{(a)}{\geq}  \inf_{\pi \in \Pi}  \mathfrak{R}_{K,\infty}^{\pi} \left( \frac{\epsilon}{M} \right) \stackrel{(b)}{\geq}   \frac{c_u + c_o}{2} \cdot  \epsilon,\\
\end{align*}
where $(a)$ follows from \eqref{eq:dom_reg} and $(b)$ is a consequence of step 2.

\end{proof}

\begin{proof}[\textbf{Proof of \Cref{prop:Kolmogorov_pricing}}]

\textit{Step 1: Regret upper bound.} 
We note that for the pricing problem,
\begin{equation*}
\sup_{\actVr \in \actSp} V \left( g(\actVr,\cdot) \right) = \sup_{\actVr \in [0,M]} \actVr = M. 
\end{equation*}
Therefore by \Cref{cor:dro_to_reg} we have that,
\begin{equation*}
\mathfrak{R}_{K,\infty}^{\SAA}(\epsilon) \leq 2M \cdot \epsilon.
\end{equation*}

\textit{Step 2: Regret lower bound.}
Let $\tilde{\nu}$, $\mu_{+}$ and $\mu_{-}$  be probability measures supported on $\{\frac{M}{2}, M\}$ such that, $\tilde{\nu}(M) = \frac{1}{2}$, $\mu_{+}(M) = \frac{1}{2} + \epsilon$ and $\mu_{-}(M) = \frac{1}{2} - \epsilon$. Note that for the Kolmogorov distance we have that $\tilde{\nu} \in \mathcal{U}_{\mu_{-}}(\epsilon) \cap \mathcal{U}_{\mu_{+}}(\epsilon).$ Therefore the intersection is non-empty and by \Cref{prop:lower_bound}, we have for every $\numS \geq 1$ that,
\begin{equation*}
\inf_{\pi \in \Pi} \mathfrak{R}_{K,\numS}^{\pi}(\epsilon) \geq \sup_{\beta \in \Delta \left( \{ \mu_{-}, \mu_{+} \} \right) } \inf_{\actVr \in \actSp} \mathbb{E}_{\mesOut \sim \beta} \left[ \opt(\mesOut) - \g[\actVr]{\mesOut}  \right]. 
\end{equation*}

Let $\tilde{\beta}$ be the prior which puts mass $\frac{1}{2}$ on $\mu_{+}$ and mass $\frac{1}{2}$ on $\mu_{-}$. 

Since the optimal price must be in the support of the willingness to pay distribution, we only need to consider the revenue generated by posting prices $\frac{M}{2}$ or $M$. In that case, we have
\begin{equation*}
    \opt(\mesOut_{+}) - \g[\frac{M}{2}]{\mesOut_{+}}  = M \left( \frac{1}{2} + \epsilon \right) - \frac{M}{2} = M \cdot \epsilon, \quad \opt(\mesOut_{-}) - \g[\frac{M}{2}]{\mesOut_{-}} = 0,
\end{equation*}
\begin{equation*}
    \opt(\mesOut_{+}) - \g[M]{\mesOut_{+}}  = 0 \quad \mbox{and} \quad \opt(\mesOut_{-}) - \g[M]{\mesOut_{-}} = \frac{M}{2} - M \left( \frac{1}{2} - \epsilon \right) = M \cdot \epsilon.
\end{equation*}
Therefore,
\begin{align*}
    \inf_{\pi \in \Pi} \mathfrak{R}_{K,\numS}^{\pi}(\epsilon) &\geq  \inf_{\actVr \in \actSp} \mathbb{E}_{\mesOut \sim \tilde{\beta}} \left[ \opt(\mesOut) - \g[\actVr]{\mesOut}   \right]\\
 &= \min\left(\mathbb{E}_{\mesOut \sim \tilde{\beta}} \left[ \opt(\mesOut) - \g[\frac{M}{2}]{\mesOut}   \right],\mathbb{E}_{\mesOut \sim \tilde{\beta}} \left[ \opt(\mesOut) - \g[M]{\mesOut}   \right]  \right) = \frac{M \cdot \epsilon}{2}.
\end{align*}
We conclude the proof of the lower bound by taking the limsup as $\numS$ goes to $\infty$.

\end{proof}

\begin{proof}[\textbf{Proof of \Cref{thm:SAA_Wasserstein_pricing}}]
Fix $\eta >0$. We show that there exists an instance of the data-driven pricing problem under Wasserstein heterogeneity such that the regret of SAA for that instance is larger than $M-\eta$.

We consider the out-of-sample distribution $\mesOut := \delta_{M-\eta}$, that is to say the probability measure which puts all mass at $M-\eta$. Furthermore, we let $\nu:=\delta_{\min\left(M,M-\eta+\epsilon \right)}$. Remark that by definition of the Wasserstein distance, we have that, 
\begin{equation*}
d_W(\mesOut,\nu) = \min\left(M,M-\eta+\epsilon \right) - \left(M-\eta\right) \leq \epsilon.
\end{equation*}
It suffices to show that the asymptotic regret of SAA under our candidate instance is larger than $M-\eta$. We note that for any $\envVr_0 \in \envSp$ the oracle of the point mass distribution $\delta_{\envVr_0}$ is given by  $ \mathtt{ORACLE}\left(\delta_{\envVr_0} \right) = \envVr_0$, which generates a revenue of $\envVr_0$.   
Furthermore, we have that, 
\begin{align*}
\mathbb{E}_{\envVr_i \sim \nu} \left[ \g[\SAA(\numS,\hat{\nu}_{\envSamples})]{\mesOut} \right] 
&\stackrel{(a)}{=}  \g[\SAA(\numS,\nu)]{\mesOut}\\
&= \g[\min \left(M, M-\eta+\epsilon \right)]{\mesOut}\\
&= \min \left(M, M-\eta+\epsilon \right) \cdot \mathbbm{1} \left \{  \min \left(M, M-\eta+\epsilon \right) \leq M- \eta \right \} = 0,  
\end{align*}
where $(a)$ holds because for every sample size $\numS$, the historical samples $\envSamples$ sampled from $\nu$ are almost surely all equal to $\min\left(M,M-\eta+\epsilon \right)$. 

This shows that SAA generates a revenue equal to $0$ because it posts a price that is higher than the actual willingness to pay of the customers. We finally conclude that,
\begin{equation*}
\mathcal{R}_\numS \left( \SAA, \mesOut, \mesIn, \ldots, \nu\right) = \opt \left( {\mesOut} \right) - \mathbb{E}_{\envVr_i \sim \nu} \left[ \g[\SAA(\numS,\hat{\nu}_{\envSamples})]{\mesOut} \right] = M- \eta.
\end{equation*}
By taking the supremum over $\eta >0$  we obtain that,
\begin{equation*}
 \mathfrak{R}_{W,\numS}^{\SAA}(\epsilon) \geq M.
\end{equation*}
And by taking the limsup as $\numS$ goes to $\infty$, we conclude that $ \mathfrak{R}_{W,\infty}^{\SAA}(\epsilon) \geq M$.
We complete the proof by noting that $\mathfrak{R}_{W,\infty}^{\SAA}(\epsilon) \leq M$ as the revenue of the oracle is bounded above by $M$ and the one of SAA is bounded below by $0$.
 
\end{proof}

\subsection{Example of a boundary case for the approximation parameter}\label{sec:apx_boudary}
Given an objective function $g$ and a generating class of functions $\mathcal{F}$, we presented in \Cref{sec:max_gen} a general methology which enables to derive vanishing regret guarantees for SAA when the family of functions $\left( g(\actVr,\cdot) \right)_{\actVr \in \actSp}$ lies in $\Span \left( \gen{\mathcal{F}} \right)$ the linear span of the maximal generator class of $\mathcal{F}$. When the family $\left( g(\actVr,\cdot) \right)_{\actVr \in \actSp}$ is not in the closure of $\Span \left( \gen{\mathcal{F}} \right)$ we demonstrated through examples (see \Cref{sec:pricing_SAA,sec:ski_W}) that SAA can fail and does not achieve a vanishing regret.   

In this section, we consider the Kolmogorov distance and present a boundary case, for which $\left( g(\actVr,\cdot) \right)_{\actVr \in \actSp}$ does not belongs to $\Span \left( \gen{\mathcal{K}} \right)$ but belong to its closure with respect to the uniform convergence. In this case, \Cref{cor:dro_to_reg}  is insufficient to conclude and one needs to use \Cref{thm:generating_class} with a finer bound on the approximation parameter.

We first define the notion of modulus of continuity.
\begin{definition}[Modulus of continuity]
Consider a non-decreasing function $\omega : [0,\infty) \to [0,\infty)$ continuous at $0$ and such that $\omega(0)=0$. Then we say that a function $f : \mathbb{R} \to \mathbb{R}$ admits $\omega$ as modulus of continuity if and only if, for every $x,y \in \mathbb{R}$,
\begin{equation*}
|f(x) - f(y)| \leq \omega \left( |x - y | \right).
\end{equation*}
\end{definition}
We next present a general bound for Hölder continuous functions under the Kolmogorov distance.
\begin{proposition}
\label{prop:border}
Consider a data-driven instance $\mathcal{I}$ with Kolmogorov distance such that $\actSp = \envSp =  [0,1]$.  Furthermore, assume that for every $\actVr \in \actSp$, $g(\actVr,\cdot)$ admits $\omega$ as modulus of continuity, where for every $t \geq 0$, $\omega(t) \leq t^{\alpha}$ for some $\alpha > 0$. Then, for every $\mesOut, \nu \in \mesSp$ we have, 
\begin{equation*}
\UB_{\mathcal{I}}(\mesOut,\nu) \leq \frac{5}{2}  \left( G \cdot d(\mesOut,\nu) \right)^{\alpha/(\alpha+2)} + 2 G \cdot d(\mesOut,\nu),
\end{equation*}
where $G = \sup_{\actVr \in \actSp} \|g(\actVr,\cdot) \|_{\infty}$.
\end{proposition}
We note that \Cref{prop:border} enables to derive bounds on the uniform deviation $\UB_{\mathcal{I}}(\mesOut,\nu)$ under the Kolmogorov distance. This bound can be translated into a bound on the asymptotic worst-case regret of SAA by leveraging \ref{thm:reduction_stat} (see proof of \Cref{cor:SAA_Newsvendor}). We did so for objective functions that are not necessarily in the linear span of $\gen{\mathcal{K}}$. In particular, we do not assume that these objective functions have bounded variation. However, these objective functions can be approximated well enough by functions with bounded variation therefore one can derive bounds on the DRO regret by developing a tighter analysis of the approximation parameter.

We also note that the dependence in $\epsilon$ derived in \Cref{prop:border}  is not tight across instances. For instance, in the special case where $g$ is $1$-Lipschitz (i.e., $\alpha =1$), our result implies a bound which scales as $\mathcal{O} \left( \epsilon^{\frac{1}{3}} \right)$, whereas one can show a $\mathcal{O} \left( \epsilon \right)$ dependence by leveraging the relation between the regret under Kolmogorov and the regret under Wasserstein distances (see \Cref{sec:apx_rel}).

\begin{proof}[\textbf{Proof of \Cref{prop:border}}]
Fix an integer $k \geq 1$. We next prove a bound on the approximation parameter.

Fix $\actVr \in \actSp$, we approximate the continuous function $g_{\actVr} := g(\actVr, \cdot)$ by using Berstein polynomials defined as follows. For every integer $q \geq 1$, we first consider the family of elementary polynomial $(b_{p,q})_{p \in \{0,\ldots,q\}}$ defined for $p \in \{0,\ldots,q\}$ and every $y \in [0,1]$ as,
\begin{equation*}
b_{p,q}(y) = {q \choose p} \cdot y^p \cdot (1 - y)^{q-p}.
\end{equation*}
Given a function $f$, the $q^{th}$ Bernstein polynomial is then defined for every $y \in [0,1]$ as 
\begin{equation*}
B_q(f)(y) = \sum_{p=0}^q f \left( \frac{p}{q} \right) \cdot b_{p,q}(y).
\end{equation*}
It is known by Weierstrass theorem  that for any continuous function $f$ on $[0,1]$, $B_q(f)$ uniformly converges to $f$ as $q \to \infty$. The following lemma provides the rate of convergence.
\begin{lemma}[\cite{popoviciu1935approximation}]
\label{lem:modulus}
For any $q \geq 1$ and any continuous function $f$ on $[0,1]$ with modulus of continuity $\omega_{f}$, we have that,
\begin{equation*}
\|B_{q}(f) -f  \|_{\infty} \leq \frac{5}{4} \cdot \omega_{f} \left( \sqrt{\frac{1}{q}} \right).
\end{equation*}
\end{lemma}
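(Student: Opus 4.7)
The plan is to prove Popoviciu's inequality by exploiting the probabilistic interpretation of Bernstein polynomials combined with a two-regime estimate on $\omega_f$. Since the binomial theorem gives $\sum_{p=0}^{q} b_{p,q}(y) = 1$ for every $y \in [0,1]$, I would first write
\begin{equation*}
B_q(f)(y) - f(y) \;=\; \sum_{p=0}^{q} \left(f(p/q) - f(y)\right) b_{p,q}(y) \;=\; \mathbb{E}\!\left[f(S_q/q) - f(y)\right],
\end{equation*}
where $S_q \sim \mathrm{Binomial}(q, y)$. Applying the modulus-of-continuity hypothesis pointwise, this yields
\begin{equation*}
\left|B_q(f)(y) - f(y)\right| \;\leq\; \mathbb{E}\!\left[\omega_f\!\left(\left|S_q/q - y\right|\right)\right].
\end{equation*}

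Next, I would argue that one may assume $\omega_f$ is subadditive: the function $\omega^*(t) := \sup_{|x-y|\leq t}|f(x) - f(y)|$ is non-decreasing, continuous at $0$, subadditive, and satisfies $\omega^* \leq \omega_f$, so it is still a valid (pointwise smaller) modulus of continuity for $f$; replacing $\omega_f$ by $\omega^*$ only strengthens the target inequality. With subadditivity in hand, the key elementary lemma to prove is
\begin{equation*}
\omega_f(t) \;\leq\; \left(1 + \frac{t^2}{\delta^2}\right) \omega_f(\delta) \qquad \text{for all } t, \delta > 0.
\end{equation*}
The proof splits into two cases. If $t \leq \delta$, monotonicity gives $\omega_f(t) \leq \omega_f(\delta) \leq (1 + t^2/\delta^2)\omega_f(\delta)$. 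If $t > \delta$, subadditivity yields $\omega_f(t) \leq \omega_f(\lceil t/\delta \rceil \delta) \leq \lceil t/\delta \rceil\,\omega_f(\delta) \leq (1 + t/\delta)\,\omega_f(\delta)$, and since $t/\delta > 1$ implies $t/\delta \leq t^2/\delta^2$, the bound follows.

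Finally, I would plug this quadratic bound into the expectation and use the standard variance identity $\mathbb{E}[(S_q/q - y)^2] = y(1-y)/q \leq 1/(4q)$ to get
\begin{equation*}
\left|B_q(f)(y) - f(y)\right| \;\leq\; \omega_f(\delta)\left(1 + \frac{y(1-y)}{q\,\delta^2}\right) \;\leq\; \omega_f(\delta)\left(1 + \frac{1}{4 q\,\delta^2}\right).
\end{equation*}
Choosing $\delta = 1/\sqrt{q}$ produces exactly the constant $1 + 1/4 = 5/4$, and taking the supremum over $y \in [0,1]$ gives $\|B_q(f) - f\|_\infty \leq (5/4)\,\omega_f(1/\sqrt{q})$. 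The main subtlety is the quadratic comparison lemma for $\omega_f$: a naive $(1 + t/\delta)$ bound together with a Cauchy--Schwarz estimate of $\mathbb{E}|S_q/q - y|$ only gives the constant $3/2$, so one must keep the second moment inside the expectation and trade linearly in $\delta$ against the variance to obtain the sharp $5/4$.
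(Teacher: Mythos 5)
Your proof is correct. Note that the paper does not actually prove this lemma: it imports it as a black-box citation to \cite{popoviciu1935approximation}, so there is no internal argument to compare against. What you have reconstructed is the standard probabilistic proof of Popoviciu's theorem, and every step checks out: the identity $B_q(f)(y)-f(y)=\mathbb{E}[f(S_q/q)-f(y)]$ with $S_q\sim\mathrm{Binomial}(q,y)$, the reduction to the optimal (hence subadditive) modulus $\omega^*\le\omega_f$, the two-regime bound $\omega^*(t)\le(1+t^2/\delta^2)\,\omega^*(\delta)$, and the variance computation $\mathbb{E}[(S_q/q-y)^2]=y(1-y)/q\le 1/(4q)$ with $\delta=q^{-1/2}$ giving exactly the constant $1+\tfrac14=\tfrac54$. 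Your closing remark is also the right diagnosis of why the second moment must be kept inside the expectation: the cruder chain $\omega(t)\le(1+t/\delta)\omega(\delta)$ followed by Cauchy--Schwarz on $\mathbb{E}|S_q/q-y|$ only yields $\tfrac32$. The one point worth making explicit if you write this up in full is that $\omega^*$ is subadditive because the domain $[0,1]$ is an interval (one can always insert an intermediate point $z$ with $|x-z|\le s$ and $|z-y|\le t$); subadditivity of the optimal modulus can fail on disconnected domains.
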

Furthermore, note that for every $q \geq 1$ and any $p \leq q$, we have that $V(b_{p,q}) \leq 2$ because $b_{p,q}$ is a unimodal function taking value between $[0,1]$. Therefore $\frac{b_{p,q}}{2} \in \gen{\mathcal{K}}$.  
Hence, for every $p \in \{0,\ldots,k\}$, by setting $\lambda^{\actVr}_{p} = 2  g_{\actVr} \left (\frac{p}{k} \right)$, we obtain that  $B_k(g_{\actVr}) = \sum_{p=0}^k \lambda^{\actVr}_{p} \cdot \frac{b_{p,k}}{2}$ and by applying \Cref{lem:modulus} we get that $\| \sum_{p=0}^k \lambda^{\actVr}_k \cdot \frac{b_{p,k}}{2} - g_{\actVr} \|_{\infty} \leq \frac{5}{4} \cdot \omega \left( \frac{1}{\sqrt{k}} \right)$. In turn, we obtain
\begin{equation}
\label{eq:bound_lambda}
\lambda \left( \gen{\mathcal{K}},g, \frac{5}{4} \cdot \omega \left( \frac{1}{\sqrt{k}} \right) \right) \stackrel{(a)}{\leq} \sup_{\actVr \in \actSp} \sum_{p=0}^q |\lambda^{\actVr}_{p}| \leq 2 \sup_{\actVr \in \actSp} \sum_{p=0}^{k} \left| g_{\actVr} \left(  \frac{p}{k}\right) \right| \leq 2 k \cdot \sup_{\actVr \in \actSp} \|g(\actVr,\cdot) \|_{\infty},
\end{equation}
where $(a)$ holds because for every $\actVr \in \actSp$ the solution $ \left(\lambda^{\actVr}_p \right)_{p \in \{0,\ldots,k\}}$ and $(\frac{b_{p,k}}{2})_{p \in \{0,\ldots,k\}}$ is feasible for the inner minimization problem in the definition of the approximation parameter.
Let $G := \sup_{\actVr \in \actSp}  \|g(\actVr,\cdot) \|_{\infty}$. By applying \Cref{thm:generating_class} we obtain that, for every $\mesOut, \nu \in \mesSp$ such that $\mesOut \neq \nu$,
\begin{align*}
\UB_{\mathcal{I}}(\mesOut,\nu) &\leq   \inf_{k \geq 1} \left \{ \lambda \left( \gen{\mathcal{K}},g, \frac{5}{4} \cdot \omega \left( \frac{1}{\sqrt{k}} \right) \right) \cdot d(\mesOut,\nu) + \frac{5}{2} \cdot \omega \left( \frac{1}{\sqrt{k}} \right) \right \}\\
&\stackrel{(a)}{\leq}    \inf_{k \geq 1} \left \{ 2 k \cdot G \cdot d(\mesOut,\nu) + \frac{5}{2} \cdot \omega \left( \frac{1}{\sqrt{k}} \right) \right \}\\
&\stackrel{(b)}{\leq}   \inf_{k \geq 1} \left \{ 2 k \cdot G \cdot d(\mesOut,\nu) + \frac{5}{2} \cdot \frac{1}{k^{\alpha/2}} \right \}\\
&\stackrel{(c)}{\leq} 2  G \cdot d(\mesOut,\nu) \cdot \left \lceil \left(\frac{5}{4 G \cdot d(\mesOut,\nu)}\right)^{2/(2+\alpha)} \right \rceil\\
&\leq 2  G \cdot d(\mesOut,\nu) \cdot \left( \left( \frac{5}{4 G \cdot d(\mesOut,\nu)}\right)^{2/(2+\alpha)} +1 \right) \leq  \frac{5}{2}  \left( G \cdot d(\mesOut,\nu) \right)^{\alpha/(\alpha+2)} + 2 G \cdot d(\mesOut,\nu),
\end{align*}
where $(a)$ follows from \eqref{eq:bound_lambda}, $(b)$ holds by assumption on the modulus of continuity $\omega$ and $(c)$ holds because the real value $t = \left(\frac{5}{4 G \cdot d(\mesOut,\nu)}\right)^{2/(2+\alpha)}$ equalizes the terms $2t \cdot G \cdot \epsilon$ and $\frac{5}{2} \cdot \frac{1}{t^{\alpha/2}} $, therefore the integer $\tilde{k} = \left \lceil \left(\frac{5}{4 G \cdot d(\mesOut,\nu)}\right)^{2/(2+\alpha)} \right \rceil$ satisfies $2\tilde{k} \cdot G \cdot d(\mesOut,\nu) \geq\frac{5}{2} \cdot \frac{1}{\tilde{k}^{\alpha/2}}$.
\end{proof}

\section{Proofs for  \Cref{sec:beyond_SAA} and additional results}
\label{sec:apx_beyond_SAA}

\subsection{Proofs for \Cref{sec:rate_opt_pricing} }

For notational convenience we define in this subsection the mapping,
\begin{equation*}
 \mathtt{ORACLE}: \begin{cases}
      \mesSp \to \actSp\\
      \mesOut \mapsto \actVr \text{ s.t. } \actVr \in \argmax_{\actVr \in \actSp} \g[\actVr]{\mesOut}.
 \end{cases}   
\end{equation*}
We note that for the pricing problem, this function is well-defined because $\argmax_{\actVr \in \actSp} \g[\actVr]{\mesOut}$ is never empty (as we are maximizing an upper semicontinuous function over a compact set).

\begin{lemma}
\label{lem:cdf_vs_dist}
Let $\mesOut$ and $\nu$ be two probability measures on $\mathbb{R}$ with respective cumulative distributions $F$ and $H$. Then, for every $\envVr_1 \leq \envVr_2$, we have that,
$F(\envVr_1)  \leq H(\envVr_2) + \frac{d_W\left(\mu,\nu \right)}{\envVr_2 - \envVr_1}.$
\end{lemma}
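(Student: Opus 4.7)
The plan is to prove the lemma via the Kantorovich--Rubinstein dual representation of the Wasserstein distance (already introduced in \Cref{sec:IPM}), which says $d_W(\mu,\nu) = \sup_{f:\Lip{f}\le 1} \big|\int f\,d\mu - \int f\,d\nu\big|$. The strategy is to exhibit a single 1-Lipschitz test function $f$ whose integrals against $\mu$ and $\nu$ are separated by at least $(\envVr_2-\envVr_1)\cdot(F(\envVr_1) - H(\envVr_2))$. Rearranging then yields the claim.

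The natural candidate is the 1-Lipschitz ``ramp'' that caps the indicator $\mathbbm{1}\{t\le\envVr_1\}$ from above and the indicator $\mathbbm{1}\{t\le\envVr_2\}$ from below, scaled appropriately. Concretely, for $\envVr_1<\envVr_2$ I define
\begin{equation*}
f(t) = \begin{cases} \envVr_2 - \envVr_1 & \text{if } t \le \envVr_1,\\ \envVr_2 - t & \text{if } \envVr_1 < t < \envVr_2,\\ 0 & \text{if } t\ge \envVr_2,\end{cases}
\end{equation*}
which is continuous, piecewise linear, and has Lipschitz constant $1$. The key pointwise sandwich is
\begin{equation*}
(\envVr_2-\envVr_1)\cdot \mathbbm{1}\{t\le \envVr_1\} \le f(t) \le (\envVr_2 - \envVr_1)\cdot \mathbbm{1}\{t\le\envVr_2\},
\end{equation*}
which holds case-by-case on the three regions above.

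Integrating the left inequality against $\mu$ gives $\int f\,d\mu \ge (\envVr_2-\envVr_1)\,F(\envVr_1)$, and integrating the right inequality against $\nu$ gives $\int f\,d\nu \le (\envVr_2-\envVr_1)\,H(\envVr_2)$. Subtracting and then applying the dual representation of $d_W$ yields
\begin{equation*}
(\envVr_2 - \envVr_1)\big(F(\envVr_1) - H(\envVr_2)\big) \le \int f\,d\mu - \int f\,d\nu \le d_W(\mu,\nu),
\end{equation*}
and dividing by $\envVr_2-\envVr_1>0$ gives the stated bound. The degenerate case $\envVr_1=\envVr_2$ is handled separately: the right-hand side is then $+\infty$ (with the usual convention $a/0=+\infty$ for $a>0$, or trivially when $d_W=0$ and the cdfs coincide), so the inequality is vacuous; and when $F(\envVr_1)\le H(\envVr_2)$ holds outright, there is nothing to prove.

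There is no real obstacle here beyond guessing the correct test function; once one writes down the ramp $f$ above, the pointwise bounds between $f$ and the two indicator functions are immediate and the Kantorovich--Rubinstein duality does the rest. The only care needed is to remember to scale the slope so that $f$ lies in the admissible class $\gen{\mathcal{W}}$ (slope exactly $-1$ on the interior), which is what forces the factor $\envVr_2-\envVr_1$ to appear in the denominator of the final bound.
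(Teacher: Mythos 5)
Your proof is correct, but it takes a genuinely different route from the paper's. The paper proves \Cref{lem:cdf_vs_dist} by invoking the one-dimensional identity $d_W(\mesOut,\nu)=\int |F(\envVr)-H(\envVr)|\,d\envVr$, restricting the integral to $[\envVr_1,\envVr_2]$, and lower-bounding the integrand pointwise by $F(\envVr_1)-H(\envVr_2)$ using monotonicity of the cdfs; this is a two-line argument once the $L^1$-of-cdfs formula is granted. You instead work from the Kantorovich--Rubinstein dual (in fact only its easy direction, $\int f\,d\mesOut-\int f\,d\nu\le d_W(\mesOut,\nu)$ for $1$-Lipschitz $f$, which follows directly from the coupling definition), exhibiting the explicit ramp sandwiched between the scaled indicators $(\envVr_2-\envVr_1)\mathbbm{1}\{t\le\envVr_1\}$ and $(\envVr_2-\envVr_1)\mathbbm{1}\{t\le\envVr_2\}$. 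Your pointwise sandwich, the Lipschitz verification, and the integration steps all check out, and the boundedness of $f$ makes the integrals unproblematic for arbitrary probability measures on $\mathbb{R}$. What your approach buys is self-containedness: it avoids relying on the $L^1$ representation of the one-dimensional Wasserstein distance (itself a nontrivial, if standard, fact) and makes transparent exactly which test function forces the $(\envVr_2-\envVr_1)^{-1}$ factor. What the paper's approach buys is brevity, and it sits naturally alongside the other proofs in the appendix (e.g.\ \Cref{lem:ski_rental_cost} and \Cref{prop:diff_rev_ski}) that repeatedly exploit the same cdf-integral representation.
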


\begin{proof}[\textbf{Proof of \Cref{lem:cdf_vs_dist}}]
Let $F$ (resp. $H$) be the cdf of $\mesOut$ (resp. $\nu$) and consider $\envVr_1 < \envVr_2$. Then we have that,
\begin{align*}
d_W \left( \mesOut,\nu\right) &= \int_{\envVr \in \envSp} |F(\envVr) - H(\envVr)| d\envVr\\
 &\geq \int_{\envVr_1}^{\envVr_2 }  F(\envVr) - H(\envVr) d\envVr \stackrel{(a)}{\geq} \int_{\envVr_1}^{\envVr_2 }  F(\envVr_1) - H(\envVr_2) d\envVr
 = \left( \envVr_2 - \envVr_1\right) \cdot \left(F(\envVr_1) - H(\envVr_2) \right),
\end{align*}
where $(a)$ holds because $F$ and $H$ are non-decreasing functions.
\end{proof}

\begin{proof}[\textbf{Proof of \Cref{prop:diff_rev}}]
In what follows, we associate to the measure $\mesOut$ (resp. $\nu$) its cumulative distribution $F$ (resp. $H$). 
We have that,
\begin{align*}
\g[\actVr_2 ]{\nu}   - \g[\actVr_1 ]{\mesOut} &= \actVr_2 \cdot \left( 1 - H \left( \actVr_2 \right) \right) - \actVr_1 \cdot \left( 1- F \left( \actVr_1 \right) \right)\\
&= \actVr_2 \cdot \left( F(\actVr_1) - H \left( \actVr_2 \right) \right) + \left( \actVr_2 - \actVr_1 \right) \cdot \left(1 - F \left(\actVr_1 \right) \right)\\
&\stackrel{(a)}{\leq} \left(\actVr_2  -\actVr_1 \right) \cdot \left( 1 - F \left( \actVr_1 \right) \right) + \actVr_2 \cdot \frac{d_W \left( \mesOut,\nu\right) }{ \actVr_2 - \actVr_1} \leq \left(\actVr_2  -\actVr_1 \right) +  M \cdot \frac{d_W \left( \mesOut,\nu\right) }{ \actVr_2 - \actVr_1},
\end{align*}
where $(a)$ is a consequence of \Cref{lem:cdf_vs_dist}.
\end{proof}

\begin{proof}[\textbf{ Proof of \Cref{thm:ub_pricing_W}}]

\textit{Step 1: Regret upper bound.} Fix $\delta > 0$. We first show the following upper bound on the asymptotic worst-case regret of $\dev{\delta}$.
\begin{equation}
\label{eq:pricing_ub_general_delta}
\mathfrak{R}_{W,\infty}^{\dev{\delta}}(\epsilon)  \leq 2 \sqrt{M \cdot \epsilon} + \delta + M \cdot \frac{\epsilon}{\delta}
\end{equation}
To prove this result, we first bound the quantity 
$\sup_{\mesOut \in \mesSp} \sup_{\nu \in \mathcal{U}_\epsilon(\mesOut)} | \opt(\mesOut) -  \g[\dev{\delta}(\nu)]{\mesOut}|$
and then apply \Cref{thm:reduction_stat}.

Consider $\mesOut, \nu \in \mesSp$. We have that,
\begin{align}
\opt(\mesOut) - \g[\dev{\delta}(\nu)]{\mesOut} &= \left( \opt(\mesOut) - \opt(\nu) \right) + \left( \opt(\nu) -   \g[\dev{\delta}(\nu)]{\mesOut} \right) \nonumber \\
&\stackrel{(a)}{\leq} 2 \sqrt{M \cdot d_{W}\left(\mesOut,\nu \right)} + \delta + M \cdot \frac{d_W \left(\mesOut,\nu\right)}{\delta}. \label{eq:step1_dev_SAA}
\end{align}
where $(a)$ follows from the following lemma.

\begin{lemma}
\label{lem:compare_rev}
Consider the pricing problem.  Fix $\delta > 0$. For every $\mesOut$ and $\nu \in \mesSp$, we have that,
\begin{align*}
\opt \left(\mesOut\right) - \opt \left( \nu \right) &\leq 2 \sqrt{M \cdot d_{W}\left(\mesOut,\nu \right)},\\
\opt(\nu) -   \g[\dev{\delta}(\nu)]{\mesOut} &\leq \delta + M \cdot \frac{d_W \left(\mesOut,\nu\right)}{\delta}.
\end{align*}
\end{lemma}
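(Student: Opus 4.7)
The plan is to derive both inequalities as direct consequences of \Cref{prop:diff_rev}, by choosing the free price gap $\actVr_2-\actVr_1$ appropriately in each case. Recall that \Cref{prop:diff_rev} gives, for any two prices $\actVr_1<\actVr_2$ and any $\mesOut,\nu\in\mesSp$, the balance inequality $\g[\actVr_2]{\nu}-\g[\actVr_1]{\mesOut}\le(\actVr_2-\actVr_1)+M\cdot d_W(\mesOut,\nu)/(\actVr_2-\actVr_1)$.

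For the first inequality, fix $\actVr^*_{\mesOut}\in\argmax_{\actVr\in\actSp}\g[\actVr]{\mesOut}$. If $\opt(\mesOut)\le\sqrt{M\cdot d_W(\mesOut,\nu)}$ the bound is immediate, since $\opt(\nu)\ge 0$. Otherwise, noting that $\actVr^*_{\mesOut}\ge\opt(\mesOut)>\sqrt{M\cdot d_W(\mesOut,\nu)}$, the deflated price $\tilde\actVr:=\actVr^*_{\mesOut}-\sqrt{M\cdot d_W(\mesOut,\nu)}$ lies in $[0,M]$. Applying \Cref{prop:diff_rev} with the roles of $\mesOut$ and $\nu$ swapped, to $\actVr_1=\tilde\actVr$ and $\actVr_2=\actVr^*_{\mesOut}$, yields $\opt(\mesOut)-\g[\tilde\actVr]{\nu}\le(\actVr^*_{\mesOut}-\tilde\actVr)+M\cdot d_W(\mesOut,\nu)/(\actVr^*_{\mesOut}-\tilde\actVr)=2\sqrt{M\cdot d_W(\mesOut,\nu)}$, where the choice of $\tilde\actVr$ is precisely the one that equalizes the two terms on the right. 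Since $\g[\tilde\actVr]{\nu}\le\opt(\nu)$, the first bound follows.

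For the second inequality, let $\actVr^*_\nu\in\argmax_{\actVr\in\actSp}\g[\actVr]{\nu}$. If $\actVr^*_\nu+\delta\ge 0$, then by \Cref{def:deflated_policy} the $\delta$-SAA policy on input $\nu$ posts exactly $\actVr^*_\nu+\delta$, and \Cref{prop:diff_rev} applied with $\actVr_1=\actVr^*_\nu+\delta$ and $\actVr_2=\actVr^*_\nu$ (so $\actVr_2-\actVr_1=|\delta|$) gives $\opt(\nu)-\g[\dev{\delta}(\nu)]{\mesOut}\le|\delta|+M\cdot d_W(\mesOut,\nu)/|\delta|$. If instead $\actVr^*_\nu+\delta<0$, the policy clips to the closest point of $[0,M]$ and posts $0$, generating zero revenue; but then $\opt(\nu)\le\actVr^*_\nu<|\delta|$, and the inequality is again trivial.

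The main obstacle is purely bookkeeping at the boundary of $\actSp$: ensuring that the chosen candidate price $\tilde\actVr$ remains in $[0,M]$ and handling the degenerate case where the $\delta$-shifted price would fall outside $[0,M]$. Both cases are resolved by the trivial magnitude comparisons above, so the proof needs no tools beyond \Cref{prop:diff_rev} and the symmetry of $d_W$.
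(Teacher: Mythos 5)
Your proposal is correct and follows essentially the same route as the paper's proof: both inequalities are obtained by applying \Cref{prop:diff_rev} (with the distance symmetry for the first bound) after deflating the relevant optimal price by $\sqrt{M\cdot d_W(\mesOut,\nu)}$ or by $|\delta|$, with a trivial case handling the situation where the deflated price would leave $[0,M]$. The only cosmetic difference is that you split on $\opt(\mesOut)\le\sqrt{M\cdot d_W(\mesOut,\nu)}$ where the paper splits on $\oracle{\mesOut}\le\sqrt{M\cdot d_W(\mesOut,\nu)}$; both are resolved by the same observation that $\g[\actVr]{\cdot}\le\actVr$ in pricing.
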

By taking the supremum in \eqref{eq:step1_dev_SAA}, we obtain that
\begin{equation}
\label{eq:DRO_ub_pricing_W}
\sup_{\mesOut \in \mesSp} \sup_{\nu \in \mathcal{U}_\epsilon(\mesOut)} | \opt(\mesOut) -  \g[\dev{\delta}(\nu)]{\mesOut}| \leq 2 \sqrt{M \cdot \epsilon} + |\delta| + M \cdot \frac{\epsilon}{|\delta|}.
\end{equation}
We conclude that \eqref{eq:pricing_ub_general_delta} holds by applying \Cref{thm:reduction_stat} and by using the upper bound derived in \eqref{eq:DRO_ub_pricing_W} on the uniform DRO problem.

Furthermore, we remark that $\delta = \sqrt{M \cdot \epsilon}$ minimizes the RHS of \eqref{eq:pricing_ub_general_delta} and yields the bound,
\begin{equation*}
\mathfrak{R}_{W,\infty}^{\dev{\delta}}(\epsilon)  \leq 4 \sqrt{M \cdot \epsilon}.
\end{equation*}
This shows the first part of the theorem.

\textit{Step 2: Regret lower bound.} We next provide a universal lower bound across all possible data-driven policies. Fix $\epsilon > 0$ and let,  $\tilde{\nu} = \delta_{\frac{M}{2}}$ the probability measure which puts all the mass at $\frac{M}{2}$. Let $\tilde{\mesOut}$ be the probability measure defined as, 
\begin{equation*}
\tilde{\mesOut} \left( \envVr \right) := \begin{cases}
2 \sqrt{ \frac{\epsilon}{M}} \qquad \text{ if $\envVr = \frac{M}{2} - \frac{\sqrt{M \cdot \epsilon} }{2}$}\\
1 - 2 \sqrt{ \frac{\epsilon}{M}} \qquad \text{ if $\envVr = \frac{M}{2} $}.
\end{cases}
\end{equation*}
Note that for the Wasserstein distance we have that $\tilde{\nu} \in \mathcal{U}_{\tilde{\mu}}(\epsilon)$. Therefore $\mathcal{U}_{\tilde{\mu}}(\epsilon) \cap \mathcal{U}_{\tilde{\nu}}(\epsilon)$ is non-empty and by \Cref{prop:lower_bound}, we have for every $\numS \geq 1$ that,
\begin{equation*}
 \inf_{\pi \in \Pi} \mathfrak{R}_{W,\numS}^{\pi}(\epsilon) \geq \sup_{\beta \in \Delta \left( \{ \tilde{\mesOut} , \tilde{\nu} \} \right) } \inf_{\actVr \in \actSp} \mathbb{E}_{\mesOut \sim \beta} \left[\opt(\mesOut) - \g[\actVr]{\mesOut}   \right]. 
\end{equation*}

Let $\tilde{\beta}$ be the prior which puts mass $\frac{1}{2}$ on $\tilde{\nu}$ and mass $\frac{1}{2}$ on $\tilde{\mesOut}$.
Then, for every $\actVr \in [0, \frac{M}{2} - \frac{\sqrt{M \cdot \epsilon} }{2}]$, we have that,
\begin{equation*}
\mathbb{E}_{\mesOut \sim \tilde{\beta}} \left[ \opt(\mesOut) - \g[\actVr]{\mesOut}   \right] =   \mathbb{E}_{\mesOut \sim \tilde{\beta}} \left[ \opt(\mesOut)  \right] - \actVr = \frac{M}{2} -  \frac{\sqrt{M \cdot \epsilon}}{4} - \actVr \stackrel{(a)}{\geq} \frac{\sqrt{M \cdot \epsilon}}{4}, 
\end{equation*}
where $(a)$ holds because $\actVr \leq \frac{M}{2} - \frac{\sqrt{M \cdot \epsilon} }{2}$.
Furthermore, for every $\actVr \in \left(\frac{M}{2} - \frac{\sqrt{M \cdot \epsilon} }{2},\frac{M}{2} \right]$, we have that,
\begin{align*}
\mathbb{E}_{\mesOut \sim \tilde{\beta}} \left[ \opt(\mesOut) - \g[\actVr]{\mesOut}   \right] &=\frac{M}{2} -  \frac{\sqrt{M \cdot \epsilon}}{4} - \mathbb{E}_{\mesOut \sim \tilde{\beta}} 
\left[ \g[\actVr]{\mesOut}   \right]  \\
&= \frac{M}{2} -  \frac{\sqrt{M \cdot \epsilon}}{4}  - \left(1-  \sqrt{\frac{\epsilon}{M}}\right) \cdot \actVr \geq   \frac{\sqrt{M \cdot \epsilon}}{4}.
\end{align*}
We note that, for $\actVr > \frac{M}{2}$, we have that $\mathbb{E}_{\mesOut \sim \tilde{\beta}} \left[ \opt(\mesOut) - \g[\actVr]{\mesOut}   \right] = \frac{M}{2} -  \frac{\sqrt{M \cdot \epsilon}}{4}  $ because, $\g[\actVr]{\mesOut} = 0$ when $\mesOut = \tilde{\mu}$ and $\mesOut = \tilde{\nu}$.  

Therefore, 
\begin{equation*}
    \inf_{\pi \in \Pi} \mathfrak{R}_{W,\numS}^{\pi}(\epsilon) \geq \sup_{\beta \in \Delta \left( \{ \tilde{\mesOut} , \tilde{\nu} \} \right) } \inf_{\actVr \in \actSp} \mathbb{E}_{\mesOut \sim \beta} \left[\opt(\mesOut) - \g[\actVr]{\mesOut}   \right]  \geq \frac{\sqrt{M \cdot \epsilon}}{4}.
\end{equation*}
And by taking the limit we obtain the desired result,
\begin{equation*}
    \inf_{\pi \in \Pi} \limsup_{\numS \to \infty} \mathfrak{R}_{W,\numS}^{\pi}(\epsilon) \stackrel{(a)}{=}  \limsup_{\numS \to \infty} \inf_{\pi \in \Pi} \mathfrak{R}_{W,\numS}^{\pi}(\epsilon) \geq  \frac{\sqrt{M \cdot \epsilon}}{4},
\end{equation*}
where $(a)$ holds because $\pi$ observes $\numS$.

\end{proof}

\begin{proof}[\textbf{Proof of \Cref{lem:compare_rev}}]

\textit{Step 1:} We first prove that, $\opt \left(\mesOut\right) - \opt \left( \nu \right) \leq 2 \sqrt{M \cdot d_{W}\left(\mesOut,\nu \right)}$.

If $\oracle{\mesOut} \leq \sqrt{M \cdot d_{W}\left(\mesOut,\nu \right)}$, we have that
\begin{equation*}
\opt \left(\mesOut\right) - \opt \left( \nu \right) \leq \g[\oracle{\mesOut}]{\mesOut}  \stackrel{(a)}{\leq} \oracle{\mesOut} \leq \sqrt{M \cdot d_{W}\left(\mesOut,\nu \right)},
\end{equation*}
where $(a)$ holds for every $\actVr \in \actSp$ and any distribution $\tilde{\mu}$, we have that $\g[\actVr]{\tilde{\mu}} \leq \actVr$ for the pricing problem. 

Assume that, $\oracle{\mesOut} > \sqrt{M \cdot d_{W}\left(\mesOut,\nu \right)}$. Let $\tilde{\actVr} := \oracle{\mesOut} - \sqrt{M \cdot d_{W}\left(\mesOut,\nu \right)}$.
We have,
\begin{align*}
\opt \left(\mesOut\right) - \opt \left( \nu \right)  &\leq \g[\oracle{\mesOut}]{\mesOut} - \g[\tilde{\actVr}]{\nu}\\
&\stackrel{(a)}{\leq} \oracle{\mesOut} - \tilde{\actVr} + M \cdot \frac{d_W \left(\mesOut,\nu\right)}{\oracle{\mesOut} - \tilde{\actVr}} \\
&= 2 \sqrt{M \cdot d_{W}\left(\mesOut,\nu \right)},
\end{align*}
where $(a)$ follows from \Cref{prop:diff_rev}.

\textit{Step 2:} We now show that, $\opt(\nu) -   \g[\dev{\delta}(\nu)]{\mesOut} \leq \delta + M \cdot \frac{d_W \left(\mesOut,\nu\right)}{\delta}.$

Similarly to step $1$, we note that if, $\oracle{\nu} \leq \delta$, we have that
\begin{equation*}
\opt(\nu) -   \g[\dev{\delta}(\nu)]{\mesOut} \leq \opt(\nu)    \leq \oracle{\nu} \leq \delta.
\end{equation*}
Furthermore, if $\oracle{\nu} > \delta$, then
\begin{align*}
\opt(\nu) -   \g[\dev{\delta}(\nu)]{\mesOut} = \g[\oracle{\nu}]{\nu} -  \g[\oracle{\nu} - \delta]{\mesOut}   \stackrel{(a)}{\leq} \delta + M \cdot \frac{d_W \left(\mesOut,\nu\right)}{\delta},
\end{align*}
where $(a)$ follows from \Cref{prop:diff_rev}.

\end{proof}

\subsection{Proofs for \Cref{sec:UDRO}}

\begin{proof}[\textbf{Proof of \Cref{prop:RDRO_to_DRO}}]
Fix $\epsilon > 0$. We first show that,
\begin{equation}
\label{eq:optimality_DRO_metric}
\mathfrak{R}_{\scaleto{\mathcal{I},\mathrm{DRO}}{6pt}}^{\pi^{\mathrm{DRO}}}(\epsilon) \leq \inf_{\pi \in \agSp} \mathfrak{R}_{\scaleto{\mathcal{I},\mathrm{DRO}}{6pt}}^{\pi}(\epsilon)+ \sup_{\mesOut \in \mesSp}  \sup_{\nu \in \mathcal{U}_{\epsilon}(\mesOut)} \left \{ \opt(\mesOut) - \opt(\nu) \right \}.
\end{equation}
For any fixed $\nu$ and any $\pi \in \agSp$, we have that,
\begin{align*}
\sup_{\mu\in \mathcal{U}_\epsilon(\nu)} (\opt(\mesOut) - \g[\pi^{\mathrm{DRO}}(\nu)]{\mesOut})
&\leq \sup_{\mu\in \mathcal{U}_\epsilon(\nu)} \opt(\mesOut)+\sup_{\mu\in \mathcal{U}_\epsilon(\nu)}\left \{ - \g[\pi^{\mathrm{DRO}}(\nu)]{\mesOut}\right\}
\\ &= \sup_{\mu\in \mathcal{U}_\epsilon(\nu)} \opt(\mesOut) - \inf_{\mu\in \mathcal{U}_\epsilon(\nu)}\g[\pi^{\mathrm{DRO}}(\nu)]{\mesOut}
\\ &\stackrel{(a)}{\leq} \sup_{\mu\in \mathcal{U}_\epsilon(\nu)} \opt(\mesOut) - \inf_{\mu\in \mathcal{U}_\epsilon(\nu)}\g[\pi(\nu)]{\mesOut}
\\ &= \sup_{\mu\in \mathcal{U}_\epsilon(\nu)} \opt(\mesOut) - \inf_{\mu\in \mathcal{U}_\epsilon(\nu)}\left\{\g[\pi(\nu)]{\mesOut}-\opt(\mu)+\opt(\mu)\right\}
\\ &\leq \sup_{\mu\in \mathcal{U}_\epsilon(\nu)} \opt(\mesOut) - \inf_{\mu\in \mathcal{U}_\epsilon(\nu)}\left\{\g[\pi(\nu)]{\mesOut}-\opt(\mu)\right\}- \inf_{\mu\in \mathcal{U}_\epsilon(\nu)}\opt(\mu)
\\ &= \sup_{\mu\in \mathcal{U}_\epsilon(\nu)}\left\{\opt(\mu)-\g[\pi(\nu)]{\mesOut}\right\}
+\sup_{\mu\in \mathcal{U}_\epsilon(\nu)} \opt(\mesOut)
-\inf_{\mu\in \mathcal{U}_\epsilon(\nu)}\opt(\mu)
\end{align*}
Therefore by taking a supremum over $\nu \in \mesSp$ we obtain that,
\begin{align*}
\sup_{\nu \in \mesSp} \sup_{\mu\in \mathcal{U}_\epsilon(\nu)} (\opt(\mesOut) - \g[\pi^{\mathrm{DRO}}(\nu)]{\mesOut})
\leq \sup_{\nu \in \mesSp} \sup_{\mu\in \mathcal{U}_\epsilon(\nu)}(\opt(\mu)-\g[\pi(\nu)]{\mesOut})
+\sup_{\nu \in \mesSp} \sup_{\mu\in \mathcal{U}_{2\epsilon}(\nu)} |\opt(\mesOut)-\opt(\nu)|
\end{align*}
Given  that this last inequality holds for every $\pi \in \agSp$ we obtain the desired inequality \eqref{eq:optimality_DRO_metric} by taking an infimum over $\pi \in \agSp$.

Furthermore, \Cref{lem:lb_agnostic_proof} implies that,
\begin{equation*}
    \inf_{\pi \in \agSp} \mathfrak{R}_{\scaleto{\mathcal{I},\mathrm{DRO}}{6pt}}^{\pi}(\epsilon) \leq \inf_{\pi \in \Pi} \mathfrak{R}_{\mathcal{I},\infty}^{\pi}(\epsilon).
\end{equation*}
Finally, given that $\pi^{\mathrm{DRO}}$ is sample-size agnostic, \Cref{thm:reduction_stat} allows us to lower bound the LHS of \eqref{eq:optimality_DRO_metric} as follows,
\begin{equation*}
    \lim_{\eta \to \epsilon^{-}} \mathfrak{R}_{\mathcal{I},\infty}^{\pi^{\mathrm{DRO}}}(\eta) \leq \mathfrak{R}_{\scaleto{\mathcal{I},\mathrm{DRO}}{6pt}}^{\pi^{\mathrm{DRO}}}(\epsilon).
\end{equation*}
These last two inequalities, combined with \eqref{eq:optimality_DRO_metric}, imply the desired result.
\end{proof}

\subsection{Additional results}
\label{sec:apx_optimal_RDRO}

\begin{proposition}[Optimality of the RDRO policy]
\label{prop:optimal_RDRO}
Let $\mathcal{I} = \left( \actSp, \envSp, \mesSp, d, g, \mathtt{ORACLE} \right)$ be a data-driven decision problem in a heterogeneous environment and assume $d$ is convex and  satisfies the ETC property.
Then for every $\epsilon \geq 0$,
 \begin{equation*}
\lim_{\eta \to \epsilon^{-}}  \mathfrak{R}_{\mathcal{I},\infty}^{\pi^{\mathrm{RDRO}}}(\eta)  \leq \inf_{\pi \in \Pi} \mathfrak{R}_{\mathcal{I},\infty}^{\pi}(\epsilon). 
 \end{equation*}
\end{proposition}

\begin{proof}[\textbf{Proof of \Cref{prop:optimal_RDRO}}]
Note that for every $\epsilon \geq 0$, we have that
\begin{equation*}
\mathfrak{R}_{\scaleto{\mathcal{I},\mathrm{DRO}}{6pt}}^{\pi^{\mathrm{RDRO}}} (\epsilon)  \stackrel{(a)}{=} \inf_{\pi \in \agSp} \mathfrak{R}_{\scaleto{\mathcal{I},\mathrm{DRO}}{6pt}}^{\pi} (\epsilon) \stackrel{(b)}{\leq} \inf_{\pi \in \Pi} \mathfrak{R}_{\mathcal{I},\infty}^{\pi}(\epsilon),
\end{equation*}
where $(a)$ follows from the definition of $\pi^{\mathrm{RDRO}}$ and $(b)$ holds by \Cref{lem:lb_agnostic_proof}.

Finally, because $\pi^{\mathrm{RDRO}}$ is sample-size-agnostic, and $d$ is a convex distance satisfying the ETC property, \Cref{thm:reduction_stat} implies that,
\begin{equation*}
\lim_{\eta \to \epsilon^{-}} \mathfrak{R}_{\mathcal{I},\infty}^{\pi^{\mathrm{RDRO}}}(\eta) \leq \mathfrak{R}_{\scaleto{\mathcal{I},\mathrm{DRO}}{6pt}}^{\pi^{\mathrm{RDRO}}} (\epsilon)  \leq \inf_{\pi \in \Pi} \mathfrak{R}_{\mathcal{I},\infty}^{\pi}(\epsilon)
\end{equation*}

\end{proof}

\section{Results for the ski-rental problem} \label{sec:ski}
We now study a stochastic version of the prototypical ski-rental.
In this problem, renting skis costs $\$ 1$ per unit of time while buying them costs $\$ b$ up-front, for some real value $b$. 
The environment space $\envSp$ represents the set of possible lengths of the ski trip and a decision $x$ represents the duration after which skis should be bought (if the ski trip has not ended by that time).  We call $x$ the \textit{rental duration}.  Let $\actSp = \envSp = [0,M]$. We note that when $\actVr=M$ then skis should never be bought. The set of probability measures $\mesSp$ is the whole space of probability measures on $[0,M]$. For every rental duration $\actVr \in \actSp$ and trip length $\envVr \in \envSp$, the cost incurred is
\begin{equation}
\label{eq:defS}
    g(\actVr,\envVr) = \envVr \cdot \mathbbm{1} \left\{ \envVr \leq \actVr  \right\} + \left( b +  x\right) \cdot \mathbbm{1} \left \{ \envVr > \actVr \right \}.
\end{equation}
Finally, the goal of decision-maker is to minimize costs\footnote{Our framework was introduced for maximization problems (see \Cref{sec:formulation}) but it readily extends to minimization problems.} and we let $\opt(\mesOut) = \inf_{\actVr \in \actSp} \g[\actVr]{\mesOut}$.
Similarly to the pricing problem, one may remark that the ski-rental objective $g$ defined in \eqref{eq:defS} lies in the linear span of the maximum generating class associated to the Kolmogorov distance.  However, since the function $g(\actVr,\cdot)$ is again not continuous for $\actVr > 0$, it does not lie in the closure of the linear span of the generating class for the Wasserstein distance. Therefore, similarly to the pricing problem, one may expect vanishing regret for SAA for the first two distances, but there is no guarantee on the performance of SAA under the Wasserstein distance. We show in \Cref{sec:ski_K} that while the regret of SAA vanishes for Kolmogorov, it does not achieve the optimal dependence in the scale of the problem $M$ and one may actually improve upon SAA. In \Cref{sec:ski_W} we develop a robustified policy for the ski-rental problem under the Wasserstein heterogeneity.

\subsection{Limitation of SAA for ski-rental under Kolmogorov and robustifications}
\label{sec:ski_K}
Under the Kolmogorov distance, we have established in \Cref{sec:newsvendor_SAA} and \Cref{sec:pricing_SAA} that SAA was near-optimal for both the newsvendor and pricing problems. We establish next that the performance of SAA for the ski-rental problem under these distances is vanishing as $\epsilon$ goes to $0$ but, surprisingly, highly suboptimal.
\begin{proposition}
\label{thm:ski_rental_K}
For the data-driven ski-rental problem in a heterogenous environment under the Kolmogorov distance, we have for $\epsilon$ small enough that,
\begin{align*}
 M \cdot \epsilon  &\leq \mathfrak{R}_{K,\infty}^{\SAA}(\epsilon) \leq 2 \left(M +b \right) \cdot \epsilon,\\
\frac{\epsilon \cdot b}{8}  &\leq \inf_{\pi \in \Pi}\mathfrak{R}_{K,\infty}^{\pi}(\epsilon) \leq b \cdot \left[ \log \left(\epsilon^{-1}\right) +2  \right] \cdot \epsilon.
\end{align*}
\end{proposition}

\Cref{thm:ski_rental_K} shows that the asymptotic worst-case regret of SAA  scales linearly with the radius of heterogeneity $\epsilon$ and with the size of the support $M$, under the Kolmogorov distance. We also characterize the rate of the optimal asymptotic worst-case regret as a function of $\epsilon$. The upper bound on the asymptotic regret is obtained in \cite{diakonikolas2021learning} through a variant of the SAA policy, which caps the maximum number of days to rent. We complement this result by providing a matching lower bound up to a logarithmic factor. Note that, in contrast to the pricing and newsvendor problems, the scaling of SAA is not optimal in $M$. As the scale of the support grows, the asymptotic worst-case regret of SAA is considerably worse than the optimal achievable rate. This highlights that the need for robustification can also be present under the Kolmogorov distance.

Similarly to the previous problems, the objective function of the ski-rental problem is such that the approximation parameter of the function $g$ is small with respect to the generating class of the Kolmogorov distance. Therefore, while the analysis and intuition developed through the approximation parameter provides the correct understanding on the performance of SAA with respect to the radius of heterogeneity $\epsilon$, the dependence in other parameters of the problem need to be tackled by analyzing more carefully the structure of the problem of interest.

\subsection{Ski-rental under Wasserstein distance: failure of SAA and near-optimal policy}
\label{sec:ski_W}
We proved in \Cref{sec:pricing_SAA} that SAA performs arbitrarily poorly for pricing under Wasserstein heterogeneity. We now show that SAA falters similarly for the Wasserstein ski-rental problem. Furthermore, we design and analyze an alternative policy which inflates the action selected by SAA and achieves the best rate possible as a function of the heterogeneity radius.
\begin{proposition}
\label{thm:ski_rental_W}
For the data-driven ski-rental problem in a heterogeneous environment under the Wasserstein distance, for any $\epsilon$ small enough, we have
\begin{equation*}
 \frac{b}{2}  \leq \mathfrak{R}_{W,\infty}^{\SAA}(\epsilon) \leq 2 \left(b + \epsilon \right).
 \end{equation*}
Recall the policy $\dev{{\delta}}$ defined in \Cref{sec:rate_opt_pricing}. By letting $\tilde{\delta} = -\sqrt{b \cdot \epsilon}$ we get that,
 \begin{equation*}
 \frac{\sqrt{b \cdot \epsilon}}{4}  \leq \inf_{\pi \in \Pi} \mathfrak{R}_{W,\infty}^{\pi}(\epsilon) \leq \mathfrak{R}_{W,\infty}^{\dev{\tilde{\delta}}}(\epsilon) \leq 4 \cdot \sqrt{b \cdot \epsilon} + 2 \epsilon.
 \end{equation*}

\end{proposition}
\Cref{thm:ski_rental_W} formalizes the failure of SAA for ski-rental under Wasserstein distance. 
A notable fact about the ski-rental problem under Wasserstein heterogeneity is that both the regret of SAA and of the optimal data-driven decision do not scale with the size of the support. For pricing,  SAA scaled linearly with $M$ and the optimal policy scales in $\sqrt{M}$. The fact that $\mathfrak{R}_{W,\infty}^{\SAA}(\epsilon)=O(1)$ in ski-rental requires a separate non-trivial proof in \Cref{thm:ski_rental_W}.\footnote{The fact that $\mathfrak{R}_{W,\infty}^{\SAA}(\epsilon)\le M$ in pricing was vacuous.}

We prove the upper bound on the minimal asymptotic worst-case regret by an argument similar to the one developed for the pricing problem. Indeed, we show that for a well-crafted deviation parameter $\delta$, the policy $\delta$-SAA performs well under the Wasserstein ski-rental problem. In contrast to the pricing problem, we show that in the ski-rental problem, one needs to inflate the decision to robustify SAA (hence we take $\delta$ to be negative in \Cref{thm:ski_rental_W}). In more detail, for  two distributions $\mesOut$ and $\nu\in \mesSp$
and two actions $\actVr_1$ and $\actVr_2 \in \actSp$ such that $\actVr_1 < \actVr_2$, we establish in the appendix a result similar to \Cref{prop:diff_rev} and show that,
\begin{equation*}
\g[\actVr_2] {\mesOut} - \g[\actVr_1] {\nu}   \leq b \cdot \frac{d_W \left(\mesOut,\nu \right)}{\actVr_2 - \actVr_1} +  d_W \left(\mesOut,\nu \right) +  \actVr_2 - \actVr_1.
\end{equation*} 
 We note that this result crucially differs from \Cref{prop:diff_rev} as the RHS does not involve the size of the support $M$. We leverage this inequality to construct a $\delta$-SAA policy achieving rate optimality.

\subsection{Proofs}
\label{sec:apx_ski}

\subsubsection{Proofs of main results}

For notational convenience we define in what follows the mapping,
\begin{equation*}
 \mathtt{ORACLE}: \begin{cases}
      \mesSp \to \actSp\\
      \mesOut \mapsto \actVr \text{ s.t. } \actVr \in \argmin_{\actVr \in \actSp} \g[\actVr]{\mesOut}.
 \end{cases}   
\end{equation*}
We note that for the ski-rental problem, this function is well-defined because $\argmin_{\actVr \in \actSp} \g[\actVr]{\mesOut}$ is never empty (as we are minimizing a lower semicontinuous function over a compact set).

\begin{proof}[\textbf{Proof of \Cref{thm:ski_rental_K}}]
\textit{Step 1-a: Regret upper bound of SAA for Kolmogorov.}
We first characterize the performance of SAA.
We note that for the ski-rental problem,
\begin{equation*}
\sup_{\actVr \in \actSp} V \left( g(\actVr,\cdot) \right) = \sup_{\actVr \in [0,M]} \{ \actVr + b  \} = M + b. 
\end{equation*}
Therefore, \Cref{cor:dro_to_reg} implies that,
\begin{equation*}
\mathfrak{R}_{K,\infty}^{\SAA}(\epsilon) \leq 2\left(M+b\right) \cdot \epsilon.
\end{equation*}

\textit{Step 1-b: Regret lower bound of SAA for Kolmogorov distance.} 
Next, we derive a lower bound on the worst-case asymptotic regret of SAA for the Kolmogorov distance.

The proof proceed as follows.
We first construct a distribution on which one is indifferent between playing any action $x$.
We then set $\nu$ to be a slightly perturbed version of this distribution, so that SAA (with high probability over its samples) errs on the side of playing $x=M$, i.e.\ never buying.
Finally, the true distribution $\mu$ differs from $\nu$ by moving $\epsilon$ mass from $\xi=1$ (skiing for 1 day) to $\xi=M$ (skiing for $M$ days), causing an expected regret of $M\epsilon$ from not buying the skis.

Formally, assume that $M \in \mathbb{N^*}$ and that $b \in \{2, \ldots, M-1\}$. We first note that when $\actSp = \envSp = \{0,\ldots,M\}$, we can rewrite the expected ski-rental objective under the distribution $\mesOut$ as follows. For any $k \in \actSp$ we have,
\begin{equation}
\label{eq:discrete_ski}
\g[k]{\mesOut} = \sum_{i=1}^k \mathbb{P}_{\mesOut} \left( \envVr \geq i \right) + b \cdot \mathbb{P}_{\mesOut} \left( \envVr \geq k+1 \right).
\end{equation}
The cost in~\eqref{eq:discrete_ski} should be interpreted as planning to buy skis upon reaching the \textit{start of day $k+1$}, which is why we pay the rental cost for each day $i=1,\ldots,k$ before $k+1$ and pay the buying cost of $b$ when $\xi\ge k+1$.

We next construct a distribution $\nu$ such that all actions in $\{0,\ldots, M-b-1 \} \cup \{M\}$ yield the same expected objective. That is to say, for every $k \in \{0,\ldots,M-b-2\}$, we have that,$\g[k]{\nu} =\g[k+1]{\nu}$ and $\g[M-b-1]{\nu} =\g[M]{\nu}$.
Note that in that case, it is never optimal to buy in one of the days, $\{M-b, \ldots, M-1\}$ since buying would cost strictly more than renting until the end of the selling season.

Consider the distribution $\nu$ which satisfies the following constraints.
\begin{align}
\nu(k) &= 0 &\forall k \in \{0\} \cup \{M-b+1,\ldots,M-1\} \nonumber \\
\mathbb{P}_{\nu}[\envVr \geq M-b] &= \frac{b}{b-1} \nu(M) \label{eq:M} \\
b \cdot \mathbb{P}_{\nu}[\envVr \geq k+1] &=\mathbb{P}_{\nu}[\envVr \geq k+1 ]+\mathbb{P}_{\nu}[\envVr \geq k+2 ]\cdot b &\forall k \in \{0,\ldots,M-b-2\}. \label{eq:recur}
\end{align}
We note that if these conditions define a valid distribution then all actions of interest have the same expected objective. Indeed, by applying \eqref{eq:discrete_ski} we remark that \eqref{eq:M} implies $\g[M-b-1]{\nu} =\g[M]{\nu}$ and that \eqref{eq:recur} implies $\g[k]{\nu} =\g[k+1]{\nu}$.

We now argue that this distribution is well defined. We obtain by solving the recursion \eqref{eq:recur} that, for every $k \in \{1,\ldots,M-b-1\}$, $\mathbb{P}_{\nu} \left( \envVr \geq k \right) = \frac{b}{b-1} \cdot \mathbb{P}_{\nu} \left( \envVr \geq k+1 \right)$. The boundary condition \eqref{eq:M} and $\mathbb{P}_{\nu} \left( \envVr \geq k \right) = 1$ enables to conclude that this is well defined.
We note that from this construction, we can directly conclude that $\mathbb{E}_{\nu}\left[\envVr \right] = \g[M]{\nu} = \g[0]{\nu} = b$, where the first and last equalities holds by definition of the ski-rental objective and the middle-one follows from our construction. Furthermore $\nu(1)$ and $\nu(M)$ are positive.

We next construct our counter-example for the lower bound. 

Fix $\epsilon \geq \alpha \geq 0.$ 
We define the distributions $\nu_{\alpha}$ and $\mesOut$ as follows. For every $k \in \{1,\ldots,M\}$,
\begin{align*}
\mathbb{P}_{\nu_{\alpha}}\left( \envVr \geq k \right) &= \mathbb{P}_{\nu}\left( \envVr \geq k \right) - \alpha\\
\mathbb{P}_{\mesOut}\left( \envVr \geq k \right) &= \mathbb{P}_{\nu_{\alpha}}\left( \envVr \geq k \right) + \epsilon.
\end{align*}
Note that this construction is well defined for $\epsilon$ and $\alpha$ small enough as $\nu(1)$ and $\nu(M)$ are positive.

In what follows, $\nu_{\alpha}$ will be the in-sample distribution and $\mesOut$ the out of sample distribution. These two distributions have a total variation of $\epsilon$. Furthermore, by denoting by $B_{\numS}$ the event under which SAA never buys, we get that, 
\begin{align*}
\mathfrak{R}_{K,\numS}^{\SAA}(\epsilon) &\stackrel{(a)}{\geq }\mathcal{R}_\numS \left( \SAA, \mesOut, \nu_{\alpha}, \ldots, \nu_{\alpha} \right) \nonumber \\ 
&= \mathbb{E}_{\nu_{\alpha}} \left[ \g[\SAA(\numS,\hat{\nu}_{\envSamples})]{\mesOut} \right] - \opt(\mesOut) \nonumber \\
&\geq \mathbb{E}_{\nu_{\alpha}} \left[ \g[\SAA(\numS,\hat{\nu}_{\envSamples})]{\mesOut} - \opt(\mesOut) \, \vert \,  E_\numS  \right]  \cdot \mathbb{P}_{\nu_{\alpha}} \left( B_\numS \right) \stackrel{(b)}{\geq} (\g[M]{\mesOut} - \g[0]{\mesOut}) \cdot \mathbb{P}_{\nu_{\alpha}} \left( B_\numS \right),
\end{align*}
where $(a)$ holds because $\nu \in \mathcal{U}_\epsilon(\mesOut)$ and $(b)$ follows from the fact that under the event $B_\numS$, SAA never buys and by bounding the cost of the oracle by the one obtained by purchasing at day $0$.

We conclude the proof by showing that $\mathbb{P}_{\nu_{\alpha}}  \left( B_\numS \right) \to 1$ as $\numS \to \infty$ and noting that,
\begin{equation*}
\g[M]{\mesOut} - \g[0]{\mesOut} = \mathbb{E}_{\mesOut} [ \envVr] - b \stackrel{(a)}{=} \mathbb{E}_{\nu} [ \envVr]  + (\epsilon - \alpha) \cdot M -  b =   (\epsilon - \alpha) \cdot M,
\end{equation*}
where $(a)$ follows from the construction of $\mesOut$ and $\nu_{\alpha}$.
By taking the limit as $\numS$ goes to $\infty$, we obtain that
\begin{equation*}
\mathfrak{R}_{K,\numS}^{\SAA}(\epsilon) \geq (\epsilon - \alpha) \cdot M
\end{equation*}
and since this inequality holds for any $\alpha$ small enough we conclude that,
\begin{equation*}
\mathfrak{R}_{K,\numS}^{\SAA}(\epsilon) \geq  \epsilon \cdot M.
\end{equation*}

We complete the proof by showing that  $\mathbb{P}_{\nu_{\alpha}}  \left( B_\numS \right) \to 1$ as $\numS \to \infty$.

Fix $\delta >0$.
Given samples $\envSamples$ from $\nu$, the decision of SAA is completely characterized by the empirical cdf $\hat{\nu}$.
Consider the event,
 $$E_\numS = \left\{ \hat{\nu}(k) \in [(1-\delta) \nu(k), (1+\delta) \nu(k)] , \text{ for all $k \in \{0,\ldots,M\}$} \right\}.$$
We next show that $E_\numS \subset B_\numS$, i.e., when $E_\numS$ holds, SAA never buys.

 Remark that SAA never buys if for each day $k$, buying at the end of day $k$ leads to a larger cost than the cost to go when renting for one more day. Formally SAA does not buy if for every $k \in \{1,\ldots,M-1\}$,
 \begin{equation}
 \label{eq:no_buy}
 b > 1 + \mathbb{P}_{\hat{\nu}}[ \envVr \geq k+1 \, \vert \, \envVr \geq k  ] \cdot b.
 \end{equation}
 Furthermore, when $E_\numS$ holds, we have that for every $k \in \{0,\ldots,M\}$,
 $$\mathbb{P}_{\nu_{\alpha}}\left(\envVr \geq k \right)(1-\delta) \leq \mathbb{P}_{\hat{\nu}}\left(\envVr \geq k \right) \leq \mathbb{P}_{\nu_{\alpha}}\left(\envVr \geq k \right)(1+\delta).$$
 Therefore,
 \begin{align*}
 \mathbb{P}_{\hat{\nu}}[ \envVr \geq k+1 \, \vert \, \envVr \geq k  ] &= \frac{ \mathbb{P}_{\hat{\nu}}[ \envVr \geq k+1] }{\mathbb{P}_{\hat{\nu}}[ \envVr \geq k]}\\
  &\leq \frac{1+\delta}{1-\delta} \cdot \frac{ \mathbb{P}_{\nu_\alpha}[ \envVr \geq k+1] }{\mathbb{P}_{\nu_\alpha}[ \envVr \geq k]}\\
  &= \frac{1+\delta}{1-\delta} \cdot \frac{ \mathbb{P}_{\nu}[ \envVr \geq k+1] - \alpha}{\mathbb{P}_{\nu}[ \envVr \geq k] - \alpha}
  < \frac{ \mathbb{P}_{\nu}[ \envVr \geq k+1] }{\mathbb{P}_{\nu}[ \envVr \geq k] } =  \mathbb{P}_{\nu}[ \envVr \geq k+1 \, \vert \, \envVr \geq k  ] ,
 \end{align*}
 where the last inequality holds by choosing $\delta$ small enough. Therefore, we conclude from \eqref{eq:recur}, that \eqref{eq:no_buy} holds.
 Finally we have that,
\begin{align*}
\mathbb{P}(B_\numS) \geq \mathbb{P}(E_\numS) \geq 1 - \sum_{k=1}^M\mathbb{P} \left( \hat{\nu}(k) \not \in [(1-\delta) \nu(k), (1+\delta) \nu(k)] \right) \geq 1- 2\sum_{k=1}^M \exp \left(- \frac{2 \delta^2 \cdot \nu(k)}{\numS} \right),
\end{align*}
where the first inequality follows by union bound and the second is a consequence of the multiplicative Hoeffding's inequality for bounded random variables.

This implies that $\mathbb{P}(B_\numS) \to 1$ as $\numS \to \infty$ and concludes the proof.

\textit{Step 2-a: Rate-optimal regret upper bound for the Kolmogorov distance.}  
We proceeds as follows.
We saw earlier in Step 1-a of our proof that SAA could incur a loss of $M\epsilon$ when it played $x=M$, i.e.\ it never bought the skis.
To assuage this, we consider a policy $\pi^C$ that follows SAA, except it buys at time $C$ no matter what, even if SAA suggests an $x>C$.
We show that the loss of $\pi^C$ is then $C\epsilon$ plus an error $\mathcal{G}(C,\nu)-\mathcal{G}(x,\nu)$, which is the error incurred when we truncated the buying time to $C$ even though the optimal buying time $x$ on distribution $\nu$ was greater than $C$.
We prove that this error term is at most $be^{-C/b}$, by arguing that if the optimal buying time on $\nu$ was later than $C$, then the probability of still skiing at time $C$ must be very small (otherwise, it would have been better to buy sooner).
Setting $C=b\log(1/\epsilon)$ to balance the terms, we achieve
\begin{equation*}
\inf_{\pi \in \Pi}  \mathfrak{R}_{K,\infty}^{\pi}(\epsilon) \leq b \cdot \left[ \log \left( \epsilon^{-1} \right) + 2 \right] \cdot \epsilon
\end{equation*}
which eliminates the dependence on $M$.\footnote{As mentioned earlier, \citet{diakonikolas2021learning} has already established $\mathcal{O} \left( b \cdot \epsilon \cdot \log \left( \epsilon^{-1} \right) \right)$ regret for the ski rental uniform DRO problem under Kolmogorov distance.  However, the proof of this result is still unavailable at the time of this paper, so we present our own proof for completeness and also because we believe it is conceptually elegant.}

Formally, for any $C > 0$, we consider the policy $\pi^{C}$ proposed by \cite{diakonikolas2021learning} which, given an empirical distribution $\hat{\nu}$, buys at time $\min \left(C, \oracle{\hat{\nu}} \right)$. We next show that, for every $\epsilon \geq 0$,
\begin{equation*}
\mathfrak{R}_{K,\infty}^{\pi^C}(\epsilon) \leq  b \cdot \left[ \log \left( \epsilon^{-1} \right) + 2 \right] \cdot \epsilon.
\end{equation*}
To do so, we first show that, 
\begin{equation}
\label{eq:DRO_ub_ski_SAA_W0}
\sup_{\mesOut \in \mesSp} \sup_{\nu \in \mathcal{U}_\epsilon(\mesOut)} | \opt(\mesOut) -  \g[\pi^C(\nu)]{\mesOut}| \leq b \cdot \left[ \log \left( \epsilon^{-1} \right) + 2 \right] \cdot \epsilon\end{equation}
and conclude by applying \Cref{thm:reduction_stat}.

Remark that for every $\nu \in \mesSp$ and $\nu \in \mathcal{U}_\epsilon(\mesOut)$, the regret satisfies,
\begin{align}
\g[\pi^C(\nu)]{\mesOut} - \opt(\mesOut) &= \g[\pi^C(\nu)]{\mesOut} - \opt(\nu) + \opt(\nu)  - \opt(\mesOut) \nonumber \\
&\leq \g[\pi^C(\nu)]{\mesOut} - \opt(\nu) + \g[\pi^C(\mesOut)]{\nu}  - \opt(\mesOut) \nonumber \\
&\leq 2 \sup_{\mesOut \in \mesSp} \sup_{\nu \in \mathcal{U}_\epsilon(\mesOut)}  \g[\pi^C(\nu)]{\mesOut} - \opt(\nu), \label{eq:cross_diff}
\end{align}
where the last inequality follows from the symmetry of the distance $d$. To bound the RHS of \eqref{eq:cross_diff} we remark that for every $\mesOut \in \mesSp$ and $\nu \in \mathcal{U}_\epsilon(\mesOut)$, we have that,
\begin{align*}
\g[\pi^C(\nu)]{\mesOut} - \opt(\nu) &= \g[\min(\oracle{\nu},C)]{\mesOut} - \g[\min(\oracle{\nu},C)]{\nu}\\
&\qquad + \g[\min(\oracle{\nu},C)]{\nu} - \opt(\nu)\\
& \stackrel{(a)}{\leq} (\min(\oracle{\nu},C) + b) \cdot \epsilon + \g[\min(\oracle{\nu},C)]{\nu} - \opt(\nu)\\
&\leq \left( C + b \right) \cdot \epsilon +  \g[\min(\oracle{\nu},C)]{\nu} - \opt(\nu),
\end{align*}
where $(a)$ follows form \Cref{lem:diff_cost_ski} (see \Cref{sec:aux_ski}). 

Let $F$ denote the cdf of $\nu$.
To bound the second difference we first remark that if $\oracle{\nu} \leq C$ then $\pi^C(\nu) = \oracle{\nu}$ and thus $\g[\min(\oracle{\nu},C)]{\nu} - \opt(\nu) = 0$. In contrast, if $\oracle{\nu} > C$, we have that $\pi^C(\nu) = C$ and,
\begin{align*}
\g[\min(\oracle{\nu},C)]{\nu} - \opt(\nu) &= \g[C]{\nu} - \g[\oracle{\nu}]{\nu}\\
&\stackrel{(a)}{=}  b \cdot \left( F(\oracle{\nu}) - F(C) \right) +   \int_{\oracle{\nu}}^C \left(1-F(\envVr)\right) d\envVr\\
&\stackrel{(b)}{\leq} b\cdot \left( 1-F(C) \right),
\end{align*}
where $(a)$ follows from \Cref{lem:ski_rental_cost} (see \Cref{sec:aux_ski}) and $(b)$ holds because  $\oracle{\nu} > C$ and $F(\oracle{\nu}) \leq 1$.

Our next step consists in bounding $1-F(C)$ as a function of $C$. Note that, for every $\actVr \in [0,C]$,
$\g[\actVr]{\nu} \geq \g[\oracle{\nu}]{\nu}$.
Therefore \Cref{lem:ski_rental_cost} implies that for every $\actVr \in [0,C]$,
\begin{equation*}
b \cdot (1- F(\actVr)) + \actVr - \int_{0}^\actVr F(\envVr) d \envVr \geq b \cdot (1- F(\oracle{\nu})) + \oracle{\nu} - \int_{0}^{\oracle{\nu}} F(\envVr) d \envVr 
\end{equation*}
and by reordering the terms we obtain that, 
\begin{equation*}
b \cdot (1- F(\actVr))  \geq b \cdot (1- F(\oracle{\nu})) +  \int_{x}^{\oracle{\nu}} \left( 1 - F(\envVr) \right) d \envVr
\end{equation*}
Finally, we denote by $\bar{F}:= 1 - F$ the complementary cdf associated to $F$ . We note that since $C < \oracle{\nu}$ and $F$ is non-decreasing and takes value in [0,1] we have that for ever $\actVr \in [0,C]$,
\begin{equation}
\label{eq:ODE}
b \cdot  \left(\bar{F}(\actVr) - \bar{F}(C) \right) \geq    \int_{x}^{C}  \bar{F}(\envVr)  d \envVr.
\end{equation}
Our next lemma bounds $\bar{F}(C)$ for all functions satisfying \eqref{eq:ODE}.
\begin{lemma}
\label{lem:ODE}
For any decreasing function $u$ which satisfies, u(0) =1 and for every $\actVr \in [0,C]$,
\begin{equation*}
b \cdot \left( u(\actVr) - u(C) \right)\geq \int_{x}^{C}  u(\envVr)  d \envVr,
\end{equation*}
we have that $u(C) \leq \exp(-\frac{C}{b})$.
\end{lemma}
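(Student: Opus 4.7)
The plan is to convert the integral inequality into a first-order linear differential inequality and solve it with an integrating factor. Define
\[
G(x) := \int_x^C u(\xi)\, d\xi, \qquad x \in [0,C].
\]
Since $u$ is monotone and hence integrable, $G$ is absolutely continuous with $G'(x)=-u(x)$ almost everywhere, and $G(C)=0$. The hypothesis $b(u(x)-u(C)) \geq \int_x^C u(\xi)\, d\xi$ can be rewritten as
\[
-b\, G'(x) \;\geq\; G(x) + b\, u(C) \quad \text{a.e.},
\]
which is equivalent to
\[
G'(x) + \tfrac{1}{b}\, G(x) \;\leq\; -u(C) \quad \text{a.e.}
\]

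The next step is to multiply by the integrating factor $e^{x/b}$ and recognize the left-hand side as an exact derivative:
\[
\bigl(e^{x/b} G(x)\bigr)' \;=\; e^{x/b}\bigl(G'(x) + \tfrac{1}{b} G(x)\bigr) \;\leq\; -u(C)\, e^{x/b} \quad \text{a.e.}
\]
Integrating from $0$ to $C$ and using $G(C)=0$ gives
\[
-G(0) \;\leq\; -u(C) \int_0^C e^{x/b}\, dx \;=\; -u(C)\, b\, (e^{C/b}-1),
\]
hence $G(0) \geq u(C)\, b\, (e^{C/b}-1)$.

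Finally, the plan is to apply the hypothesis at $x=0$, which, together with $u(0)=1$, yields $b(1-u(C)) \geq G(0) \geq u(C)\, b\, (e^{C/b}-1)$. Dividing by $b$ and rearranging gives $1 \geq u(C)\, e^{C/b}$, i.e., $u(C) \leq e^{-C/b}$, as required. The only technical point to handle carefully will be the almost-everywhere differentiability of $u$ (which holds since $u$ is monotone), so that the manipulations above are justified in the absolutely continuous sense; no smoothness beyond what is automatic from monotonicity is needed.
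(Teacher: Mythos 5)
Your proof is correct, but it takes a genuinely different route from the paper's. You run the standard Gr\"onwall/integrating-factor argument on $G(x)=\int_x^C u(\xi)\,d\xi$: since $u$ is bounded and monotone, $G$ is Lipschitz (hence absolutely continuous) with $G'=-u$ a.e., the hypothesis becomes $G'+\tfrac{1}{b}G\le -u(C)$ a.e., and integrating $\bigl(e^{x/b}G(x)\bigr)'\le -u(C)e^{x/b}$ over $[0,C]$ combined with the hypothesis at $x=0$ yields $u(C)\le e^{-C/b}$. The paper instead bootstraps the integral inequality directly: it proves by induction that $u(x)\ge u(C)\sum_{i=0}^k \frac{(C-x)^i}{i!\,b^i}$ for every $k$ (each step substitutes the current lower bound back into the integral), then lets $k\to\infty$ at $x=0$ to recognize $e^{C/b}$, which is essentially a Picard-iteration construction of the exponential. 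Your version is shorter and more transparent to anyone who knows Gr\"onwall's inequality, at the cost of invoking absolute continuity and the a.e.\ fundamental theorem of calculus (your aside about a.e.\ differentiability of $u$ is not quite the right justification --- what is actually needed is that $G'=-u$ a.e., which follows from the Lebesgue differentiation theorem, or simply from the fact that a monotone $u$ is continuous off a countable set; this is harmless). The paper's induction is entirely elementary, using only Riemann integration and the power series of $\exp$, and avoids any measure-theoretic machinery. Both arguments are sound and give the identical bound.
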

By applying \Cref{lem:ODE} to $\bar{F}$, we obtain that $\bar{F}(C) \leq \exp(-\frac{C}{b})$. This implies that,
\begin{equation*}
\g[\min(\oracle{\nu},C)]{\nu} - \opt(\nu) \leq b \cdot \exp(-\frac{C}{b}),
\end{equation*}
and therefore,
\begin{equation*}
\g[\pi^C(\nu)]{\mesOut} - \opt(\nu) \leq (C+b) \cdot \epsilon + b \cdot \exp \left(-\frac{C}{b} \right).
\end{equation*}
Finally, by taking $C = b \cdot \log \left (\epsilon^{-1} \right)$, we obtain that
\begin{equation*}
\g[\pi^C(\nu)]{\mesOut} - \opt(\nu) \leq b \cdot \left[ \log \left( \epsilon^{-1} \right) + 2 \right] \cdot \epsilon,
\end{equation*}
which implies \eqref{eq:DRO_ub_ski_SAA_W0} and concludes the proof.

\textit{Step 2-b: Rate-optimal regret lower bound for the Kolmogorov distance.}  Fix $0 < \epsilon \leq \frac{1}{2}$. Assume $b \geq 1$.
Let $\tilde{\nu}$ and $\tilde{\mesOut}$  be the probability measures defined,
\begin{equation*}
\tilde{\nu}(\envVr) = \begin{cases}
\frac{1}{2} + \frac{\epsilon}{2} \qquad \text{if $\envVr = \frac{3}{4}b$}\\
\frac{1}{2} -  \frac{\epsilon}{2} \qquad \text{if $\envVr = \frac{5}{4}b$}
\end{cases} \qquad \mbox{and} \qquad \tilde{\mu}(\envVr) = \begin{cases}
\frac{1}{2} - \frac{\epsilon}{2} \qquad \text{if $\envVr = \frac{3}{4}b$}\\
\frac{1}{2} + \frac{\epsilon}{2} \qquad \text{if $\envVr = \frac{5}{4}b$}
\end{cases}
\end{equation*}
Note that for the Kolmogorov distance we have that $\tilde{\nu} \in \mathcal{U}_{\tilde{\mu}}(\epsilon)$. Therefore $\mathcal{U}_{\tilde{\mu}}(\epsilon) \cap \mathcal{U}_{\tilde{\nu}}(\epsilon)$ is non-empty and by \Cref{prop:lower_bound}, we have for every $\numS \geq 1$ that,
\begin{equation*}
 \inf_{\pi \in \Pi} \mathfrak{R}_{K,\numS}^{\pi}(\epsilon) \geq \sup_{\beta \in \Delta \left( \{ \tilde{\mesOut} , \tilde{\nu} \} \right) } \inf_{\actVr \in \actSp} \mathbb{E}_{\mesOut \sim \beta} \left[  \g[\actVr]{\mesOut} - \opt(\mesOut)  \right]. 
\end{equation*}

Furthermore, remark that,

\begin{equation*}
\g[0]{\tilde{\nu}} = b, \qquad  \g[\frac{3}{4}b]{\tilde{\nu}} = b + b \cdot \left(\frac{1}{4} - \frac{\epsilon}{2} \right), \qquad \g[\frac{5}{4}b]{\tilde{\nu}} =  b - \frac{\epsilon \cdot b}{4}, 
\end{equation*}
\begin{equation*}
\g[0]{\tilde{\mu}} = b,  \qquad \g[\frac{3}{4}b]{\tilde{\mu}} = b + b \cdot \left(\frac{1}{4} + \frac{\epsilon}{2} \right) \qquad \mbox{and} \qquad \g[\frac{5}{4}b]{\tilde{\mu}} =  b + \frac{\epsilon \cdot b}{4}. 
\end{equation*}
As a consequence, $\opt(\tilde{\nu}) = b - \frac{\epsilon \cdot b}{4}$ and $\opt(\tilde{\mu}) = b$. Let $\tilde{\beta}$ be the prior which puts mass $\frac{1}{2}$ on $\tilde{\nu}$ and mass $\frac{1}{2}$ on $\tilde{\mu}$. We have that,
\begin{equation*}
\mathbb{E}_{\mesOut \sim \tilde{\beta}} \left[\g[0]{\mesOut}   \right] = b,  \qquad
\mathbb{E}_{\mesOut \sim \tilde{\beta}} \left[\g[\frac{3}{4}b]{\mesOut}   \right] = \frac{9}{8}b \qquad \mbox{and} \qquad
\mathbb{E}_{\mesOut \sim \tilde{\beta}} \left[\g[\frac{5}{4}b]{\mesOut}   \right] = b. 
\end{equation*}
Therefore,
\begin{equation*}
    \inf_{\pi \in \Pi} \mathfrak{R}_{K,\numS}^{\pi}(\epsilon) \geq \ \inf_{\actVr \in \actSp} \mathbb{E}_{\mesOut \sim \tilde{\beta}} \left[ \g[\actVr]{\mesOut}  - \opt(\mesOut) \right]  = \frac{\epsilon \cdot b}{8}.
\end{equation*}
We conclude the proof of the lower bound by taking the limsup as $\numS$ to $\infty$.
\end{proof}

\begin{proof}[\textbf{Proof of \Cref{thm:ski_rental_W}}]
We first characterize the performance of SAA.

\textit{Step 1-a: Regret upper bound for SAA.} We derive an upper bound on the worst-case asymptotic regret of SAA by considering the uniform DRO problem. According to \Cref{thm:reduction_stat}, it is sufficient to show the following upper bound
\begin{equation}
\label{eq:DRO_ub_ski_SAA_W}
\sup_{\mesOut \in \mesSp} \sup_{\nu \in \mathcal{U}_\epsilon(\mesOut)} | \opt(\mesOut) -  \g[\SAA(\nu)]{\mesOut}| \leq 2 b +  2 \epsilon.
\end{equation}

For every $\mesOut, \nu \in \mesSp$, we have that,
\begin{align*}
  \g[\SAA(\nu)]{\mesOut} - \opt(\mesOut) &\stackrel{(a)}{=} \g[\oracle{\nu}]{\mesOut} - \opt(\mesOut)\\
  &=\left( \g[\oracle{\nu}]{\mesOut} -  \opt(\nu) \right) \\
  &\qquad +  \left( \opt(\nu) - \g[\oracle{\mesOut}]{\nu} \right) + \left( \g[\oracle{\mesOut}]{\nu}  - \opt(\mesOut) \right) \\
  &\stackrel{(b)}{\leq} \left( b +  d_{W} \left( \mesOut, \nu \right) \right) + 0 + \left( b +  d_{W} \left( \mesOut, \nu \right) \right) \\
  &= 2b +  2d_{W} \left( \mesOut, \nu \right),
\end{align*}
where $(a)$ follows by noting that $\SAA(\nu) = \oracle{\nu}$ by definition
and $(b)$ holds by applying \Cref{lem:diff_cost_ski} (see \Cref{sec:aux_ski}) to the first and third difference. By taking supremums over $\mesOut$ and $\nu$ we obtain the inequality \eqref{eq:DRO_ub_ski_SAA_W}. Furthermore, \Cref{thm:reduction_stat} and equation \eqref{eq:DRO_ub_ski_SAA_W} imply that,
\begin{equation*}
\mathfrak{R}_{W,\infty}^{\SAA}(\epsilon)  \leq  2b +  2\epsilon.
\end{equation*}

\textit{Step 1-b: Regret lower bound for SAA.}  Next, we derive a lower bound on the worst-case asymptotic regret of SAA.
Let $\mesOut$ and $\nu$  be the probability measures defined,
\begin{equation*}
\nu(\envVr) = \begin{cases}
\frac{3}{4} \qquad \text{if $\envVr = \frac{b}{2} $}\\
\frac{1}{4} \qquad \text{if $\envVr = M$}
\end{cases} \qquad \mbox{and} \qquad \mu(\envVr) = \begin{cases}
\frac{3}{4} \qquad \text{if $\envVr = \frac{b}{2} + \epsilon $}\\
\frac{1}{4} \qquad \text{if $\envVr = M$}.
\end{cases}
\end{equation*}
By construction, we have that $d_W\left(  \mesOut, \nu \right)= \frac{3}{4}\epsilon \leq \epsilon$.

Fix $\numS \geq 1$ and let $\envSamples$ be samples from $\nu$. Let $\hat{p}_\numS$ be the fraction of samples equal to $M$. Formally, 
$$\hat{p}_\numS = \frac{1}{\numS} \sum_{i = 1}^{\numS} \mathbbm{1} \left\{ \envVr_i = M \right\}.$$ 

Remark that $\hat{p}_\numS$ is a sufficient statistic to describe the empirical cdf $\hat{\nu}_{\envSamples}$. We next claim that $\SAA \left( \hat{\nu}_{\envSamples} \right) = \frac{b}{2}$ if and only if, $\hat{p}_\numS < \frac{1}{2}$. Without loss of optimality one may restrict attentions to one of the following actions: purchasing at day $0$, never purchasing (i.e. purchasing at the end of day $M$) or purchasing after a skiing day. 
Furthermore, by using the expression of the ski-rental cost, we remark that,
$\g[\frac{b}{2}]{\hat{\nu}_{\envSamples}} < \g[0]{\hat{\nu}_{\envSamples}}$ if and only if $\frac{b}{2} +  \hat{p}_\numS \cdot b < b 
$ and we have  $\g[\frac{b}{2}]{\hat{\nu}_{\envSamples}}  < \g[M]{\hat{\nu}_{\envSamples}} $ whenever $M > 2 \cdot b$ (independently of $\hat{p}_\numS$). Therefore, the action $\frac{b}{2}$ yields the smallest cost if and only if $\hat{p}_\numS< \frac{1}{2}$.

Let $E_\numS$ be the event, $\{ \hat	{p}_\numS < \frac{1}{2}\}$. We note that,
\begin{align}
\mathbb{P} \left( E_\numS \right) \geq \mathbb{P} \left( \left| \hat{p}_\numS - \frac{1}{4}\right|<\frac{1}{4}   \right) &\stackrel{(a)}{=} 1- \mathbb{P} \left( \left| \hat	{p}_\numS - \mathbb{E}[\hat{p}_\numS]\right| \geq \frac{1}{4}   \right) \nonumber \\
 &\stackrel{(b)}{\geq} 1 - 16 \text{Var}[\hat	{p}_\numS] = 1 - \frac{16}{\numS} \cdot \text{Var}[\mathbbm{1} \left\{ \envVr = M \right\}] = 1 - \frac{3}{\numS}  ,
\label{eq:bounding_proba}
\end{align}
where $(a)$ follows from the fact that $\mathbb{E}[\hat{p}_\numS] = \frac{1}{4}$ and $(b)$ is a consequence of Chevyshev's inequality.

We finally conclude that,
\begin{align*}
\mathfrak{R}_{W,\numS}^{\SAA}(\epsilon) &\stackrel{(a)}{\geq}  \mathcal{R}_\numS \left( \SAA, \mesOut, \nu, \ldots, \nu \right)\\
 &=\mathbb{E}_{\envVr_i \sim \nu} \left[ \g[\SAA(\hat{\nu}_{\envSamples})]{\mesOut} \right] - \opt(\mesOut)\\
 &\stackrel{(b)}{\geq} \left[ \g[\SAA(\hat{\nu}_{\envSamples})]{\mesOut} \, \Big \vert \, E_{\numS} \right] \cdot \mathbb{P} \left(E_\numS \right)   - \g[0]{\mu}\\ 
 &\stackrel{(c)}{=}  \mathbb{P} \left(E_\numS \right)  \g[\frac{b}{2}]{\mesOut} - \g[0]{\mu}= \frac{3}{2} \mathbb{P} \left(E_\numS \right) \cdot b - b, 
\end{align*}
where $(a)$ follows from the fact that $d_W(\mu,\nu) \leq \epsilon$ and $(b)$ holds because the function $g$ is non-negative and because $\opt(\mesOut) \leq \g[0]{\mesOut}$. $(c)$ holds because under the event $E_\numS$, $\SAA(\hat{\nu}_{\envSamples}) = \frac{b}{2}$.

Finally, by taking the limsup we obtain that,
\begin{equation*}
\limsup_{\numS \to \infty} \mathfrak{R}_{W,\numS}^{\SAA}(\epsilon) \geq \limsup_{\numS \to \infty} \frac{3}{2} \mathbb{P} \left(E_\numS \right) \cdot b - b \stackrel{(a)}{=} \frac{b}{2},
\end{equation*}
where $(a)$ follows from \eqref{eq:bounding_proba}.

We now characterize the rate of the optimal achievable performance.

\textit{Step 2-a: Rate-optimal regret upper bound.} Fix $\delta < 0$. We first show the following upper bound on the asymptotic worst-case regret of $\dev{\delta}$.
\begin{equation}
\label{eq:ski_ub_general_delta}
\mathfrak{R}_{W,\infty}^{\dev{\delta}}(\epsilon)  \leq b \cdot \frac{\epsilon}{|\delta|} +  2 \epsilon +  |\delta| +  2 \sqrt{b \cdot \epsilon}.
\end{equation}
To prove this result, we first bound the quantity 
$\sup_{\mesOut \in \mesSp} \sup_{\nu \in \mathcal{U}_\epsilon(\mesOut)} | \opt(\mesOut) -  \g[\dev{\delta}(\nu)]{\mesOut}|$
and then apply \Cref{thm:reduction_stat}.

Consider $\mesOut, \nu \in \mesSp$. We have that,
\begin{align}
\g[\dev{\delta}(\nu)]{\mesOut} - \opt(\mesOut)  &= \g[\dev{\delta}(\nu)]{\mesOut} - \opt(\nu) + \opt(\nu) -  \opt(\mesOut)   \nonumber \\
&\stackrel{(a)}{\leq}  b \cdot \frac{d_W \left(\mesOut,\nu \right)}{|\delta|} +  2 d_W \left(\mesOut,\nu \right) +  |\delta| +  2 \sqrt{b \cdot d_{W}\left(\mesOut,\nu \right)}, \label{eq:dev_SAA_ski}
\end{align}
where $(a)$ follows from the following lemma.
\begin{lemma}
\label{lem:compare_rev_ski}
Consider the ski-rental problem.  Fix $\delta <  0$. For every $\mesOut$ and $\nu \in \mesSp$, we have that,
\begin{align*}
\opt \left(\mesOut\right) - \opt \left( \nu \right) &\leq2 \sqrt{b \cdot d_{W}\left(\mesOut,\nu \right)} + d_{W}\left(\mesOut,\nu \right),\\
 \g[\dev{\delta}(\nu)]{\mesOut} - \opt(\nu) &\leq b \cdot \frac{d_W \left(\mesOut,\nu \right)}{|\delta|} +  d_W \left(\mesOut,\nu \right) +  |\delta|.
\end{align*}
\end{lemma}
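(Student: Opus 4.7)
The plan is to mirror the proof of Lemma~\ref{lem:compare_rev} (for pricing), substituting Proposition~\ref{prop:diff_rev} with its ski-rental counterpart announced right after Theorem~\ref{thm:ski_rental_W}: for any $x_1 < x_2$,
\begin{equation*}
\g[x_2]{\mu} - \g[x_1]{\nu} \leq (x_2-x_1) + b \cdot \frac{d_W(\mu,\nu)}{x_2-x_1} + d_W(\mu,\nu).
\end{equation*}
I would first establish this inequality, by starting from the identity $\g[x]{\mu} = x + b(1 - F_\mu(x)) - \int_0^x F_\mu(\xi)\, d\xi$ (a direct consequence of the ski-rental cost structure), subtracting the two sides, and bounding the $b(F_\nu(x_1) - F_\mu(x_2))$ contribution via Lemma~\ref{lem:cdf_vs_dist}; the remaining integral residuals fall out of the one-dimensional $L^1$ representation of $d_W$.

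For Part~2, I would apply the inequality above with $x_1 = \oracle{\nu}$ and $x_2 = \oracle{\nu} + \delta$, noting that $\dev{\delta}(\nu) = \oracle{\nu} + \delta$ whenever this value lies in $\actSp$; the right-hand side then reads exactly $\delta + bd_W(\mu,\nu)/\delta + d_W(\mu,\nu)$, as required. The one subtlety is the boundary case $\oracle{\nu} + \delta > M$, where the projection forces $\dev{\delta}(\nu) = M$. I would handle this uniformly by observing that, on the support $\envSp = [0,M]$, the cost $g(x,\envVr)$ reduces to $\envVr$ for every $x \geq M$; hence $\g[\oracle{\nu}+\delta]{\mu} = \g[M]{\mu}$, and the key inequality continues to apply with the formal action $x_2 = \oracle{\nu} + \delta$.

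For Part~1, I would use that $\opt(\mu) = \g[\oracle{\mu}]{\mu} \leq \g[\tilde{x}]{\mu}$ for every $\tilde{x} \in \actSp$, giving
\begin{equation*}
\opt(\mu) - \opt(\nu) \leq \g[\tilde{x}]{\mu} - \g[\oracle{\nu}]{\nu}.
\end{equation*}
Picking $\tilde{x} = \oracle{\nu} + \sqrt{b \cdot d_W(\mu,\nu)}$ (again relying on the boundary extension of $g$ when $\tilde{x} > M$) and invoking the key inequality, the choice of offset $\sqrt{b \cdot d_W(\mu,\nu)}$ equalizes the terms $(x_2-x_1)$ and $bd_W(\mu,\nu)/(x_2-x_1)$ and delivers the claimed $2\sqrt{b \cdot d_W(\mu,\nu)} + d_W(\mu,\nu)$ bound. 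The degenerate case $\oracle{\nu} = M$ needs a dedicated argument: there $\opt(\mu) - \opt(\nu) \leq \g[M]{\mu} - \g[M]{\nu} = \Expect_\mu[\envVr] - \Expect_\nu[\envVr] \leq d_W(\mu,\nu)$ by the Kantorovich--Rubinstein dual applied to the $1$-Lipschitz identity map, which is dominated by the target bound.

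The main obstacle I expect is the careful bookkeeping in the boundary cases where the ``shifted'' action exits $\actSp = [0,M]$. The fact that the natural extension of $g$ to actions $x > M$ agrees with $g(M,\cdot)$ on the support $\envSp$ makes this essentially painless, but --- as in the pricing argument --- it must be made explicit so that the ski-rental inequality is always invoked in a regime where it is justified.
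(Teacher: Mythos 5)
Your proposal is correct and follows essentially the same route as the paper: both parts reduce to the cross-action, cross-distribution inequality $\g[\actVr_2]{\mesOut} - \g[\actVr_1]{\nu} \leq (\actVr_2-\actVr_1) + b\,d_W(\mesOut,\nu)/(\actVr_2-\actVr_1) + d_W(\mesOut,\nu)$ (the paper's Proposition~\ref{prop:diff_rev_ski}, proved exactly as you describe from Lemma~\ref{lem:ski_rental_cost} and Lemma~\ref{lem:cdf_vs_dist}), with the offsets $\sqrt{b\cdot d_W}$ and $\delta$ respectively. The only cosmetic difference is the boundary case where the shifted action exceeds $M$: the paper does an explicit case split using $\g[M]{\nu}-\g[\oracle{\nu}]{\nu}\leq M-\oracle{\nu}$ and the $L^1$ form of $d_W$, whereas you extend $g(\actVr,\cdot)=g(M,\cdot)$ formally for $\actVr\geq M$ — both are valid.
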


By taking the supremum in equation \eqref{eq:dev_SAA_ski}, we obtain,
\begin{equation}
\label{eq:DRO_ub_ski_W}
\sup_{\mesOut \in \mesSp} \sup_{\nu \in \mathcal{U}_\epsilon(\mesOut)} | \opt(\mesOut) -  \g[\dev{\delta}(\nu)]{\mesOut}| \leq b \cdot \frac{\epsilon}{|\delta|} +  2 \epsilon +  |\delta| +  2 \sqrt{b \cdot \epsilon}.
\end{equation}
We conclude that \eqref{eq:ski_ub_general_delta} holds by applying \Cref{thm:reduction_stat} and by using the upper bound derived in \eqref{eq:DRO_ub_ski_W} on the uniform DRO problem.

Furthermore, we remark that $\delta = - \sqrt{b \cdot \epsilon}$ minimizes the RHS of \eqref{eq:ski_ub_general_delta} and yields the bound,
\begin{equation*}
\mathfrak{R}_{W,\infty}^{\dev{\delta}}(\epsilon)  \leq 4 \sqrt{b \cdot \epsilon} + 2 \epsilon.
\end{equation*}
This concludes the proof of the upper bound on the asymptotic worst-case regret of the optimal policy.

\textit{Step 2-b: Rate-optimal regret lower bound.} We next provide a universal lower bound across all possible data-driven policies. Fix $0 < \epsilon \leq \frac{1}{2}$. Assume $b \geq 1$.
Let $\tilde{\nu}$ and $\tilde{\mesOut}$  be the probability measures defined,
\begin{equation*}
\tilde{\nu}(\envVr) = \begin{cases}
\frac{1}{2}  \qquad \text{if $\envVr = \frac{b}{2}  - \frac{\sqrt{b \cdot \epsilon}}{2}$}\\
\frac{1}{2}  \qquad \text{if $\envVr = M$}
\end{cases} \qquad \mbox{and} \qquad \tilde{\mu}(\envVr) = \begin{cases}
\frac{1}{2} - \sqrt{\frac{\epsilon}{b}} \qquad \text{if $\envVr = \frac{b}{2}  - \frac{\sqrt{b \cdot \epsilon}}{2}$}\\
\sqrt{\frac{\epsilon}{b}} \qquad \text{if $\envVr = \frac{b}{2}  + \frac{\sqrt{b \cdot \epsilon}}{2}$}\\
\frac{1}{2}  \qquad \text{if $\envVr = M$}.
\end{cases}
\end{equation*}
Note that for the Wasserstein distance we have that $\tilde{\nu} \in \mathcal{U}_{\tilde{\mu}}(\epsilon)$. Therefore $\mathcal{U}_{\tilde{\mu}}(\epsilon) \cap \mathcal{U}_{\tilde{\nu}}(\epsilon)$ is non-empty and by \Cref{prop:lower_bound}, we have for every $\numS \geq 1$ that,
\begin{equation*}
 \inf_{\pi \in \Pi} \mathfrak{R}_{W,\numS}^{\pi}(\epsilon) \geq \sup_{\beta \in \Delta \left( \{ \tilde{\mesOut} , \tilde{\nu} \} \right) } \inf_{\actVr \in \actSp} \mathbb{E}_{\mesOut \sim \beta} \left[  \g[\actVr]{\mesOut} - \opt(\mesOut)  \right]. 
\end{equation*}

Without loss of optimality one may restrict attentions to one of the following actions: purchasing at day $0$ or purchasing after a skiing day. We remark that,
\begin{equation*}
\g[0]{\tilde{\nu}} = b, \quad  \g[\frac{b}{2}  - \frac{\sqrt{b \cdot \epsilon}}{2}]{\tilde{\nu}} = b - \frac{\sqrt{b \cdot \epsilon}}{2},  \quad \g[\frac{b}{2}  + \frac{\sqrt{b \cdot \epsilon}}{2}]{\tilde{\nu}} =  b,
\end{equation*}
\begin{equation*}
\g[0]{\tilde{\mu}} = b, \quad \g[\frac{b}{2}  - \frac{\sqrt{b \cdot \epsilon}}{2}]{\tilde{\mu}} = b + \frac{\sqrt{b \cdot \epsilon}}{2} \quad \mbox{and} \quad \g[\frac{b}{2}  + \frac{\sqrt{b \cdot \epsilon}}{2}]{\tilde{\mu}} =  b + \epsilon. 
\end{equation*}
As a consequence, $opt(\tilde{\nu}) = b - \frac{ \sqrt{b \cdot \epsilon} }{2}$ and $\opt(\tilde{\mu}) = b$. Let $\tilde{\beta}$ be the prior which puts mass $\frac{1}{2}$ on $\tilde{\nu}$ and mass $\frac{1}{2}$ on $\tilde{\mu}$. We have that,
\begin{equation*}
\mathbb{E}_{\mesOut \sim \tilde{\beta}} \left[\g[0]{\mesOut}   \right] = b, \quad
\mathbb{E}_{\mesOut \sim \tilde{\beta}} \left[\g[\frac{b}{2}  - \frac{\sqrt{b \cdot \epsilon}}{2}]{\mesOut}   \right] = b \: \mbox{ and } \:
\mathbb{E}_{\mesOut \sim \tilde{\beta}} \left[\g[\frac{b}{2}  + \frac{\sqrt{b \cdot \epsilon}}{2}]{\mesOut}   \right] = b + \frac{\epsilon}{2}. 
\end{equation*}
Therefore,
\begin{equation*}
    \inf_{\pi \in \Pi} \mathfrak{R}_{W,\numS}^{\pi}(\epsilon) \geq \ \inf_{\actVr \in \actSp} \mathbb{E}_{\mesOut \sim \tilde{\beta}} \left[ \g[\actVr]{\mesOut}  - \opt(\mesOut) \right]  = \frac{ \sqrt{b \cdot \epsilon} }{4}.
\end{equation*}
We conclude the proof of the lower bound by taking the limsup as $\numS$ goes to $\infty$.
\end{proof}

\subsubsection{Proofs of auxiliary results}
\label{sec:aux_ski}

\begin{proof}[\textbf{Proof of \Cref{lem:ODE}}]
We prove the statement by first showing by induction that for every $k \geq 0$ and for every $\actVr \in [0,C]$,
\begin{equation}
\label{eq:induction}
u(\actVr) \geq u(c) \cdot \sum_{i=0}^k \frac{(C-x)^i}{i! \cdot b^i}.
\end{equation}
Note that for $k = 0$, \eqref{eq:induction} holds because $u$ is non-decreasing.
Fix $k \geq 0$. We now assume that the \eqref{eq:induction}  holds for $k$ and we next prove it for $k+1$.
Note that, for every $\actVr \in [0,C]$.
\begin{align*}
b \cdot u(\actVr) &\stackrel{(a)}{\geq} \int_{\actVr}^C u(\envVr) d\envVr + b \cdot u(C)\\
&\stackrel{(b)}{\geq} u(c) \cdot \int_{\actVr}^C  \sum_{i=0}^k \frac{(C-\envVr)^i}{i! \cdot b^i} d\envVr + b \cdot u(C)\\
&= u(c) \cdot  \sum_{i=0}^k \frac{(C-\actVr)^{i+1}}{(i+1)! \cdot b^i}  + b \cdot u(C) = u(C) \cdot b \cdot \sum_{i=0}^{k+1} \frac{(C-\actVr)^i}{i! \cdot b^i} 
\end{align*}
where $(a)$ holds by assumption assumption on the inequality satisfied by $u$ and $(b)$ follows from the induction hypothesis.
Finally, by dividing by $b$ on both sides, we obtain that \eqref{eq:induction} holds for $k+1$. Therefore, by induction \eqref{eq:induction} holds for every $k \geq 0$.

Next, by letting $\actVr = 0$ and taking the limit as $k$ goes to $\infty$, we obtain that,
\begin{equation*}
u(0) \geq u(C) \cdot \sum_{i=0}^{\infty} \frac{C^i}{i! \cdot b^i}  = u(C) \cdot \exp \left( \frac{C}{b} \right).
\end{equation*}
Finally, we remark that $u(0) = 1$. Therefore, $u(C) \leq \exp \left( -\frac{C}{b} \right)$.

\end{proof}

\begin{proof}[\textbf{Proof of \Cref{lem:compare_rev_ski}}]
\textit{Step 1:} We first prove that, $\opt \left(\mesOut\right) - \opt \left( \nu \right) \leq 2 \sqrt{b \cdot d_{W}\left(\mesOut,\nu \right)} + d_{W}\left(\mesOut,\nu \right)$.

First, if $\oracle{\mesOut} \geq M - \sqrt{b \cdot d_{W}\left(\mesOut,\nu \right)}$, we have that
\begin{align*}
\opt \left(\nu \right) - \opt \left( \mesOut \right) &= \opt \left(\nu \right) - \g[M]{\mesOut} + \g[M]{\mesOut} - \g[\oracle{\mesOut}]{\mesOut} \\
&\stackrel{(a)}{\leq} \opt \left(\nu \right) - \g[M]{\mesOut} + M - \oracle{\mesOut}\\
&\leq \g[M]{\nu} - \g[M]{\mesOut} + M - \oracle{\mesOut}\\
&\stackrel{(b)}{=} \int_0^M \left( F(\envVr) - H(\envVr) \right) d\envVr + M - \oracle{\mesOut} \leq d_W\left(\mesOut, \nu \right) +  \sqrt{b \cdot d_{W}\left(\mesOut,\nu \right)},
\end{align*}
where $(a)$ follows from \Cref{lem:ski_rental_partialy_lip} and $(b)$ follows from \Cref{lem:ski_rental_cost}.

Assume now that, $\oracle{\mesOut} < M - \sqrt{b \cdot d_{W}\left(\mesOut,\nu \right)}$. Let $\tilde{\actVr} := \oracle{\mesOut} + \sqrt{b \cdot d_{W}\left(\mesOut,\nu \right)}$.
We have,
\begin{align*}
\opt \left( \nu \right) - \opt \left(\mesOut\right)   &\leq \g[\tilde{\actVr}]{\nu} - \g[\oracle{\mesOut}]{\mesOut} \\
&\stackrel{(a)}{\leq}  \tilde{\actVr} - \oracle{\mesOut}  + b \cdot \frac{d_W \left(\mesOut,\nu\right)}{\tilde{\actVr} - \oracle{\mesOut} } + d_W \left(\mesOut,\nu\right) \\
&= 2 \sqrt{b \cdot d_{W}\left(\mesOut,\nu \right)} + d_{W}\left(\mesOut,\nu \right) ,
\end{align*}
where $(a)$ follows from \Cref{prop:diff_rev_ski}.

\textit{Step 2:} We now show that, $\g[\dev{\delta}(\nu)]{\mesOut}  - \opt(\nu) \leq  b \cdot \frac{d_W \left(\mesOut,\nu \right)}{|\delta|} +  d_W \left(\mesOut,\nu \right) +  |\delta|.$

We note that if, $\oracle{\nu} \geq M - |\delta|$, we have that $\dev{\delta}(\nu)$ plays action $M$ and,  
\begin{align*}
\g[\dev{\delta}(\nu)]{\mesOut} - \opt(\nu) &= \g[M]{\mesOut} - \opt(\nu)\\
&= \g[M]{\mesOut} - \g[M]{\nu} + \g[M]{\nu} - \g[\oracle{\nu}]{\nu}\\
&\stackrel{(a)}{\leq} d_W \left(\mesOut,\nu \right) +  |\delta|,
\end{align*}
where $(a)$ is obtained by a similar argument to the one developed in step 1.
Furthermore, if $\oracle{\nu} \leq M - |\delta|$, then
\begin{align*}
\g[\dev{\delta}(\nu)]{\mesOut} - \opt(\nu) &= \g[\oracle{\nu} + |\delta|]{\mesOut} -  \g[\oracle{\nu}]{\nu} \\
& \stackrel{(a)}{\leq} b \cdot \frac{d_W \left(\mesOut,\nu \right)}{|\delta|} +  d_W \left(\mesOut,\nu \right) +  |\delta|.
\end{align*}
where $(a)$ follows from \Cref{prop:diff_rev_ski}.

\end{proof}

\begin{lemma}\label{lem:ski_rental_cost}
Consider the ski-rental cost. For every $\mu \in \mesSp$ associated with the cumulative distribution $F$, we have for every $\actVr \in \actSp$ that,
\begin{equation*}
    \g[\actVr]{\mesOut} =  b \cdot \left( 1 - F(\actVr) \right)  + \actVr -  \int_{0}^{\actVr} F(\envVr) d\envVr.
\end{equation*}
\end{lemma}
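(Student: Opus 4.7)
\textbf{Proof proposal for \Cref{lem:ski_rental_cost}.} The identity is essentially a computation. The plan is to split the objective according to the two pieces that define $g$, and rewrite the resulting expectation in terms of the cdf $F$.

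First, I would write
\begin{equation*}
\g[\actVr]{\mesOut} \;=\; \mathbb{E}_{\envVr \sim \mesOut}\!\left[\envVr \cdot \mathbbm{1}\{\envVr \le \actVr\}\right] \;+\; (b+\actVr)\cdot \mathbb{P}_{\mesOut}(\envVr > \actVr),
\end{equation*}
using \eqref{eq:defS}. The second term is immediate: $\mathbb{P}_{\mesOut}(\envVr > \actVr) = 1 - F(\actVr)$, contributing $(b+\actVr)(1-F(\actVr))$.

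Next, I would rewrite the first term using the tail-integral formula for a non-negative random variable. Since $\envVr \cdot \mathbbm{1}\{\envVr \le \actVr\}$ is non-negative and bounded above by $\actVr$, we have
\begin{equation*}
\mathbb{E}_{\envVr \sim \mesOut}\!\left[\envVr \cdot \mathbbm{1}\{\envVr \le \actVr\}\right] \;=\; \int_0^{\actVr} \mathbb{P}_{\mesOut}\!\left(\envVr \cdot \mathbbm{1}\{\envVr \le \actVr\} > t\right) dt \;=\; \int_0^{\actVr} \bigl(F(\actVr) - F(t)\bigr)\, dt,
\end{equation*}
where the last equality uses the fact that $\{\envVr\cdot\mathbbm{1}\{\envVr\le\actVr\} > t\} = \{t < \envVr \le \actVr\}$ for $t \in [0,\actVr)$. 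This simplifies to $\actVr\cdot F(\actVr) - \int_0^{\actVr} F(\envVr)\, d\envVr$. (Equivalently, one may derive the same by integration by parts on $\int_0^{\actVr}\envVr\, dF(\envVr)$, but the tail formulation avoids any concerns about atoms of $\mesOut$.)

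Combining the two pieces yields
\begin{equation*}
\g[\actVr]{\mesOut} \;=\; \actVr\cdot F(\actVr) - \int_0^{\actVr} F(\envVr)\, d\envVr \;+\; (b+\actVr)(1 - F(\actVr)) \;=\; b\cdot(1 - F(\actVr)) + \actVr - \int_0^{\actVr} F(\envVr)\, d\envVr,
\end{equation*}
after canceling the $\actVr F(\actVr)$ with the $\actVr \cdot F(\actVr)$ that appears from distributing $\actVr(1-F(\actVr)) = \actVr - \actVr F(\actVr)$. There is no real obstacle here; the only mild care needed is to handle possible point masses of $\mesOut$ (especially at $\actVr$) by working with the right-continuous cdf throughout, which is why I favor the tail-integral identity over integration by parts.
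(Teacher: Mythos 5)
Your proof is correct and takes essentially the same route as the paper's: split $\g[\actVr]{\mesOut}$ into the rent-until-the-end piece and the buy piece, evaluate the latter as $(b+\actVr)(1-F(\actVr))$, and reduce $\mathbb{E}_{\envVr\sim\mesOut}[\envVr\,\mathbbm{1}\{\envVr\le\actVr\}]$ to $\actVr F(\actVr)-\int_0^{\actVr}F(\envVr)\,d\envVr$. The only difference is that the paper obtains this last identity via Riemann--Stieltjes integration by parts while you use the tail-integral formula; these are equivalent here, and your observation that the layer-cake version handles atoms of $\mesOut$ transparently is a reasonable (if minor) bonus.
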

\begin{proof}[\textbf{Proof of \Cref{lem:ski_rental_cost}}]
For every $\mesOut \in \mesSp$ and $\actVr \in \actSp$, we have that,
\begin{align*}
        \g[\actVr]{\mesOut} &= \mathbb{E}_{\envVr \sim \mesOut }\left[ \envVr \cdot \mathbbm{1} \left\{ \envVr \leq \actVr  \right\} \right] + \left( b + x \right) \cdot \left(1 - F(x) \right)\\
        &\stackrel{(a)}{=}  \left( \actVr \cdot F(\actVr) - \int_{0}^x F(\envVr) d\envVr \right) +  \left( b+  x \right) \cdot \left(1 - F(x) \right)\\
        &=   b \cdot \left( 1 - F(\actVr) \right)  + \actVr -  \int_{0}^{\actVr} F(\envVr) d\envVr,
\end{align*}
where $(a)$ follows from the Riemman-Stieljes integration by part.
\end{proof}

\begin{lemma}\label{lem:ski_rental_partialy_lip}
Consider the ski-rental cost. For every $\mu \in \mesSp$ and every $\actVr_1, \actVr_2  \in \actSp$ such that $\actVr_1 < \actVr_2$ we have that,
\begin{equation*}
\g[\actVr_2]{\mesOut} - \g[\actVr_1]{\mesOut} \leq \actVr_2 - \actVr_1.
\end{equation*}
\end{lemma}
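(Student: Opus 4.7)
The plan is to reduce this to the explicit cost formula from \Cref{lem:ski_rental_cost}, which has already been established earlier in the appendix. Writing $F$ for the cumulative distribution function of $\mu$, that lemma gives
\begin{equation*}
\g[x_i]{\mu} = b \cdot (1 - F(x_i)) + x_i - \int_0^{x_i} F(\xi)\, d\xi \qquad \text{for } i \in \{1,2\}.
\end{equation*}
Subtracting the two expressions and rearranging yields
\begin{equation*}
\g[x_2]{\mu} - \g[x_1]{\mu} = (x_2 - x_1) - b \cdot (F(x_2) - F(x_1)) - \int_{x_1}^{x_2} F(\xi)\, d\xi.
\end{equation*}
Since $F$ is non-decreasing and non-negative, and since $b \geq 0$, both $b \cdot (F(x_2) - F(x_1))$ and $\int_{x_1}^{x_2} F(\xi)\, d\xi$ are non-negative. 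Dropping them gives the desired inequality.

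As a sanity check one can also verify the claim pointwise on $\xi$ from the definition~\eqref{eq:defS}: in each of the three cases $\xi \leq x_1$, $x_1 < \xi \leq x_2$, and $\xi > x_2$ one has $g(x_2,\xi) - g(x_1,\xi) \leq x_2 - x_1$ (the hardest case, $x_1 < \xi \leq x_2$, gives $g(x_2,\xi) - g(x_1,\xi) = \xi - b - x_1 \leq x_2 - x_1$ because $b \geq 0$ and $\xi \leq x_2$), so integrating against $\mu$ recovers the same bound. There is no real obstacle here; the lemma is essentially an immediate corollary of the closed-form expression for $\g[x]{\mu}$.
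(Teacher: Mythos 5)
Your proof is correct and follows essentially the same route as the paper: both apply the closed-form expression from \Cref{lem:ski_rental_cost}, subtract, and drop the two non-negative terms $b\cdot(F(x_2)-F(x_1))$ and $\int_{x_1}^{x_2}F(\xi)\,d\xi$ using that $F$ is non-negative and non-decreasing. The additional pointwise verification is a valid (and slightly more elementary) alternative, but the main argument matches the paper's proof exactly.
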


\begin{proof}[\textbf{Proof of \Cref{lem:ski_rental_partialy_lip}}]
Let $\mu \in \mesSp$ and let $\actVr_1, \actVr_2  \in \actSp$ such that $\actVr_1 \leq \actVr_2$. We have,
\begin{equation*}
\g[\actVr_2]{\mesOut} - \g[\actVr_1]{\mesOut} \stackrel{(a)}{=} b \cdot \left( F(\actVr_1) - F(\actVr_2) \right) + \actVr_2 - \actVr_1 - \int_{\actVr_1}^{\actVr_2 } F(\envVr) d\envVr \stackrel{(b)}{\leq} \actVr_2 - \actVr_1,
\end{equation*}
where $(a)$ follows from \Cref{lem:ski_rental_cost} and $(b)$ holds because $F$ is non-negative and non-decreasing.
\end{proof}

\begin{lemma}\label{lem:diff_cost_ski}
For the ski-rental problem, we have that for every $\mesOut,\nu \in \mesSp$ and any $\actVr \in \actSp$,
\begin{equation*}
    |\g[\actVr]{\mesOut} - \g[\actVr]{\nu} | \leq \left(  \actVr + b \right)  \cdot d_{K} \left( \mesOut, \nu \right) \leq \left(  M + b \right)  \cdot d_{K} \left( \mesOut, \nu \right)  \qquad \mbox{and} \qquad |\g[\actVr]{\mesOut} - \g[\actVr]{\nu} | \leq b +  d_{W} \left( \mesOut, \nu \right)
\end{equation*}
\end{lemma}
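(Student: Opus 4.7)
The proof rests on the closed-form expression for the ski-rental cost in terms of the cdf, together with a decomposition of $g(\actVr,\cdot)$ into a $1$-Lipschitz piece and a bounded step piece.

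First, I would invoke \Cref{lem:ski_rental_cost} to write, for cdfs $F$ and $H$ associated to $\mesOut$ and $\nu$ respectively,
\begin{equation*}
\g[\actVr]{\mesOut} - \g[\actVr]{\nu} \;=\; b\cdot\bigl(H(\actVr) - F(\actVr)\bigr) \;+\; \int_{0}^{\actVr} \bigl(H(\envVr) - F(\envVr)\bigr)\, d\envVr.
\end{equation*}
This is the common starting point for both bounds.

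For the Kolmogorov bound, I would apply the triangle inequality and uniformly bound both cdf differences by $d_K(\mesOut,\nu)$:
\begin{equation*}
\bigl|\g[\actVr]{\mesOut} - \g[\actVr]{\nu}\bigr|
\;\leq\; b\cdot|H(\actVr)-F(\actVr)| + \int_0^{\actVr}|H(\envVr)-F(\envVr)|\,d\envVr
\;\leq\; (b+\actVr)\cdot d_K(\mesOut,\nu),
\end{equation*}
and then use $\actVr \le M$ to conclude $(b+\actVr)\le (b+M)$.

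For the Wasserstein bound, the step function $\envVr\mapsto b\cdot\mathbbm{1}\{\envVr>\actVr\}$ is not Lipschitz, so I would decompose $g(\actVr,\envVr) = f_1(\envVr) + f_2(\envVr)$, where $f_1(\envVr):=\min(\envVr,\actVr)$ and $f_2(\envVr):=b\cdot\mathbbm{1}\{\envVr>\actVr\}$. One checks directly that this matches the two cases $\envVr\le\actVr$ and $\envVr>\actVr$ in the definition of $g$. Since $f_1$ is $1$-Lipschitz, the Kantorovich--Rubinstein dual representation of the Wasserstein distance (recalled in \Cref{sec:IPM}) yields $|\mathbb{E}_\mesOut[f_1]-\mathbb{E}_\nu[f_1]|\le d_W(\mesOut,\nu)$. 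For $f_2$, the trivial bound $0\le \mathbb{E}_\mu[f_2],\mathbb{E}_\nu[f_2]\le b$ yields $|\mathbb{E}_\mesOut[f_2]-\mathbb{E}_\nu[f_2]|\le b$. Adding these two estimates gives the desired $b + d_W(\mesOut,\nu)$.

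There is no real obstacle here; the only point that requires a small observation is that one should not try to bound $f_2$ through Wasserstein (which would fail since $f_2$ is discontinuous) but rather use its uniform boundedness. Both bounds then follow by elementary manipulations.
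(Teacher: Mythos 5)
Your proof is correct, and the Kolmogorov half is essentially identical to the paper's: both start from the cdf representation of \Cref{lem:ski_rental_cost}, apply the triangle inequality, and bound $|H(\actVr)-F(\actVr)|$ and the integrand uniformly by $d_K(\mesOut,\nu)$. For the Wasserstein half the two arguments diverge slightly. The paper stays with the cdf representation: it bounds $b\cdot|H(\actVr)-F(\actVr)|\le b$ (a cdf difference is at most $1$) and bounds $\int_0^{\actVr}|H-F|\le\int_0^M|H-F|=d_W(\mesOut,\nu)$, invoking the one-dimensional $L^1$ identity for the Wasserstein distance. You instead decompose $g(\actVr,\cdot)=\min(\cdot,\actVr)+b\cdot\mathbbm{1}\{\cdot>\actVr\}$, control the $1$-Lipschitz piece via the Kantorovich--Rubinstein dual (the IPM representation already set up in \Cref{sec:IPM}) and the jump piece by its range $b$. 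The decomposition is easily verified to be exact, so your bound $b+d_W(\mesOut,\nu)$ matches the paper's with the same constants. The two routes are close cousins---integration by parts in the paper's \Cref{lem:ski_rental_cost} is what converts your functional decomposition into the cdf form---but yours has the minor advantage of relying only on the dual representation of $d_W$ rather than additionally on the one-dimensional identity $d_W(\mesOut,\nu)=\int_{\envVr\in\envSp}|F(\envVr)-H(\envVr)|\,d\envVr$, and it makes transparent why the additive $b$ (rather than a term vanishing with $d_W$) is unavoidable for the discontinuous part. No gap in either half.
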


\begin{proof}[\textbf{Proof of \Cref{lem:diff_cost_ski}}]
For every $\mesOut, \nu \in \mesSp$, such that $\mesOut$ (resp. $\nu$) is associated to the cumulative distribution $F$ (resp. $H$) we have that,
\begin{align*}
     |\g[\actVr]{\mesOut} - \g[\actVr]{\nu} | &\stackrel{(a)}{=} \left |b \cdot \left( H(\actVr) - F(\actVr) \right)  +  \int_{0}^{\actVr} \left( H(\envVr) - F(\envVr) \right) d\envVr \right| \\
     &\leq b \cdot |   H(\actVr) - F(\actVr) |   +  \int_{0}^{\actVr} | H(\envVr) - F(\envVr) | d\envVr \\
  &\leq b \cdot    d_{K} \left( \mesOut, \nu \right)  + \int_{0}^x d_{K} \left( \mesOut, \nu \right) d\envVr
 = \left(  x + b \right)  \cdot d_{K} \left( \mesOut, \nu \right) \leq \left(  M + b \right)  \cdot d_{K} \left( \mesOut, \nu \right)
\end{align*}
and
\begin{align*}
     |\g[\actVr]{\mesOut} - \g[\actVr]{\nu} | &\stackrel{(a)}{=} \left |b \cdot \left( H(\actVr) - F(\actVr) \right)  +  \int_{0}^{\actVr} \left( H(\envVr) - F(\envVr) \right) d\envVr \right|\\
      &\leq b \cdot |   H(\actVr) - F(\actVr) |   +  \int_{0}^{\actVr} | H(\envVr) - F(\envVr) | d\envVr \stackrel{(b)}{\leq} b +  d_{W} \left( \mesOut, \nu \right), 
\end{align*}
where $(a)$ follows from \Cref{lem:ski_rental_cost} and $(b)$ holds because $d_{W} \left( \mesOut, \nu \right) = \int_{\envVr \in \envSp} | H(\envVr) - F(\envVr) | d\envVr $. 
\end{proof}

\begin{proposition}
\label{prop:diff_rev_ski}
Consider the ski-rental problem.
Let $\mesOut$ and $\nu\in \mesSp$ be two different distributions and let $\actVr_1$ and $\actVr_2$ be two different actions such that $\actVr_1 < \actVr_2$. We have that,
\begin{equation*}
\g[\actVr_2] {\mesOut} - \g[\actVr_1] {\nu}   \leq b \cdot \frac{d_W \left(\mesOut,\nu \right)}{\actVr_2 - \actVr_1} +  d_W \left(\mesOut,\nu \right) +  \actVr_2 - \actVr_1.
\end{equation*} 
\end{proposition}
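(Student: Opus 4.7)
The plan is to mirror the proof strategy used for the pricing analogue in Proposition \ref{prop:diff_rev}, but applied to the ski-rental cost structure. The key observation is that Lemma \ref{lem:ski_rental_cost} gives a clean cdf representation of the objective, which makes the three relevant error sources transparent and directly amenable to bounding by $d_W$ and the cdf-comparison Lemma \ref{lem:cdf_vs_dist}.

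Concretely, I would let $F$ and $H$ denote the cdfs of $\mesOut$ and $\nu$ respectively and apply Lemma \ref{lem:ski_rental_cost} to both terms, writing
\begin{align*}
\g[\actVr_2]{\mesOut} - \g[\actVr_1]{\nu}
&= b\bigl(H(\actVr_1) - F(\actVr_2)\bigr) + (\actVr_2 - \actVr_1) \\
&\quad + \int_0^{\actVr_1}\bigl(H(\envVr)-F(\envVr)\bigr)\,d\envVr - \int_{\actVr_1}^{\actVr_2} F(\envVr)\,d\envVr.
\end{align*}
The last integral is non-positive since $F\ge 0$ and can be dropped. For the first term, since $\actVr_1<\actVr_2$, I would invoke Lemma \ref{lem:cdf_vs_dist} with the roles of $\mesOut$ and $\nu$ swapped (using the symmetry of $d_W$) to obtain $H(\actVr_1) - F(\actVr_2) \leq d_W(\mesOut,\nu)/(\actVr_2-\actVr_1)$. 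For the remaining integral, I would use the primal-dual characterization of the one-dimensional Wasserstein distance, namely $\int_\envSp|F(\envVr)-H(\envVr)|\,d\envVr = d_W(\mesOut,\nu)$, to conclude that $\int_0^{\actVr_1}(H(\envVr)-F(\envVr))\,d\envVr \leq d_W(\mesOut,\nu)$.

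Summing these three bounds yields the stated inequality. There is essentially no conceptual obstacle here: the proof is a direct computation once one has Lemmas \ref{lem:ski_rental_cost} and \ref{lem:cdf_vs_dist} in hand. The only minor care required is making sure one applies Lemma \ref{lem:cdf_vs_dist} in the correct direction (exchanging the roles of $\mesOut$ and $\nu$) so that the factor $b(H(\actVr_1)-F(\actVr_2))$, rather than $b(F(\actVr_1)-H(\actVr_2))$, is what gets controlled; this is why the ski-rental bound features an inflated action $\actVr_2 > \actVr_1$ in the $\mesOut$-argument, in contrast to the pricing bound where the deflated price $\actVr_1 < \actVr_2$ appears in the $\mesOut$-argument.
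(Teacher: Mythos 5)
Your proposal is correct and follows essentially the same route as the paper's proof: both apply Lemma \ref{lem:ski_rental_cost} to obtain the identical four-term decomposition, bound $b\bigl(H(\actVr_1)-F(\actVr_2)\bigr)$ via Lemma \ref{lem:cdf_vs_dist} with the roles of $\mesOut$ and $\nu$ swapped, and control the residual integrals by $\int_0^M |F-H|\,d\envVr = d_W(\mesOut,\nu)$ after discarding the non-positive term $-\int_{\actVr_1}^{\actVr_2}F$. Your remark about the direction in which Lemma \ref{lem:cdf_vs_dist} must be applied (and why inflation rather than deflation is what gets controlled here) is accurate and matches the paper's step $(b)$.
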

\begin{proof}[\textbf{Proof of \Cref{prop:diff_rev_ski}}]
Let $\actVr_1 < \actVr_2$. 
Let $F$ (resp. $H$) be the cdf associated to $\mesOut$ (resp. $\nu$). We have,
\begin{align*}
    \g[\actVr_2]{\mesOut} - \g[\actVr_1]{\nu} &\stackrel{(a)}{=}  b \cdot \left( H\left( \actVr_1 \right) -  F\left( \actVr_2 \right) \right) + \actVr_2 - \actVr_1 + \int_{0}^{\actVr_1} \left( F(\envVr) -H (\envVr) \right) d \envVr - \int_{\actVr_1}^{\actVr_2} F(\envVr) d \envVr \\
    &\leq b \cdot  \left( H\left( \actVr_1 \right) - F \left( \actVr_2 \right) \right) +  \actVr_2 - \actVr_1  + \int_{0}^{M}| F(\envVr) - H (\envVr) |d \envVr \\
    &\stackrel{(b)}{\leq} b \cdot \frac{d_W \left(\mesOut,\nu \right)}{\actVr_2 - \actVr_1} +  \actVr_2 - \actVr_1  + \int_{0}^{M}| F(\envVr) - H (\envVr) |d \envVr\\
    &\stackrel{(c)}{=}b \cdot \frac{d_W \left(\mesOut,\nu \right)}{\actVr_2 - \actVr_1} +  \actVr_2 - \actVr_1  + d_W \left(\mesOut,\nu \right),
\end{align*}
where $(a)$ follows from \Cref{lem:ski_rental_cost} and $(b)$ holds by \Cref{lem:cdf_vs_dist} and $(c)$ follows from the definition of the Wasserstein distance.  
\end{proof}

\section{Multi-dimensional example: Bayesian Mechanism Design}
\label{sec:apx_auction}
In this section, we illustrate how our framework can be applied in a multi-dimensional setting by developing guarantees for the bayesian mechanism design problem. In \Cref{sec:apx_formulation_auction} we present the Bayesian Mechanism Design problem in heterogeneous environment and provide a way to relate the heterogeneity structure analyzed in \cite{brustle2020multi} to an IPM. In \Cref{sec:apx_SAA_auction}, we leverage the methodology developed in our paper to  bound the asymptotic worst-case regret of SAA for this problem class. This shows that our methodology allows to recover results derived for specific multi-dimensional problems analyzed in the literature.

\subsection{Problem formulation}\label{sec:apx_formulation_auction}

In this problem, a decision-maker sells a single indivisible good to $m$ buyers. The environment space $\envSp = [0,M]^m$ represents the set of vectors $\bm{\envVr}$, such that for every $i \in \{1,\ldots,m\}$, $\envVr_i$ is the private value that the $i^{th}$ buyer associates to the product. We assume that this value is sampled independently from a distribution $\mu_i$. Therefore the vector of values $\bm{\envVr}$ is sampled from the product distribution $\bm{\mu} = \mu_1 \times \ldots \times \mu_m$. The set of probability measure $\mathcal{P}$ considered  in this problem is the set of product measure $\Delta([0,M])^m$.

A mechanism $(\mathbf{x},\mathbf{p})$ is  defined by an allocation function $\mathbf{x}: \envSp \rightarrow [0,1]^m$, such that for every vector of values $\bm{\envVr} \in \envSp$, $\mathbf{x}(\bm{\envVr})$ is the vector of probabilities that the item is allocated to each buyer, and a payment function $p: \envSp \rightarrow \mathbb{R}_{+}^m $, where $\mathbf{p}(\bm{\envVr})$ represents the vector of payments of each buyer.

In what follows, we will restrict attention to mechanisms satisfying the following two conditions for every $i \in \{1,\ldots m\}$:
\begin{align}
\envVr_i \cdot x_i(\envVr_i, \bm{\envVr}_{-i}) - p_i(\envVr_i, \bm{\envVr}_{-i})  \geq \envVr_i \cdot x_i(b_i, \bm{\envVr}_{-i}) - p_i(b_i, \bm{\envVr}_{-i}) 
 \quad &\text{for every $\envVr_i,b_i \in [0,M]$ and $\bm{\envVr}_{-i} \in [0,M]^{m-1}$}  \tag{DSIC} \label{eq:DSIC} \\ 
\envVr_i \cdot x_i(\envVr_i, \bm{\envVr}_{-i}) - p_i(\envVr_i, \bm{\envVr}_{-i}) \geq 0 \quad &\text{for every $\envVr_i \in [0,M]$ and $\bm{\envVr}_{-i} \in [0,M]^{m-1}$} \tag{IR} \label{eq:IR},
\end{align}
where $\bm{\envVr_{-i}}$ is the bid vector of all buyers except buyer $i$.
The first constraint is the \textit{Dominant Strategy Incentive Compatibility} and ensures that, for the given mechanism, buyers are always better-off when bidding their own value, i.e. $b_i = \envVr_i$ for every $i$. The second constraint ensures \textit{Individual Rationality}. In what follows, $\actSp$ will be the set of \textit{randomized} mechanisms satisfy \eqref{eq:DSIC} and \eqref{eq:IR}.

For any truthful mechanism $(\mathbf{x},\mathbf{p}) \in \actSp$, the total revenue is defined for every $\bm{\envVr} \in \envSp$ as,
\begin{equation*}
g((\mathbf{x},\mathbf{p}),\bm{\envVr}) =  \sum_{i=1}^m p_i(\bm{\envVr})  .
\end{equation*}
The decision-maker aims at selecting a mechanism satisfying \eqref{eq:DSIC} and \eqref{eq:IR} in order to maximize the expected total revenue $\g[(\mathbf{x},\mathbf{p})]{\bm{\mu}} = \mathbb{E}_{\bm{\envVr} \sim \bm{\mu}} \left[ \sum_{i=1}^m p_i(\bm{\envVr}) \right].$ Equivalently, the decision-maker aims at minimizing the regret against the oracle defined as,
\begin{equation*}
\mathtt{ORACLE}: \bm{\mu} \mapsto (\bm{x}^{\text{Myerson}},\bm{p}^{\text{Myerson}}),
\end{equation*}
where $(\bm{x}^{\text{Myerson}},\bm{p}^{\text{Myerson}})$ is Myerson's optimal mechanism \citep{myerson1981optimal}.

\noindent \textbf{Multi-dimensional Kolmogorov distance.}  We illustrate the applicability of our framework to higher dimensions by considering the heterogeneity structure previously used in \cite{brustle2020multi}. In particular, they evaluate the distance between $\bm{\mu} \in \mesSp$ and $\bm{\nu} \in \mesSp$, by comparing the distance between their marginals. Formally, for a fixed epsilon $\epsilon > 0$, $\bm{\mesOut}$ is at most $\epsilon$ away from $\bm{\nu}$ if and only if for every $i \in \{1,\ldots,\numS\}$,
\begin{equation*}
d_K(\mu_i,\nu_i) \leq \epsilon. 
\end{equation*}
where $d_K$ is the one-dimensional Kolmogorov distance defined in \Cref{sec:IPM}. We note that there is no simple characterization of this definition of heterogeneity as an IPM. In what follows, we will leverage the independence assumption of buyers values to relate this heterogeneity notion defined through marginals to an IPM which is defined for the whole product distribution. 

Formally, let $\mathcal{F}$ be a generating class of functions from $[0,M] \to \mathbb{R}$ and define $\mathcal{F}^{(m)}$,
\begin{equation*}
\mathcal{F}^{(m)} = \bigcap_{i=1}^m \left \{ f : [0,M]^m \to \mathbb{R} \text{ s.t. for every } \bm{\envVr_{-i}} \in [0,M]^{m-1}\,, \, \envVr_i \mapsto f(\envVr_i,\envVr_{-i}) \in \gen{\mathcal{F}} \right\}.
\end{equation*}
Intuitively, the class $\mathcal{F}^{(m)}$ is the class of functions defined on $[0,M]^{m}$ such that every single dimensional functions which induced by fixing $m-1$ parameters to arbitrary values is in the original class $\mathcal{F}$.

Our next results shows that for product of measure, we can relate the IPM generated by $\mathcal{F}^{(m)}$ to the one generated by $\mathcal{F}$.
\begin{proposition}
\label{prop:multiD_to_1D}
For every $\bm{\mu}, \bm{\nu} \in \Delta([0,M])^m$,
\begin{equation*}
d_{\mathcal{F}^{(m)}}(\bm{\mu},\bm{\nu}) \leq \sum_{i=1}^m d_{\mathcal{F}} (\mu_i,\nu_i).
\end{equation*}
\end{proposition}
We emphasize here that the assumption that the probability measures are product measures is critical.

Recall that, $\mathcal{K}$ is the generating set defining the Kolmogorov distance. We consider the IPM generated by $\mathcal{K}^{(m)}$ and 
denote by $\mathcal{I}_{mK}$ the problem instance for which the heterogeneity set is defined by $\dis{\mathcal{K}^{(m)}}$.
In what follows, we will analyze the performance of SAA for $\mathcal{I}_{mK}$. We will then relate our result to the one derived by \cite{brustle2020multi} by using \Cref{prop:multiD_to_1D}.

\subsection{Regret of SAA for Bayesian Mechanism Design with Kolmogorov heterogeneity}\label{sec:apx_SAA_auction}

In this section we analyze the SAA policy which takes as an input past samples $(\bm{\envVr}_1,\ldots,\bm{\envVr}_\numS) \in \envSp^{\numS}$ and constructs for each $i \in \{1,\ldots, m\}$ the empirical distribution of values of buyer $i$ defined for every measurable set $A \subset \envSp$ as $\hat{\nu_i}(A) = \sum_{j=1}^\numS \mathbbm{1} \{ \envVr_{j,i} \in A \}$. The SAA policy then outputs the Myerson mechanism associated to the product measure $\hat{\bm{\nu}}  = \hat{\nu}_1 \times \ldots \times \hat{\nu}_m$.

We next apply the methodology developed in \Cref{sec:SAA-analysis} to compute the asymptotic worst-case regret of SAA under the Kolmogorov heterogeneity. We first show that one can relate in this setting the asymptotic worst-case regret of SAA to the uniform deviation $\UB_{\mathcal{I}}$ as in \eqref{eq:bound_mohri}. We then control $\UB_{\mathcal{I}}$ by analyzing the approximation parameter $\lambda \left( \mathcal{K}^{(m)}, g, 0 \right)$ and leveraging \Cref{thm:generating_class}.

We first show that the ETC property holds for the distance $\dis{\mathcal{K}^{(m)}}$ for every product of probability measures. To that end, we prove the following more general statement.
\begin{proposition}[Lifting ETC from marginals to product measures]
\label{prop:ETC_multiK}
Let $\mathcal{F}$ be a generating class of functions from $[0,M] \to \mathbb{R}$ and consider the associated multi-dimensional IPM $\mathcal{F}^{(m)}$. Assume that $\mathcal{F}$ satisfies the ETC property. Then for every triangular array sequence of product of probability measures $(\bm{\mu}_{i,\numS})_{1 \leq i \leq \numS, \numS \in \mathbb{N}^*}$ all belonging to $\Delta([0,M])^m$, we have
\begin{equation*}
\lim_{\numS \to \infty} \dis{\mathcal{F}^{(m)}}(\hat{\bm{\mu}}_\numS, \bar{\bm{\mu}}_\numS) = 0 \qquad \text{a.s.},
\end{equation*}
where $\hat{\bm{\mu}}_\numS = \hat{\nu}_{1,\numS} \times \ldots \times \hat{\nu}_{k,\numS} $ is the empirical product probability measure with marginal defined for every measurable set $A \in [0,M]$, and every $j \in \{1,\ldots,k\}$ as $\hat{\nu}_{j,\numS}(A) :=  \frac{1}{\numS} \sum_{i=1}^\numS \mathbbm{1} \left \{ \envVr^{(i,\numS)}_{j} \in A \right \}$ 
for samples $\bm{\envVr}^{(i,\numS)} \sim \bm{\mu}_{i,\numS}$. Furthermore, $\bar{\mu}_\numS = \frac{1}{\numS} \sum_{i=1}^\numS \mu_{i,\numS}$. 
\end{proposition}
Note that \Cref{prop:ETC_multiK} can be applied to $\dis{\mathcal{K}^{(m)}}$ given that the Kolmogorov distance satisfies the ETC property (see \Cref{prop:etc_Kol}).
Furthermore, we highlight that \eqref{prop:ETC_multiK} only ensures a form of ETC on product measures (as opposed to space of all probability measures). The proof of \Cref{thm:reduction_stat} goes through in this case for the implementation of SAA described above as the latter policy also uses the product empirical measure to make decisions.

Let $\mathcal{X}^{\mathrm{Myerson}}$ be the subset of mechanisms which are Myerson 
mechanisms for some choice of product measures. Formally,
\begin{equation*}
\mathcal{X}^{\mathrm{Myerson}} = \left \{ (\bm{x},\bm{p}) \; \text{s.t. there exists $\bm{\mu} \in \mesSp$ s.t. $\oracle{\bm{\mesOut}} = (\bm{x},\bm{p})$} \right \}.
\end{equation*}

By using an argument similar to the one in the proof of \Cref{cor:dro_to_reg}, we can show that,
 \begin{equation*}
  \mathfrak{R}_{\mathcal{I}_{mK},\infty}^{\SAA}(\epsilon)  \leq 2 \sup_{\mesOut \in \mesSp} \sup_{\nu \in \mathcal{U}_{\epsilon}(\mesOut)} \sup_{\actVr \in \mathcal{X}^{\mathrm{Myerson}}} | \g[\actVr]{\mesOut} - \g[\actVr]{\nu} |,
 \end{equation*}
 as opposed to the bound \eqref{eq:bound_mohri} which involves a supremum over $\actSp$.

In turn, adapting the proof of \Cref{thm:generating_class} to the previous upper bound implies,
\begin{align}
\sup_{\mesOut \in \mesSp} \sup_{\nu \in \mathcal{U}_{\epsilon}(\mesOut)} \sup_{\actVr \in \mathcal{X}^{\mathrm{Myerson}}} | \g[\actVr]{\mesOut} - \g[\actVr]{\nu} | &\leq  \inf_{\delta \geq 0} \left \{ \sup_{\actVr \in \mathcal{X}^{\mathrm{Myerson}}} \lambda_{\actVr} \left( \mathcal{K}^{(m)}, g, \delta \right) \cdot \epsilon + 2\delta \right \} \nonumber\\
&\leq \sup_{\actVr \in \mathcal{X}^{\mathrm{Myerson}}} \lambda_{\actVr} \left( \mathcal{K}^{(m)}, g, \delta \right) \cdot \epsilon,\label{eq:SAA_md}
\end{align}
where the approximation parameter $\lambda \left( \mathcal{K}^{(m)}, g, \delta \right)$ is defined in \Cref{def:approx}. 
To conclude the analysis of SAA, we will derive an upper bound on the approximation parameter the bayesian mechanism design problem. Formally, we show the following.
\begin{proposition}
\label{prop:lambda_md}
For the bayesian mechanism design problem we have that,
\begin{equation*}
\sup_{\actVr \in \mathcal{X}^{\mathrm{Myerson}}} \lambda_{\actVr} \left( \mathcal{K}^{(m)}, g, \delta \right) \leq M.
\end{equation*}
\end{proposition}
\Cref{prop:lambda_md} and \eqref{eq:SAA_md} imply that for every $\epsilon \geq 0$,
\begin{equation*}
 \mathfrak{R}_{\mathcal{I}_{mK},\infty}^{\SAA}(\epsilon) \leq 2 M \cdot \epsilon.
\end{equation*}
Finally, to retrieve \cite{brustle2020multi}, we leverage  \Cref{prop:multiD_to_1D} and note that for every product measures $\bm{\mu}$ and $\bm{\nu} \in \Delta([0,M])^m$, if $\dis{\mathcal{K}}(\mu_i,\nu_i) \leq \epsilon$ for every $i \in \{1,\ldots,m\}$ then $d_{\mathcal{K}^{(m)}}(\bm{\mu},\bm{\nu}) \leq m \cdot \epsilon$. Therefore, we recover the bound $2M \cdot m \cdot \epsilon$ on the worst-case DRO regret of SAA under their notion of heterogeneity for every $\epsilon \geq 0$.

\subsection{Proofs and auxiliary results}

\begin{proof}[\textbf{Proof of \Cref{prop:multiD_to_1D}}]
For every $i \in \{1,\ldots,m\}$ $\tilde{\bm{\mu}}^{(i)} = \mu_1\times \ldots \times \mu_{i} \times \nu_{i} \times \ldots \times \nu_{m}$ and let $\tilde{\bm{\mu}}^{(0)} = \bm{\nu}$. We then have that for every $f \in \mathcal{F}_m$,
\begin{equation*}
\left| \mathbb{E}_{\bm{\envVr} \sim \bm{\mu}} \left[  f(\bm{\envVr}) \right] - \mathbb{E}_{\bm{\envVr'} \sim \bm{\nu}} \left[  f(\bm{\envVr'}) \right]  \right| = \left |\sum_{i=1}^m \mathbb{E}_{\bm{\envVr} \sim \bm{\mu}^{(i-1)}} \left[  f(\bm{\envVr}) \right] - \mathbb{E}_{\bm{\envVr'} \sim \bm{\mu}^{(i)}} \left[  f(\bm{\envVr'}) \right] \right | \leq \sum_{i=1}^m \left|  \mathbb{E}_{\bm{\envVr} \sim \bm{\mu}^{(i-1)}} \left[  f(\bm{\envVr}) \right] - \mathbb{E}_{\bm{\envVr'} \sim \bm{\mu}^{(i)}} \left[  f(\bm{\envVr'}) \right] \right|.
\end{equation*}
Furthermore, for every $i \in \{1, \ldots, m\}$, we have that,
\begin{align*}
| &\mathbb{E}_{\bm{\envVr} \sim \bm{\mu}^{(i-1)}}  \left[  f(\bm{\envVr}) \right] - \mathbb{E}_{\bm{\envVr'} \sim \bm{\mu}^{(i)}} \left[  f(\bm{\envVr'}) \right] |\\
\qquad &  = \left| \mathbb{E}_{\bm{\envVr_{-i}} \sim \mu_1 \times \ldots \mu_{i-1} \times \nu_{i+1} \times \ldots \nu_{\numS}} \left[ \int_{\envVr_i \in [0,M]}  f(\envVr_i,\bm{\envVr_{-i}})) d\mu_i(\envVr_i) - \int_{\envVr_i \in [0,M]}  f(\envVr_i,\bm{\envVr_{-i}})) d \nu_{i}(\envVr_i)  \right]  \right|\\
&\leq \mathbb{E}_{\bm{\envVr_{-i}} \sim \mu_1 \times \ldots \mu_{i-1} \times \nu_{i+1} \times \ldots \nu_{\numS}} \left[ \left|  \int_{\envVr_i \in [0,M]}  f(\envVr_i,\bm{\envVr_{-i}})) d\mu_i(\envVr_i) - \int_{\envVr_i \in [0,M]}  f(\envVr_i,\bm{\envVr_{-i}})) d \nu_{i}(\envVr_i) \right| \right] \\
&\stackrel{(a)}{\leq} \mathbb{E}_{\bm{\envVr_{-i}} \sim \mu_1 \times \ldots \mu_{i-1} \times \nu_{i+1} \times \ldots \nu_{\numS}} \left[ d_{\mathcal{F}}(\mu_i,\nu_i) \right] = d_{\mathcal{F}}(\mu_i,\nu_i).  
\end{align*}
where $(b)$ holds because $f \in \mathcal{F}^{(m)}$ and therefore, for every $\bm{\envVr_{-i}}$, we have that $\envVr_i \mapsto f(\envVr_i,\bm{\envVr_{-i}})) \in \gen{\mathcal{F}}$. Furthermore, by definition of the maximal generator $\gen{\mathcal{F}}$, we have that $\dis{\mathcal{F}} = \dis{\gen{\mathcal{F}}}$.

Therefore,
\begin{equation*}
\left| \mathbb{E}_{\bm{\envVr} \sim \bm{\mu}} \left[  f(\bm{\envVr}) \right] - \mathbb{E}_{\bm{\envVr'} \sim \bm{\nu}} \left[  f(\bm{\envVr'}) \right]  \right| \leq \sum_{i=1}^m d_{\mathcal{F}}(\mu_i,\nu_i).
\end{equation*}
We conclude the proof by taking a supremum over $f \in \mathcal{F}^{(m)}$.

\end{proof}

\begin{proof}[\textbf{Proof of \Cref{prop:ETC_multiK}}]
Consider a triangular array sequence of probability measures $(\bm{\mu}_{i,\numS})_{1 \leq i \leq \numS, \numS \in \mathbb{N}^*}$ all belonging to $\Delta([0,M])^k$.

For every $\numS \geq 1$, we note that, by definition, $\hat{\bm{\mu}}_\numS =  \hat{\nu}_{1,\numS} \times \ldots \times \hat{\nu}_{k,\numS}$ is a product probability measure. Furthermore,  $\bar{\mu}_\numS$ is a product probability measure as it is a convex combination of such measures. We denote by $(\bar{\mu}_{j,\numS})_{j \in \{1,\ldots,k\}}$ its marginals. Therefore, \Cref{prop:multiD_to_1D} implies that, 
\begin{equation*}
\dis{\mathcal{F}^{(m)}}(\hat{\bm{\mu}}_\numS, \bar{\bm{\mu}}_\numS) \leq \sum_{j=1}^m d_{\mathcal{F}}(\hat{\nu}_{j,\numS},\bar{\mu}_{j,\numS}),
\end{equation*}
We conclude that,
\begin{equation*}
\lim_{\numS \to \infty} \dis{\mathcal{F}^{(m)}}(\hat{\bm{\mu}}_\numS, \bar{\bm{\mu}}_\numS)  \leq \sum_{j=1}^m \lim_{\numS \to \infty} d_{\mathcal{F}}(\hat{\nu}_{j,\numS},\bar{\mu}_{j,\numS}) = 0,
\end{equation*}
where the last equality follows from the assumption that $\dis{\mathcal{F}}$ satisfies the ETC property.
\end{proof}

\begin{proof}[\textbf{Proof of \Cref{prop:lambda_md}}]
Recall that for every $(\mathbf{x},\mathbf{p}) \in \actSp$,
\begin{subequations}
\begin{alignat*}{2}
\lambda_{(\mathbf{x},\mathbf{p})}(\mathcal{K}^{(m)},g,0) := &\! \inf_{(\lambda_i)_{i \in \mathbb{N}}, (f_i)_{i \in \mathbb{N}}}        &\qquad& \sum_{i=1}^\infty |\lambda_i|  \\
&\text{subject to} &      & \sum_{i=1}^\infty \lambda_i \cdot f_i = g((\mathbf{x},\mathbf{p}),\cdot) , \\
&		    &     & f_i \in \mathcal{K}^{(m)}, \; \lambda_i \in \mathbb{R}.
\end{alignat*}
\end{subequations}

Therefore, to prove the result, it is sufficient to show that for every $(\mathbf{x},\mathbf{p}) \in \actSp^{\textrm{Myerson}}$, we have that, $\frac{g((\mathbf{x},\mathbf{p}),\cdot)}{M} \in \mathcal{K}^{(m)}.$

Fix $i \in \{1,\ldots,m\}$ and $\bm{\envVr}_{-i} \in [0,M]^{m-1}$ and let $\tilde{g} : \envVr_i \mapsto g((\mathbf{x},\mathbf{p}),(\envVr_i,\bm{\envVr}_{-i})).$ In what follows, we will show that $\frac{1}{M} \cdot \tilde{g} \in \gen{\mathcal{K}}$. By definition of  $\mathcal{K}^{(m)}$, this would imply that $\frac{g((\mathbf{x},\mathbf{p}),\cdot)}{M} \in \mathcal{K}^{(m)}$.

We have established in \Cref{prop:common_max} that $\gen{\mathcal{K}}$ is the set of one dimensional mappings such that the total variation is less or equal than $1$. Therefore, to show that $\frac{1}{M} \cdot \tilde{g} \in \gen{\mathcal{K}}$, it suffices to show that $V(\tilde{g}) \leq M.$
To do so, we will show that: i) $\tilde{g}$ is non-decreasing, and ii) $\tilde{g}(M) - \tilde{g}(0) \leq M$.   

We note that for every Myerson mechanism the item is allocated to the bidder with the highest ironed virtual value function (see \cite{hartline2013bayesian} for a detailed description of the virtual value function and the ironing process). Furthermore, the payment of the winning agent equals the minimum payment such that they are still winning. The key is to note that the ironed virtual value function is by definition non-decreasing. Therefore one clearly see that increasing the value of a single agent while keeping all other values fixed weakly increase the revenue of the decision-maker. Hence, $\tilde{g}$ is non-decreasing.

Furthermore, $\tilde{g}(0) \geq 0$ because the revenue of the decision-maker is non-negative and $\tilde{g}(M) \leq M$ by the constraint \eqref{eq:IR}. This concludes the proof.

\end{proof}

\section{Strict inequality in Remark 2}
\label{sec:apx_strict_inequality}
In this section, we show that there exists an instance $\mathcal{I}$ such that,
\begin{equation*}	
\mathfrak{R}_{\scaleto{\mathcal{I},\mathrm{DRO}}{6pt}}^{\SAA} (\epsilon) < 2  \cdot \sup_{\mesOut \in \mesSp} \sup_{\nu\in \mathcal{U}_{\epsilon}(\mesOut)} \sup_{\actVr \in \actSp} | \g[\actVr]{\mesOut} - \g[\actVr]{\nu} |.
\end{equation*}	

Let $\mathcal{I}$ be the newsvendor problem under the Kolmogorov distance presented in \Cref{sec:newsvendor_SAA}. For this instance, we characterize exactly the value of the DRO problem for SAA.
\begin{proposition}[Exact DRO regret for SAA for the Kolmogorov Newsvendor problem]
\label{prop:DRO_News_K}
Let $\mathcal{I}$ be the newsvendor problem under the Kolmogorov distance. For every $\epsilon \geq 0$, we have that,
\begin{equation*}
\mathfrak{R}_{\scaleto{\mathcal{I},\mathrm{DRO}}{6pt}}^{\SAA} (\epsilon) = \left( c_u + c_o \right) \cdot M \cdot \epsilon.
\end{equation*}
\end{proposition}

In the proof of \Cref{cor:SAA_Newsvendor}, we have showed that for the newsvendor problem under the Kolmogorov distance,
\begin{equation*}
2  \cdot \sup_{\mesOut \in \mesSp} \sup_{\nu\in \mathcal{U}_{\epsilon}(\mesOut)} \UB_{\mathcal{I}}(\mesOut,\nu) \leq 2 \max\left(c_u,c_o \right) \cdot M \cdot \epsilon.
\end{equation*}
We next show that this inequality is in fact an equality.
\begin{proposition}
\label{prop:newsvendor_K_equality}
Let $\mathcal{I}$ be the newsvendor problem under the Kolmogorov distance. For every $\epsilon \geq 0$, we have that,
\begin{equation*}
2  \cdot \sup_{\mesOut \in \mesSp} \sup_{\nu\in \mathcal{U}_{\epsilon}(\mesOut)} \UB_{\mathcal{I}}(\mesOut,\nu) = 2 \max\left(c_u,c_o \right) \cdot M \cdot \epsilon.
\end{equation*}
\end{proposition}
Therefore, \Cref{prop:DRO_News_K} and \Cref{prop:newsvendor_K_equality} imply that when $c_u \neq c_o$ we have that,
\begin{equation*}
\mathfrak{R}_{\scaleto{\mathcal{I},\mathrm{DRO}}{6pt}}^{\SAA} (\epsilon) < 2  \cdot \sup_{\mesOut \in \mesSp} \sup_{\nu\in \mathcal{U}_{\epsilon}(\mesOut)} \UB_{\mathcal{I}}(\mesOut,\nu) .
\end{equation*}

\subsection{Proofs of the auxiliary results}
Denote by $q = \frac{c_u}{c_u + c_o}$  the newsvendor critical fractile. In this section, we define the optimal action for a probability  measure $\mesOut$ as $\oracle{\mu}$. We note that for the newsvendor problem, $ \oracle{\mu} = F^{-1}(q)$, where $F^{-1}(z) = \inf \{\actVr \text{ s.t } F(\actVr) \geq z\}$ and $F$ is the cdf associated to the probabity measure $\mesOut$.

\begin{lemma}
\label{lem:worst-reg_decision}
Consider the Kolmogorov newsvendor problem.
Let $\epsilon \leq \min(q,1-q).$ Let $\nu \in \mesSp$ and denote by $H$ its associated cdf. Then, for every $\actVr \in \actSp$  we have that,
\begin{equation*}
\sup_{\mesOut \in \mathcal{U}_{\epsilon}(\nu)}  \g[\actVr]{\mu} - \opt(\mu) = (c_u + c_o) \cdot \max \left (\int_{H^{-1}(q- \epsilon)}^{\actVr} \left(H_U(\envVr) - q \right) d \envVr, \int_{\actVr}^{H^{-1}(q+\epsilon)} \left(q - H_L(\envVr) \right) d \envVr \right),
\end{equation*}
where $H_U = \min(1,H + \epsilon)$ and $H_L = \max(0,H-\epsilon)$.
\end{lemma}

\begin{proof}[\textbf{Proof of \Cref{lem:worst-reg_decision}}]
Lemma A-1 of \cite{besbes2023big} shows that the expected newsvendor cost satisfies for every action $\actVr \in \actSp$ and probability measure $\mu \in \mesSp$ with associated cdf $F$, that
\begin{equation}
\label{eq:integral_newsvendor}
\g[\actVr]{\mu} = c_u \cdot \left( \mathbb{E}_{\envVr \sim \mu}[\envVr] - \actVr \right) + (c_u + c_o) \cdot \int_0^{\actVr} F(\envVr) d \envVr.
\end{equation}
Denote by $q = \frac{c_u}{c_u + c_o}$  the newsvendor critical fractile. We note that for the newsvendor, $\oracle{\mu} = F^{-1}(q)$, where $F^{-1}(z) = \inf \{\actVr \text{ s.t } F(\actVr) \geq z\}$. 

Hence, for any action $\actVr \in \actSp$, the regret can be rewritten as,
\begin{align*}
\g[\actVr]{\mu} - \opt(\mu) &= \g[\actVr]{\mu} - \g[\oracle{\mu}]{\mu} \\
& \stackrel{(a)}{=} c_u \cdot ( \oracle{\mu} - \actVr) + (c_u + c_o) \cdot \int_{\oracle{\mu}}^{\actVr} F(\envVr) d \envVr\\
& = (c_u + c_o) \cdot \int_{F^{-1}(q)}^{\actVr} \left(F(\envVr) - q \right) d \envVr,
\end{align*}
where $(a)$ follows from \eqref{eq:integral_newsvendor}. In what follows, let us fix $\nu \in \mesSp$ and let denote by $H$ the associated cdf. We have,
\begin{align*}
\sup_{\mesOut \in \mathcal{U}_{\epsilon}(\nu)} \g[\actVr]{\mu} - \opt(\mu)  &= (c_u + c_o) \cdot \sup_{\mesOut \in \mathcal{U}_{\epsilon}(\nu)} \int_{F^{-1}(q)}^{\actVr} \left(F(\envVr) - q \right) d \envVr\\
&=(c_u + c_o) \cdot \max \Big( \sup_{ \substack{F \text{ cdf s.t.}\\ \actVr \geq F^{-1}(q),\\ \|F-H\|_{\infty} \leq \epsilon}}  \int_{F^{-1}(q)}^{\actVr} \left(F(\envVr) - q \right) d \envVr,\\
 &\quad \sup_{ \substack{F \text{ cdf s.t.}\\ \actVr \leq F^{-1}(q),\\ \|F-H\|_{\infty} \leq \epsilon}}  \int_{\actVr}^{F^{-1}(q)} \left(q- F(\envVr)\right) d \envVr  \Big)
\end{align*}
We first solve the optimization problem appearing in the first term of the maximum and show that,
\begin{equation}
\label{eq:sup_mu}
\sup_{ \substack{F \text{ cdf s.t.}\\ \actVr \geq F^{-1}(q),\\ \|F-H\|_{\infty} \leq \epsilon}}  \int_{F^{-1}(q)}^{\actVr} \left(F(\envVr) - q \right) d \envVr = \begin{cases}
 \int_{H^{-1}(q- \epsilon)}^{\actVr} \left(H_U(\envVr) - q \right) d \envVr \qquad \text{if $H^{-1}(q-\epsilon) \leq \actVr$,}\\
 -\infty \qquad \text{o.w.},
\end{cases}
\end{equation}
where $H_U = \min(1,H + \epsilon)$.

\textit{Case 1: $H^{-1}(q-\epsilon) > \actVr.$} In this case, we show that the problem is infeasible.  Assume that,
\begin{equation}
\label{eq:point_max}
\inf_{ \substack{F \text{ cdf s.t.}\\ \|F-H\|_{\infty} \leq \epsilon}} F^{-1}(q) \geq H^{-1}(q- \epsilon).
\end{equation}
If  $H$  is such that, $H^{-1}(q-\epsilon) > \actVr$, \eqref{eq:point_max} implies that there is no $F$ such that $\|F-H\|_{\infty} \leq \epsilon$ and $F^{-1}(q) \leq \actVr$ which implies that the problem in LHS of \eqref{eq:sup_mu} is infeasible.

We next show \eqref{eq:point_max}. Assume for sake of contradiction that there exists $F$ such that $\|F-H\|_{\infty} \leq \epsilon$ and $F^{-1}(q) < H^{-1}(q- \epsilon)$. This implies that, $q < F(H^{-1}(q- \epsilon))$. Using the fact that $F(H^{-1}(q- \epsilon)) \leq H(H^{-1}(q- \epsilon)) + \epsilon = q$ we obtain the contradiction that $q < q$. Hence \eqref{eq:point_max} holds.

\textit{Case 2: $H^{-1}(q-\epsilon) \leq \actVr.$}  In that case, we note that for every $F$ such that, $\|F-H\|_{\infty} \leq \epsilon$ and $F^{-1}(q) \leq \actVr$, we have that,
\begin{equation}
\label{eq:upper}
 \int_{F^{-1}(q)}^{\actVr} \left(F(\envVr) - q \right) d \envVr \stackrel{(a)}{\leq}  \int_{F^{-1}(q)}^{\actVr} \left(H_{U}(\envVr) - q \right) d \envVr \stackrel{(b)}{\leq} \int_{H^{-1}(q- \epsilon)}^{\actVr} \left(H_{U}(\envVr) - q \right) d \envVr,
\end{equation}
where $(a)$ holds because, $F(\envVr) \leq \min(1,H(\envVr) + \epsilon)$ for every $\envVr \in [0,M]$ and $(b)$ follows from the following argument: $i)$ we note that the application, $z \mapsto \int_{z}^{\actVr} \left(H_{U}(\envVr)  - q \right) d \envVr$ is non-increasing on $[H^{-1}(q- \epsilon),\actVr]$, because for every $\envVr \geq H^{-1}(q- \epsilon)$, we have by definition of $H^{-1}$, that $H(\envVr)  +\epsilon - q \geq 0$. $ii)$ \eqref{eq:point_max} implies that any feasible $F$ satisfies that $F^{-1}(q) \geq H^{-1}(q- \epsilon)$. $i)$ and $ii)$ implies that $(b)$ holds.

To conclude that \eqref{eq:sup_mu} holds, we need to show that the upper bound derived in \eqref{eq:upper} is achieved. We consider the distribution $H_U(\envVr) = \min(1, H(\envVr) + \epsilon)$ for every $\envVr \in [0,M]$. $H_U$ satisfies clearly $\|H_{U} -H\|_{\infty} \leq \epsilon$. Furthermore $H_U^{-1}(q) = H^{-1}(q-\epsilon) \leq \actVr$. Indeed, we have that $H_U^{-1}(q) = H^{-1}(q-\epsilon)$ because, by definition of $H^{-1}(q-\epsilon)$, we have on the one hand that, $H_U(H^{-1}(q-\epsilon)) \geq H(H^{-1}(q-\epsilon)) + \epsilon \geq q$. On the other hand, for all $z < H^{-1}(q-\epsilon)$, $H(z) + \epsilon < q$ which implies that $H_{U}(z) < q$.  Hence $H_U$ is feasible for the optimization problem  in \eqref{eq:sup_mu}. Moreover, it verifies that, 
\begin{equation*}
\int_{H_U^{-1}(q)}^{\actVr} (H_U(\envVr) -q) d \envVr = \int_{H^{-1}(q- \epsilon)}^{\actVr} \left(H_U(\envVr) - q \right) d \envVr.
\end{equation*}
Therefore \eqref{eq:sup_mu} holds.

By a similar argument, one can prove that,
\begin{equation*}
\sup_{ \substack{F \text{ cdf s.t.}\\ \actVr \leq F^{-1}(q),\\ \|F-H\|_{\infty} \leq \epsilon}}  \int_{\actVr}^{F^{-1}(q)} \left( q - F(\envVr) \right) d \envVr = \begin{cases}
 \int_{\actVr}^{H^{-1}(q+\epsilon)} \left(q - H_{L}(\envVr) \right) d \envVr \qquad \text{if $H^{-1}(q+\epsilon) \geq \actVr$,}\\
 -\infty \qquad \text{o.w.}
 \end{cases}
\end{equation*}
\end{proof}
Finally, by remarking that $H^{-1}(q-\epsilon) > \actVr$ and $H^{-1}(q+\epsilon) < \actVr$ cannot hold simultaneously, we obtain the desired result.

\begin{proof}[\textbf{Proof of \Cref{prop:DRO_News_K}}]
Recall that, $\mathfrak{R}_{\scaleto{\mathcal{I},\mathrm{DRO}}{6pt}}^{\SAA} (\epsilon)  = \sup_{\mu \in \mesSp} \sup_{\nu \in \mathcal{U}_\epsilon(\mesOut)} \g[\SAA(\nu)]{\mu} - \opt(\mu)$. This problem can alternatively be written by swapping the supremum symbols as, $\mathfrak{R}_{\scaleto{\mathcal{I},\mathrm{DRO}}{6pt}}^{\SAA} (\epsilon)  = \sup_{\nu \in \mesSp} \sup_{\mesOut \in \mathcal{U}_\epsilon(\nu)} \g[\SAA(\nu)]{\mu} - \opt(\mu)$.
To prove this result, we first solve the inner optimization problem over the distribution $\mesOut$ and then solve the problem over $\nu$.

Note that the inner optimization problem is solved by \Cref{lem:worst-reg_decision}. For any probability measure $\nu \in \mesSp$ and denoting by $H$ the associated cdf, we have that, $\SAA(\nu) = H^{-1}(q)$ and by \Cref{lem:worst-reg_decision} we obtain that,
\begin{equation*}
\sup_{\mesOut \in \mathcal{U}_{\epsilon}(\nu)}  \g[\SAA(\nu)]{\mu} - \opt(\mu) = (c_u + c_o) \cdot \max \left (\int_{H^{-1}(q- \epsilon)}^{H^{-1}(q)} \left(H_U(\envVr) - q \right) d \envVr, \int_{H^{-1}(q)}^{H^{-1}(q+\epsilon)} \left(q - H_L(\envVr) \right) d \envVr \right)
\end{equation*}

We next solve the optimization problem over $\nu \in \mesSp$, or alternatively over the cdf $H$.
In particular, we next show that,
\begin{equation}
\label{eq:outer_sup}
\sup_{H \text{ cdf over $[0,M]$}} \int_{H^{-1}(q- \epsilon)}^{H^{-1}(q)} \left(H_U(\envVr) - q \right) d \envVr = M \cdot \epsilon.
\end{equation}
We first note that, for every cumulative distribution function $H$ supported on $[0,M]$,
\begin{equation*}
\int_{H^{-1}(q-\epsilon)}^{H^{-1}(q)} \left(H_{U}(\envVr) - q \right) d \envVr \leq \int_{H^{-1}(q-\epsilon)}^{H^{-1}(q)} (H(\envVr) + \epsilon - q)   d \envVr \stackrel{(a)}{\leq}  \left[H^{-1}(q) - H^{-1}(q-\epsilon) \right] \cdot \epsilon \stackrel{(b)}{\leq}  M \cdot \epsilon,
\end{equation*}
where $(a)$ holds because $H(\envVr) \in [q-\epsilon,q]$ for every $\envVr \in [ H^{-1}(q-\epsilon),H^{-1}(q)]$ and $(b)$ follows from the fact that $H$ is supported on $[0,M]$. Furthermore for $ 0 < \eta < \epsilon$, we define for every $z \in [0,M]$,
\begin{equation*}
H_{\eta}(z) = \begin{cases}
 q -\eta \quad \text{if $z < M$}\\
 1 \quad \text{if $z =M$.}
\end{cases}
\end{equation*}
We note that since $\epsilon \leq 1-q$, we have $\int_{H_{\eta}^{-1}(q-\epsilon)}^{H_{\eta}^{-1}(q)} \left(\min(1,H_{\eta}(\envVr) +\epsilon) - q \right) d \envVr  = M \cdot (\epsilon - \eta).$ Hence for every $0 < \eta < \epsilon$, 
\begin{equation*}
M \cdot (\epsilon - \eta) \leq \sup_{H \text{ cdf over $[0,M]$}} \int_{H^{-1}(q-\epsilon)}^{H^{-1}(q)} \left(H_{U}(\envVr) - q \right) d \envVr \leq M \cdot \epsilon,
\end{equation*}
which implies that \eqref{eq:outer_sup} holds. We can similarly show that,
\begin{equation*}
\sup_{H \text{ cdf over $[0,M]$}} \int_{H^{-1}(q)}^{H^{-1}(q+\epsilon)} \left(q- H_{L}(\envVr)\right) d \envVr = M \cdot \epsilon,
\end{equation*}
and conclude that,
\begin{equation*}
\sup_{\mu \in \mesSp} \sup_{\nu \in \mathcal{U}_\epsilon(\mesOut)} \g[\SAA(\nu)]{\mu} - \opt(\mu) = (c_u + c_o) \cdot M \cdot \epsilon.
\end{equation*}

\end{proof}

\begin{proof}[\textbf{Proof of \Cref{prop:newsvendor_K_equality}}]
The upper bound has been derived in the proof of \Cref{cor:SAA_Newsvendor}. We next show that,
\begin{equation*}
2  \cdot \sup_{\mesOut \in \mesSp} \sup_{\nu\in \mathcal{U}_{\epsilon}(\mesOut)} \sup_{\actVr \in \actSp} | \g[\actVr]{\mesOut} - \g[\actVr]{\nu} | \geq 2 \max\left(c_u,c_o \right) \cdot M \cdot \epsilon.
\end{equation*}

For every $p \in [0,1]$, let $\mathcal{B}_M(p)$ denote the two-point-mass distribution which puts mass $p$ at $M$ and mass $1-p$ at $0$. For every $p \in [0,1]$ we note that,
\begin{equation*}
\g[0]{\mathcal{B}_M(p)} = p \cdot c_u \cdot M \quad \text{and} \quad \g[1]{\mathcal{B}_M(p)} = (1-p) \cdot c_o \cdot M
\end{equation*}
Furthermore, for every $p \in [0,1-\epsilon]$, $\mathcal{B}_M(p+\epsilon) \in \mathcal{U}_{\epsilon} \left( \mathcal{B}_M(p) \right)$ and,
\begin{equation*}
|\g[0]{\mathcal{B}_M(p)} - \g[0]{\mathcal{B}_M(p+ \epsilon)}| = \epsilon \cdot c_u \cdot M \quad \text{and} \quad |\g[1]{\mathcal{B}_M(p)} - \g[1]{\mathcal{B}_M(p+ \epsilon)}| = \epsilon \cdot c_o \cdot M,
\end{equation*}
which implies that,
\begin{equation*}
\sup_{\mesOut \in \mesSp} \sup_{\nu\in \mathcal{U}_{\epsilon}(\mesOut)} \sup_{\actVr \in \actSp} | \g[\actVr]{\mesOut} - \g[\actVr]{\nu} | \geq \max(c_o,c_u) \cdot M \cdot \epsilon.
\end{equation*}
\end{proof}

\end{document}